\documentclass{article}

\PassOptionsToPackage{numbers,sort&compress}{natbib}
\usepackage[main, preprint]{neurips_2026}

\usepackage[utf8]{inputenc}
\usepackage[T1]{fontenc}
\usepackage{microtype}
\usepackage{graphicx}
\usepackage{booktabs}
\usepackage{multirow}
\usepackage{array}
\usepackage{hyperref}
\usepackage{url}
\usepackage{amsmath}
\usepackage{mathtools}
\usepackage{amssymb}
\usepackage{amsfonts}
\usepackage{amsthm}
\usepackage{nicefrac}
\usepackage{xcolor}
\usepackage{enumitem}
\usepackage{placeins}
\usepackage[ruled,vlined,linesnumbered,noend]{algorithm2e}

\hypersetup{
  colorlinks=true,
  linkcolor=blue!55!black,
  citecolor=blue!55!black,
  urlcolor=red!60!black
}

\makeatletter
\renewcommand{\@noticestring}{%
  Preprint. Correspondence to Yao Shu \texttt{<yaoshu@hkust-gz.edu.cn>}.%
}
\makeatother

\graphicspath{{workspace/figs/}}

\newcommand{\ntxt}[1]{\text{\normalfont #1}}

\usepackage{amsmath,amsfonts,bm}

\def\1{\bm{1}}

\DeclareMathAlphabet{\mathsfit}{\encodingdefault}{\sfdefault}{m}{sl}
\SetMathAlphabet{\mathsfit}{bold}{\encodingdefault}{\sfdefault}{bx}{n}

\newcommand{\E}{\mathbb{E}}

\newcommand{\KL}{D_{\ntxt{KL}}}

\DeclareMathOperator*{\argmax}{arg\,max}
\DeclareMathOperator*{\argmin}{arg\,min}

\newcommand{\norm}[1]{\left\| #1 \right\|}

\makeatletter
\newcommand{\qty}{%
  \@ifnextchar[{\qty@square}{%
    \@ifnextchar({\qty@paren}{%
      \@ifnextchar|{\qty@bar}{\qty@parenarg}%
    }%
  }%
}
\def\qty@square[#1]{\left[#1\right]}
\def\qty@paren(#1){\left(#1\right)}
\def\qty@bar|#1|{\left|#1\right|}
\def\qty@parenarg#1{\left(#1\right)}
\makeatother

\newcommand{\bolt}{BOLT}
\newcommand{\boltfull}{Boltzmann-Targeted SFT}

\newcommand{\pref}{\pi_{\ntxt{ref}}}
\newcommand{\pistar}{\pi^*}
\newcommand{\pitheta}{\pi_\theta}
\newcommand{\objRL}{\mathcal{J}_{\ntxt{RL}}}

\newcommand{\lossPop}{\mathcal{L}}
\newcommand{\lossHat}{\hat{\mathcal{L}}}
\newcommand{\lossOracle}{\tilde{\mathcal{L}}}
\newcommand{\Zhat}{\hat{Z}}
\newcommand{\dr}{\Delta r}
\newcommand{\Ystar}{\mathcal{Y}^*}
\newcommand{\TV}{\operatorname{TV}}
\newcommand{\supp}{\operatorname{supp}}
\newcommand{\Var}{\operatorname{Var}}
\newcommand{\Btot}{B_{\ntxt{tot}}}
\newcommand{\beff}{\beta_{\ntxt{eff}}}
\newcommand{\pitop}{\pi_{\ntxt{top}}}
\newcommand{\epsopt}{\varepsilon_{\ntxt{opt}}}
\newcommand{\iid}{\overset{\ntxt{i.i.d.}}{\sim}}
\newcolumntype{L}[1]{>{\raggedright\arraybackslash}p{#1}}

\SetKwInput{KwInput}{Input}
\SetKwInput{KwOutput}{Output}
\DontPrintSemicolon

\newtheoremstyle{paperplain}
  {3pt plus 1pt minus 1pt}
  {3pt plus 1pt minus 1pt}
  {\fontfamily{LibertinusSerif-TLF}\selectfont\itshape}
  {}
  {\fontfamily{LibertinusSerif-TLF}\selectfont\bfseries}
  {.}
  {0.5em}
  {}
\theoremstyle{paperplain}
\newtheorem{theorem}{Theorem}
\newtheorem{proposition}[theorem]{Proposition}

\newtheorem{corollary}[theorem]{Corollary}
\theoremstyle{definition}

\title{Reference-Sampled Boltzmann Projection for KL-Regularized RLVR: Target-Matched Weighted SFT, Finite One-Shot Gaps, and Policy Mirror Descent}

\author{%
  Yao Shu \quad
  Chenxing Wei \quad
  Hongbin Lin \quad
  Shuang Qiu \quad
  Hui Xiong
}

\begin{document}

\maketitle

\begin{abstract}
Online reinforcement learning with verifiable rewards (RLVR) turns checkable
outcomes into a scalable training signal, but it keeps rollout generation,
verifier scoring, and reference-policy evaluations on the optimization path.
Static weighted supervised fine-tuning (SFT) on precomputed rollouts seems to
remove this bottleneck, yet a weighted likelihood is not specified by rewards
alone: its sampler and weights induce the policy being fit. This paper identifies
the reference-sampled weighted-SFT objective whose induced policy equals the
fixed-reference KL-regularized RLVR optimizer. The optimizer is the standard
Boltzmann target policy, obtained by exponentially tilting the reference policy
by verifier reward. Matching a weighted-SFT induced policy to this target forces
density-ratio weights; in the reference-sampled subclass, this reduces uniquely,
up to prompt scaling, to the prompt-normalized Boltzmann weight
\(\exp(r(x,y)/\beta)/Z(x)\). \bolt{}, a Boltzmann-Targeted SFT procedure, is the
empirical estimator of this projection. The finite one-shot analysis separates
the exact stored-support price \(\beta\log(1/\pistar(S_N\mid x))\) from
partition estimation, effective-sample-size variance, generalization,
optimization, and approximation errors. This decomposition explains why extra
SFT epochs cannot repair missing reference-policy coverage and exposes the
temperature--coverage--variance frontier. When coverage needs adaptive sampling,
refreshed Boltzmann projections become KL policy mirror descent; finite inner
solves enter as additive drift from the exact mirror step. Single-run Qwen
experiments provide projection evidence for the target-matched weight,
one-shot saturation, refreshed-sampler gains, and optimization-time savings,
within the stated single-run scope.
\end{abstract}

\section{Introduction}
\label{sec:intro}

RLVR is attractive because it turns ambiguous supervision into checkable
outcomes. In math and code, answers can be verified on GSM8K and MATH
\citep{cobbe2021gsm8k,hendrycks2021math}, and programs can be judged by unit
tests such as HumanEval \citep{chen2021evaluating}. Recent reasoning systems
use this signal at scale, with GRPO central to DeepSeekMath
\citep{shao2024deepseekmath} and DeepSeek-R1 \citep{deepseek2025r1}. The same
signal, however, is usually optimized inside an online rollout loop. PPO
alternates rollout generation and policy updates \citep{schulman2017proximal};
GRPO-style RLVR removes a separate critic but keeps on-policy rollout generation
\citep{shao2024deepseekmath}; and RLHF systems coordinate generation, scoring,
reference or old-policy log probabilities, and optimization
\citep{sheng2024hybridflow}. This loop puts the expensive part of reasoning
training on the critical path. PPO-style RLHF can require more than three times
the memory of SFT \citep{santacroce2023efficient}, long-CoT inference can
dominate wall time \citep{hu2025openrlhf}, and RLVR rollout cost grows with
model size and reasoning length \citep{huang2026pros}. Outside closed
benchmarks, rollouts may also involve live tools, mutable environments, or
long-horizon interactions whose rewards are hard to regenerate
\citep{zhang2025earlyexperience}. Verifiers simplify reward acquisition, but
online RLVR still keeps expensive rollout data inside the optimization loop.

A natural response is to decouple generation from optimization: sample verified
rollouts once, attach weights, and train with supervised infrastructure.
Reward-augmented likelihood \citep{norouzi2016raml}, advantage-weighted
regression \citep{peng2019awr}, STaR \citep{zelikman2022star}, ReST
\citep{gulcehre2023rest}, offline RL on fixed logged data
\citep{kumar2020cql,chen2021decision}, and DPO-style supervised reductions of
KL-regularized objectives \citep{rafailov2023dpo} all show that weighted or
static likelihood can be a powerful training form. They also expose the
ambiguity that matters for RLVR replacement. Weighted SFT is not one objective:
the sampler decides which rollouts can appear, the weights decide how mass is
assigned, and the product induces the policy being fit. Decoupling alone
therefore does not identify the policy replacing online RLVR. A static
weighted-SFT objective is faithful to fixed-reference KL-regularized RLVR only
when its induced policy equals the RLVR optimizer.

This paper develops \textbf{Boltzmann Projection Theory} to identify that
policy-level match. The ingredients are standard: KL-regularized reward
maximization selects a Boltzmann policy, weighted likelihood fits an induced
distribution, and relative-entropy updates underlie policy mirror descent. The
missing object is their composition for reference-sampled RLVR replacement. The
fixed-reference RLVR objective selects the Boltzmann target policy, an
exponential tilt of \(\pref\) by verifier reward. A weighted-SFT objective
selects the induced policy \(\tilde{\pi}_w\propto q w\), determined jointly by
its sampler \(q\) and weight \(w\). Equating these policies gives the
target-matching law: any sampler that covers the target needs weights
proportional to \(\pistar/q\), and the reference-sampled subclass \(q=\pref\)
therefore reduces uniquely, up to prompt scaling, to the prompt-normalized
Boltzmann density ratio \(\exp(r(x,y)/\beta)/Z(x)\)
(Sections~\ref{sec:prelim}--\ref{sec:theory-bridge}). Other weights may still
be useful heuristics, but they induce different policies and leave an
irreducible fixed-reference RLVR value gap even with infinite data and exact
optimization.

The target-matching law gives the empirical method. \bolt{}, a
Boltzmann-Targeted SFT procedure, precomputes reference rollouts, verifier
scores, Monte Carlo normalizers, and prompt-normalized Boltzmann weights, then
fits the resulting fixed weighted-likelihood objective with standard multi-epoch
SFT. This two-phase placement makes related weighted-SFT methods comparable by
their induced targets rather than by optimizer syntax: only the
reference-sampled Boltzmann density-ratio product induces the fixed-reference
RLVR target (Section~\ref{sec:method}).

With the population target fixed, the finite replacement question has a sharp
answer. A one-shot dataset \(S_N(x)\) cannot represent target mass it did not
sample, so the best support-restricted replacement pays the exact
fixed-reference gap \(\beta\log(1/\pistar(S_N\mid x))\). Conditional on
favorable support, the remaining terms are standard finite-learning costs:
partition-function estimation, effective-sample-size variance, empirical
generalization, optimization error, and parametric approximation. In binary
verifier problems, the support and variance terms reduce to a
temperature--coverage--variance frontier: making the Boltzmann target put high
mass on rare correct rollouts costs order \(1/p\) reference samples when the
reference pass probability is \(p\) (Section~\ref{sec:guarantees}). This
decomposition explains why extra SFT epochs cannot repair missing
reference-policy coverage. When coverage is weak but nonzero, the missing
operation is sampler refresh: repeating the same Boltzmann projection with the
current policy as the next reference recovers KL policy mirror descent, and
finite inner weighted-SFT solves enter as comparator drift from the exact mirror
step (Section~\ref{sec:iterative}).

These results turn practical RLVR observations into mechanism statements.
Pass@$k$ debates ask whether RLVR creates new reasoning support or concentrates
probability on already reachable solutions
\citep{yue2025rlvrboundary,wen2025implicit}; the support certificate separates
coverage from concentration. Large-scale RLVR systems invest heavily in rollout
generation, sampling rules, and sequence-level filtering
\citep{sheng2024hybridflow,hu2025openrlhf,yu2025dapo,huang2026pros}; the
projection view separates operations that change the target, operations that
change finite coverage, and operations that only change optimization cost
(Section~\ref{sec:related}). The experiments provide projection evidence rather
than leaderboard claims: they test Boltzmann density-ratio weighting against
raw-reward weighting on the same reference rollouts, the one-shot saturation
predicted by fixed support, the improvement from refreshed sampling, and the
optimization-time savings from precomputing target-defining quantities
(Section~\ref{sec:experiments}).

\section{Online RLVR and Weighted-Likelihood Induced Targets}
\label{sec:prelim}

Replacing online RLVR with static reference-rollout data first requires
specifying the policy to recover. Verifier scores alone do not determine it:
weighted SFT is also defined by the sampler that produced the scored rollouts
and by the attached weights. The replacement problem is therefore
policy-to-policy. KL-regularized RLVR maps rewards to a target policy through a
reward--KL objective. Weighted SFT maps a sampler--weight pair to an induced
target policy. Static weighted-SFT replacement is valid only when these two
policies coincide.

\subsection{RLVR Couples Rollouts and Optimization}
\label{subsec:online-rlvr}

The RLVR side of the comparison is the policy that maximizes a reward--KL
objective. Online algorithms estimate this objective with continually refreshed
samples:
PPO-style trust-region methods \citep{schulman2015trust,schulman2017proximal}
and GRPO-style RLVR \citep{shao2024deepseekmath,deepseek2025r1} sample
rollouts from the current or old policy, update the policy, and then sample
again. These rollouts estimate the objective; they are not the target policy. To
state the target separately from the training trace, fix a reference policy
$\pref(\cdot\mid x)$, let $r(x,y)\in[r_{\min},r_{\max}]$ be the verifiable
reward, and write the KL-regularized objective as
\begin{equation}
  \max_\theta\;
  \objRL(\theta)
  \triangleq
  \E_{x\sim\mathcal X,\,y\sim\pitheta(\cdot\mid x)}
  \left[r(x,y)\right]
  -\beta\,
  \E_{x\sim\mathcal X}
  \left[
  \KL\!\left(\pitheta(\cdot\mid x)\,\middle\|\,\pref(\cdot\mid x)\right)
  \right].
  \label{eq:rlvr}
\end{equation}
In \eqref{eq:rlvr}, the candidate policy $\pitheta$ supplies the completions
over which reward is averaged, while the reference policy $\pref$ only anchors
the KL penalty. Changing $\pitheta$ thus changes the reward-averaging
distribution itself. The optimizer of \eqref{eq:rlvr} is the policy-level target
for any static weighted-SFT replacement. A static objective is faithful only if
its induced policy matches this optimizer; keeping high-scoring completions is
not enough because it does not specify the conditional policy to fit.

\subsection{Weighted SFT Induces a Target Policy}
\label{subsec:weighted-sft}

Weighted SFT supplies the other map in the comparison. It starts with two
design choices: a sampler $q(\cdot\mid x)$ that provides stored rollouts and a
nonnegative weight $w(x,y)$ assigned to each rollout. Given these choices, it
optimizes the weighted negative log-likelihood
\begin{equation}
  \mathcal{L}_{q,w}(\theta)
  \triangleq
  \E_{x\sim\mathcal X,\,y\sim q(\cdot\mid x)}
  \left[-w(x,y)\log \pitheta(y\mid x)\right].
  \label{eq:weighted-sft-template}
\end{equation}
Equation~\eqref{eq:weighted-sft-template} is only a loss template. The same
form can fit different policies because $q$ decides which rollouts can appear
and $w$ decides how much mass those rollouts receive. The object comparable to
\eqref{eq:rlvr} is therefore the policy induced by the pair $(q,w)$.
Proposition~\ref{prop:weighted-sft-target} turns this ambiguity into a policy
identity: the sampler-weight product determines the conditional policy that
weighted SFT fits, which makes different weighted-SFT objectives comparable at
the policy level (proof in Appendix~\ref{app:unified}).

\begin{proposition}[Weighted-SFT Induced-Target Identity]
\label{prop:weighted-sft-target}
Let $\bar w(x)\triangleq\E_{y\sim q(\cdot\mid x)}\left[w(x,y)\right]$ and assume
$0<\bar w(x)<\infty$ almost surely. Let
$\tilde{\pi}_w(y\mid x)\triangleq q(y\mid x)w(x,y)/\bar w(x)$. Then weighted SFT
is forward-KL fitting to this induced policy:
\begin{equation}
  \mathcal{L}_{q,w}(\theta)
  =
  \underbrace{\E_x\left[
  \bar w(x)
  \KL\!\left(\tilde{\pi}_w(\cdot\mid x)\,\middle\|\,\pitheta(\cdot\mid x)\right)
  \right]
  }_{\ntxt{only term depending on }\theta}
  +
  \underbrace{\E_x\left[
  \bar w(x)
  H\!\left(\tilde{\pi}_w(\cdot\mid x)\right)
  \right]
  }_{\ntxt{constant in }\theta}.
  \label{eq:weighted-sft-induced-target}
\end{equation}
\end{proposition}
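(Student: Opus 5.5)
The identity follows from a change of measure inside the expectation over $y$, done one prompt at a time, followed by the standard cross-entropy decomposition. Fix $x$ with $0<\bar w(x)<\infty$. The case $\bar w(x)\in\{0,\infty\}$ is excluded almost surely by assumption, so it contributes nothing to the outer expectation. Since $w\ge 0$, the definition $\tilde\pi_w(y\mid x)=q(y\mid x)w(x,y)/\bar w(x)$ gives a bona fide probability distribution, absolutely continuous with respect to $q(\cdot\mid x)$, with density $w/\bar w$.

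First rewrite the inner expectation. For any nonnegative measurable $g$,
\[
\E_{y\sim q(\cdot\mid x)}\left[w(x,y)\,g(y)\right]=\bar w(x)\,\E_{y\sim\tilde\pi_w(\cdot\mid x)}\left[g(y)\right].
\]
Take $g(y)=-\log\pitheta(y\mid x)$, which is nonnegative for discrete completion spaces. The conditional loss then becomes $\bar w(x)$ times the cross-entropy $\E_{\tilde\pi_w}\left[-\log\pitheta\right]$.

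Next split the cross-entropy as $\KL(\tilde\pi_w\,\|\,\pitheta)+H(\tilde\pi_w)$, using
\[
-\log\pitheta=\log\frac{\tilde\pi_w}{\pitheta}-\log\tilde\pi_w
\]
pointwise on $\supp\tilde\pi_w$. Multiplying by $\bar w(x)$ and taking $\E_x$ gives \eqref{eq:weighted-sft-induced-target}. The entropy term does not involve $\theta$, because $\tilde\pi_w$ depends only on $(q,w)$.

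The only step needing care is splitting the expectation, i.e.\ avoiding $\infty-\infty$.
\begin{itemize}[leftmargin=*]
\item Both summands are nonnegative in the discrete setting: $\KL\ge 0$, and $H\ge 0$ for a probability mass function. Linearity of expectation for nonnegative extended-real functions therefore applies, and the identity holds in $[0,\infty]$.
\item If the cross-entropy is infinite for some $\theta$, for instance because $\pitheta$ puts zero mass on part of $\supp\tilde\pi_w$, then the right-hand side is also infinite, either through the KL term or through the entropy.
\item For the statement that the entropy term is ``constant in $\theta$'' to carry meaning, I would note the implicit assumption $\E_x\left[\bar w H(\tilde\pi_w)\right]<\infty$. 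Under this assumption the KL term is the only $\theta$-dependent part, and minimizing $\mathcal L_{q,w}$ is equivalent to the $\bar w$-weighted forward-KL projection onto $\tilde\pi_w$.
\end{itemize}

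For general (non-discrete) $y$-spaces, the same argument goes through with densities with respect to a common base measure and $H$ read as differential entropy. In that case I would add integrability assumptions explicitly, since differential entropy can be negative and the nonnegativity argument above no longer applies.
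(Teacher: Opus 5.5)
Your proof is correct and follows the same argument as the paper's: substitute $w=\bar w(x)\,\tilde{\pi}_w/q$ so that the conditional loss becomes $\bar w(x)\,\E_{\tilde{\pi}_w}[-\log\pitheta]$, then split this cross-entropy into $\KL(\tilde{\pi}_w\|\pitheta)+H(\tilde{\pi}_w)$. Your extended-real bookkeeping goes beyond the paper, which splits without comment; your nonnegativity argument covers the outer $\E_x$, and for the inner split note that $\log(\tilde{\pi}_w/\pitheta)$ is not pointwise nonnegative, but its negative part has $\tilde{\pi}_w$-expectation at most $1$ (by $\log t\le t-1$), so no $\infty-\infty$ arises.
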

The identity is the policy-level meaning of weighted SFT. The sampler fixes
support, the weight tilts mass within that support, and the prompt multiplier
$\bar w(x)$ changes prompt importance without changing the within-prompt target.
Thus raw reward weights, advantage weights, data-ratio weights, and
density-ratio weights can share the same likelihood form while fitting different
policies. This explains why weighted-SFT ablations in practice can disagree
even when they use the same optimizer: changing the filtering rule, behavior
sampler, or scalar weight changes the induced policy before optimization
begins. The empirical distinction is therefore not training loss alone, but the
sampler-weight product \(q(y\mid x)w(x,y)\) and the support it assigns to
verified rollouts. Static reuse is faithful only when $\tilde{\pi}_w$ equals the
optimizer of \eqref{eq:rlvr}. The projection problem is now concrete: identify
that optimizer and the sampler-weight products that induce it.

\section{Boltzmann Projection for Fixed-Reference RLVR}
\label{sec:theory-bridge}

The induced-target identity makes static RLVR replacement a distribution-space
condition. A weighted-SFT objective is faithful to fixed-reference RLVR only
when its sampler-weight product induces the optimizer of the reward--KL
objective in \eqref{eq:rlvr}. That optimizer is the fixed-reference Boltzmann
target policy. Matching it requires density-ratio weights from the rollout
sampler to this target; under reference sampling, the ratio reduces to
prompt-normalized Boltzmann weights. With this equality, static reuse becomes a
mathematical replacement rather than a heuristic: the offline likelihood
objective targets the same conditional policy as the online reward--KL problem.

\subsection{Fixed-Reference Boltzmann Target}

The reward--KL objective in \eqref{eq:rlvr} selects a conditional policy, not
merely a rule for keeping high-reward completions. At each prompt, it tilts the
reference policy by the verifier reward and normalizes the tilted measure. This
Boltzmann target is standard in maximum-entropy RL,
control-as-inference, relative-entropy policy search, Gibbs variational
formulas, and DPO-style alignment
\citep{ziebart2010modeling,levine2018probabilistic,peters2010relative,donsker1975asymptotic,rafailov2023dpo}.
Here it supplies the target that any static weighted-SFT replacement has to
induce. The temperature \(\beta\) controls how far this target moves from the
reference policy: small \(\beta\) concentrates more strongly on high-verifier
rollouts, while large \(\beta\) keeps the target closer to \(\pref\).
Proposition~\ref{prop:boltzmann} states this policy explicitly (proof in
Appendix~\ref{app:distribution-proofs}).

\begin{proposition}[Fixed-Reference Boltzmann Target]
\label{prop:boltzmann}
For each prompt $x$, let the completion space be standard Borel, let
$r(x,\cdot)$ be bounded and measurable, and optimize over conditional policies
absolutely continuous with respect to $\pref(\cdot\mid x)$. The unique
prompt-wise optimizer of the inner problem in \eqref{eq:rlvr} is the Boltzmann
target policy
\begin{equation}
  \pistar(y\mid x)
  \triangleq
  \frac{\pref(y\mid x)\exp(r(x,y)/\beta)}{Z(x)},
  \qquad
  Z(x)\triangleq
  \E_{y\sim\pref(\cdot\mid x)}
  \left[\exp\!\left(r(x,y)/\beta\right)\right].
  \label{eq:boltzmann-policy}
\end{equation}
On general completion spaces, \eqref{eq:boltzmann-policy} means that the
Radon--Nikodym density of $\pistar$ with respect to $\pref$ is
$\exp(r(x,y)/\beta)/Z(x)$. Policies that charge outside the support of
$\pref(\cdot\mid x)$ have infinite reverse KL to the reference policy.
\end{proposition}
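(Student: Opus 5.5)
The plan is the standard Gibbs variational (Donsker--Varadhan) argument, carried out prompt-wise. Fix a prompt $x$ and write $J_x(\pi)\triangleq\E_{y\sim\pi}[r(x,y)]-\beta\,\KL(\pi\,\|\,\pref(\cdot\mid x))$ for the inner objective of \eqref{eq:rlvr}.

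\textbf{Step 1 (the target is well defined).} Since $r(x,\cdot)$ is bounded and measurable, $\exp(r/\beta)$ lies in $[e^{r_{\min}/\beta},e^{r_{\max}/\beta}]$. Hence $0<Z(x)<\infty$, and $\pistar(\cdot\mid x)$ in \eqref{eq:boltzmann-policy} is a genuine probability measure. It is mutually absolutely continuous with $\pref(\cdot\mid x)$, with Radon--Nikodym density $d\pistar/d\pref=\exp(r/\beta)/Z(x)$. This density is bounded above and below, so $\KL(\pistar\|\pref)$ is finite.

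\textbf{Step 2 (core identity).} Take any $\pi\ll\pref(\cdot\mid x)$ with $\KL(\pi\|\pref)<\infty$. Because $\pref\ll\pistar$, we also have $\pi\ll\pistar$, and the chain rule gives
\[
\log\frac{d\pi}{d\pref}=\log\frac{d\pi}{d\pistar}+\frac{r(x,y)}{\beta}-\log Z(x)
\qquad \pi\text{-a.s.}
\]
Integrate against $\pi$. Every term is integrable: $r$ is bounded, and $\KL(\pi\|\pref)$ is finite. This yields
\[
\KL(\pi\|\pref)=\KL(\pi\|\pistar)+\tfrac1\beta\E_\pi[r]-\log Z(x),
\]
and therefore $J_x(\pi)=\beta\log Z(x)-\beta\,\KL(\pi\|\pistar)$. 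If instead $\KL(\pi\|\pref)=+\infty$, then $J_x(\pi)=-\infty$ because the reward term is bounded, so such $\pi$ are never optimal.

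\textbf{Step 3 (conclusion).} By Gibbs' inequality, $\KL(\pi\|\pistar)\ge0$, with equality iff $\pi=\pistar$ as measures. So $\pistar$ attains the maximal value $\beta\log Z(x)$, and it is the unique maximizer. The standard Borel assumption lets us choose these prompt-wise maximizers as a measurable conditional kernel; here the kernel is given explicitly by the density formula, which is jointly measurable in $(x,y)$. Since the outer objective is an expectation of prompt-wise objectives, maximizing prompt-wise maximizes the whole. Uniqueness holds $\mathcal X$-a.s.

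\textbf{Step 4 (last clause).} Suppose $\pi$ charges a set $A$ with $\pref(A\mid x)=0$. Then $\pi\not\ll\pref$, and reverse KL is $+\infty$ by definition. Such policies therefore get value $-\infty$ and are excluded.

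\textbf{Main obstacle.} The delicate step is the integrability bookkeeping in Step 2 on general spaces. The identity must not become $\infty-\infty$, and the decomposition must be valid when $\KL(\pi\|\pistar)$ might a priori be infinite. I would handle this with the bounded density ratio from Step 1. Because $d\pistar/d\pref$ is bounded above and below, $\KL(\pi\|\pistar)$ and $\KL(\pi\|\pref)$ differ by at most a bounded amount. Consequently they are finite or infinite together.
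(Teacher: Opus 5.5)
Your proof is correct and follows essentially the same route as the paper. The paper writes $f=d\pi/d\pref$ and rearranges the prompt-wise objective into $\beta\log Z(x)-\beta\,\KL(\pi\|\pistar)$, then gets uniqueness from the nonnegativity of KL with equality only at $\pistar$, exactly as in your Steps 2--3; your integrability bookkeeping (the bounded log-density ratio making $\KL(\pi\|\pistar)$ and $\KL(\pi\|\pref)$ finite or infinite together, and the value $-\infty$ for infinite-KL policies) supplies details the paper leaves implicit.
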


Equation~\eqref{eq:boltzmann-policy} separates the target into three components.
The reward sets the within-prompt tilt, \(Z(x)\) turns that tilt into a policy,
and absolute continuity gives the first coverage limit: fixed-reference
replacement cannot recover completions outside the support of \(\pref\). Thus
the target is value-defining, not cosmetic: Appendix~\ref{app:distribution-proofs}
proves the reverse-KL value identity that charges any population target mismatch
as fixed-reference RLVR loss. This makes \(\pistar\) the correct object
to match before discussing data reuse, normalizer estimation, or parametric
optimization; those finite errors approximate this population target rather
than replacing it. Operationally, the Boltzmann form also explains why RLVR is
not equivalent to keeping only verified successes at a fixed temperature. The KL
term deliberately leaves probability on lower-reward but reference-plausible
completions, so positive-only filtering, rejection training, or raw-reward
weighting can overconcentrate relative to the regularized RLVR objective.

\subsection{Reference-Sampled Weighted-SFT Projection}

Once \(\pistar\) is fixed, Proposition~\ref{prop:weighted-sft-target} turns
target matching into an algebraic condition on the product \(q(y\mid x)w(x,y)\).
Weighted SFT induces \(\tilde{\pi}_w\propto q w\), so matching fixed-reference
RLVR requires this product to have the same within-prompt density as
\(\pistar\). In the reference-sampled subclass \(q=\pref\), the reference policy
cancels and leaves the prompt-normalized Boltzmann density ratio. This is the
population projection in Theorem~\ref{thm:boltzmann-projection} (proof in
Appendix~\ref{app:distribution-proofs}). The cancellation is the central design
signal: the weight is not a shaped reward or an advantage surrogate, but exactly
the Radon--Nikodym derivative that converts reference rollouts into target
rollouts.
Write this reference-sampled density ratio as
\begin{equation}
  w(x,y)
  \triangleq
  \frac{\exp(r(x,y)/\beta)}{Z(x)} .
  \label{eq:boltzmann-density-ratio}
\end{equation}

\begin{theorem}[Boltzmann Projection to Reference-Sampled Weighted SFT]
\label{thm:boltzmann-projection}
Set the sampler to $q(\cdot\mid x)=\pref(\cdot\mid x)$ and use the weight in
\eqref{eq:boltzmann-density-ratio}. Then $\bar w(x)=1$, and the induced policy in
Proposition~\ref{prop:weighted-sft-target} is exactly $\pistar$. The same target
can therefore be fitted in forward KL by weighted maximum likelihood under
reference rollouts:
\begin{equation}
  \argmin_\theta\;
  \E_x\left[
  \KL\!\left(\pistar(\cdot\mid x)\,\middle\|\,\pitheta(\cdot\mid x)\right)
  \right]
  =
  \argmin_\theta\;
  \E_{x,y\sim\pref}
  \left[-w(x,y)\log \pitheta(y\mid x)\right].
\label{eq:forward-kl-bolt}
\end{equation}
\end{theorem}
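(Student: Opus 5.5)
The plan is to apply Proposition~\ref{prop:weighted-sft-target} with the specific pair \(q=\pref\) and \(w\) from \eqref{eq:boltzmann-density-ratio}. The argument has three steps.

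\textbf{Step 1: compute \(\bar w\).} By definition of \(Z(x)\) in \eqref{eq:boltzmann-policy},
\[
\bar w(x)=\E_{y\sim\pref(\cdot\mid x)}\qty[\exp(r(x,y)/\beta)]/Z(x)=1 .
\]
Because \(r\in[r_{\min},r_{\max}]\), the normalizer satisfies \(e^{r_{\min}/\beta}\le Z(x)\le e^{r_{\max}/\beta}\). So \(Z(x)\) is finite and positive, and the hypothesis \(0<\bar w(x)<\infty\) of the proposition holds trivially.

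\textbf{Step 2: identify the induced policy.} The induced policy is
\[
\tilde\pi_w(y\mid x)=\pref(y\mid x)\exp(r(x,y)/\beta)/Z(x),
\]
which is exactly \(\pistar\) from Proposition~\ref{prop:boltzmann}. On general completion spaces I would phrase this as an equality of Radon--Nikodym derivatives with respect to \(\pref(\cdot\mid x)\), both equal to \(w(x,\cdot)\). This matches the meaning given to \eqref{eq:boltzmann-policy}.

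\textbf{Step 3: the argmin identity.} Substituting into \eqref{eq:weighted-sft-induced-target} with \(\bar w\equiv1\) gives
\[
\mathcal L_{\pref,w}(\theta)=\E_x\qty[\KL\qty(\pistar(\cdot\mid x)\,\|\,\pitheta(\cdot\mid x))]+\E_x\qty[H(\pistar(\cdot\mid x))].
\]
The second term does not depend on \(\theta\). The two objectives therefore differ by a constant, so their argmin sets coincide, which is \eqref{eq:forward-kl-bolt}.

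\textbf{Main obstacle.} The only delicate point is ensuring that the additive constant is finite, so that "differ by a constant" is meaningful. This concerns the entropy term \(\E_x[H(\pistar)]\), which could be infinite on continuous or countably infinite completion spaces. I would first try to observe that Proposition~\ref{prop:weighted-sft-target} is already stated as an identity and inherit its conventions. Sequence spaces with finite vocabulary and bounded length have finite entropy, which covers the practical setting.

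If the constant may be infinite, the fallback is to avoid entropy altogether. Write
\[
-\log\pitheta=\log(\pistar/\pitheta)-\log\pistar,
\]
and compare the two objectives directly. The difference of the losses at any two parameters \(\theta,\theta'\) equals the difference of the forward KLs, whenever either side is finite. Argmins are determined by these pairwise differences, so the conclusion follows without the constant ever needing to be finite.
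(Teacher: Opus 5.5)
Your proposal is correct and follows the paper's proof: compute $\bar w\equiv1$, read off $\tilde{\pi}_w=\pistar$ from Proposition~\ref{prop:weighted-sft-target}, and observe that the weighted loss is the forward KL plus the $\theta$-independent term $\E_x[H(\pistar)]$. The paper does this last step by writing $\KL(\pistar\|\pitheta)=-H(\pistar)-\E_{\pistar}[\log\pitheta]$ and changing measure from $\pistar$ to $\pref$, which is exactly the identity you invoke.

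One caution: your infinite-entropy fallback does not work. On a discrete completion space the cross-entropy is at least the entropy. So if $\E_x[H(\pistar)]=\infty$, the weighted loss is identically $+\infty$, its argmin is all of $\Theta$, and loss differences are undefined ($\infty-\infty$). In that case the stated identity fails rather than being rescued. The paper, like your primary route, implicitly assumes this entropy is finite.
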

This algebraic bridge is the population objective estimated by the empirical
method in Section~\ref{sec:method}. With reference rollouts, the
prompt-normalized Boltzmann weight is the density ratio \(\pistar/\pref\);
weighted maximum likelihood is therefore forward-KL fitting to the
fixed-reference RLVR target. Finite rollouts and Monte Carlo normalizers affect
estimation, but they do not change this population target. The theorem also
explains why a supervised likelihood fit can still target a reward--KL optimum:
the RLVR geometry is carried by the weights, and the likelihood fit projects
that target into the model class.
This yields a clean empirical implication: when the reference rollout set is
held fixed, changing only the weights changes the induced target. A gain from
Boltzmann density-ratio weights over raw-reward or filtered-positive weights is
then evidence for target matching rather than for a different optimizer,
dataset, or compute budget.

Reference sampling with the Boltzmann density ratio is sufficient. The
population replacement law needs the converse: for a fixed sampler \(q\), exact
replacement is possible only when the weights induce \(\pistar\). This forces
density-ratio weights up to a prompt-only scale.
Theorem~\ref{thm:boltzmann-target-characterization} states this target-matching
law; Corollary~\ref{cor:irreducible-target-gap} then states the RLVR value paid
by any mismatch (proofs in Appendix~\ref{app:unified}). This converse prevents
a common ambiguity: many weighted likelihoods have the same surface form, but
only those whose sampler-weight product equals the Boltzmann target density
recover the fixed-reference RLVR solution.

\begin{theorem}[Static Weighted-SFT Target Matching Law]
\label{thm:boltzmann-target-characterization}
Fix the sampler $q(\cdot\mid x)$ in
Proposition~\ref{prop:weighted-sft-target}, and let
$0<\bar w(x)\triangleq\E_{y\sim q(\cdot\mid x)}[w(x,y)]<\infty$ almost surely.
The induced policy matches the fixed-reference RLVR target exactly when
$\pistar(\cdot\mid x)\ll q(\cdot\mid x)$ and
\begin{equation}
  \tilde{\pi}_w=\pistar
  \quad\text{iff}\quad
  w(x,y)
  =
  \bar w(x)\frac{\pistar(y\mid x)}{q(y\mid x)}
  =
  \bar w(x)
  \frac{\pref(y\mid x)}{q(y\mid x)}
  \frac{\exp(r(x,y)/\beta)}{Z(x)},
  \label{eq:boltzmann-target-characterization}
\end{equation}
where the equivalence holds $q(\cdot\mid x)$-almost surely. The specialization
$q=\pref$ reduces this characterization to
$w(x,y)=\bar w(x)\exp(r(x,y)/\beta)/Z(x)$; the normalized convention
$\bar w(x)=1$ gives the prompt-normalized reference-sampled Boltzmann weight.
\end{theorem}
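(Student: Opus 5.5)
The argument works prompt by prompt with Radon--Nikodym densities relative to the sampler $q(\cdot\mid x)$. By Proposition~\ref{prop:weighted-sft-target}, the induced policy $\tilde{\pi}_w(\cdot\mid x)$ is the probability measure with density $w(x,\cdot)/\bar w(x)$ with respect to $q(\cdot\mid x)$. The finiteness and positivity of $\bar w(x)$ ensure this is a genuine probability measure, and by construction $\tilde{\pi}_w\ll q$. Matching the target $\pistar$ is therefore a statement about two measures on the same standard Borel space, and I will settle it using uniqueness of Radon--Nikodym derivatives.

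\emph{Necessity of coverage.} If $\tilde{\pi}_w=\pistar$, then $\pistar=\tilde{\pi}_w\ll q$. So $\pistar(\cdot\mid x)\ll q(\cdot\mid x)$ is necessary. Without it, no weight can induce $\pistar$.

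\emph{The equivalence under coverage.} Assume $\pistar\ll q$ and let $d\pistar/dq$ denote the density; this is the meaning of $\pistar(y\mid x)/q(y\mid x)$ in \eqref{eq:boltzmann-target-characterization}.
\begin{itemize}
\item ($\Leftarrow$) Suppose $w=\bar w\,d\pistar/dq$ holds $q$-a.s. Then $\tilde{\pi}_w$ has $q$-density $d\pistar/dq$, so $\tilde{\pi}_w=\pistar$. For consistency, note that this $w$ integrates to $\bar w$ under $q$ because $\pistar$ is a probability measure.
\item ($\Rightarrow$) Suppose $\tilde{\pi}_w=\pistar$. Both $w/\bar w$ and $d\pistar/dq$ are densities of the same measure with respect to $q$, so they agree $q$-a.s. by uniqueness of the Radon--Nikodym derivative.
\end{itemize}

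\emph{The second equality.} To rewrite $d\pistar/dq$ in Boltzmann form, I use Proposition~\ref{prop:boltzmann}: $d\pistar/d\pref=\exp(r/\beta)/Z$. This density is strictly positive because $r$ is bounded, so $\pistar$ and $\pref$ are mutually absolutely continuous. Hence $\pistar\ll q$ implies $\pref\ll q$. The chain rule then gives
\[
\frac{d\pistar}{dq}=\frac{d\pref}{dq}\cdot\frac{\exp(r/\beta)}{Z}\quad q\text{-a.s.},
\]
which is the displayed expression.

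\emph{The specialization.} Setting $q=\pref$ makes $d\pref/dq=1$, so the condition becomes $w=\bar w\exp(r/\beta)/Z$. Choosing $\bar w=1$ recovers the weight \eqref{eq:boltzmann-density-ratio}, consistent with Theorem~\ref{thm:boltzmann-projection}.

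\emph{Main obstacle.} The algebra is routine; the delicate part is the measure-theoretic bookkeeping.
\begin{itemize}
\item The ratio $\pref/q$ must be read as a Radon--Nikodym derivative. That requires showing $\pref\ll q$, which is exactly where positivity of the Boltzmann density is used.
\item All identities hold only $q$-a.s. On $q$-null sets where $\pref$ might charge, the statement is vacuous. This cannot conflict with the coverage condition, because coverage together with mutual absolute continuity rules out $\pref$ charging $q$-null sets.
\item Measurability in $x$ needs a disintegration or joint-measurability assumption on $q$ and $\pref$. I would state it briefly and treat the argument as pointwise in $x$ off a null set of prompts.
\end{itemize}
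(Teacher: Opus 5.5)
Your proposal is correct and follows the paper's route. Coverage is forced because $\tilde{\pi}_w\ll q$, the density identity $q w/\bar w=\pistar$ on the support of $q$ gives both directions, and $q=\pref$ cancels the reference ratio; your Radon--Nikodym bookkeeping, especially deriving $\pref\ll q$ from the strictly positive Boltzmann density so that $\pref/q$ is well defined, makes explicit a step the paper leaves implicit.
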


This characterization is the target-identification step. It says that weighted
SFT is not identified by a scalar reward weight alone; it is identified by the
sampler-weight product \(q(y\mid x)w(x,y)\). Inside the reference-sampled class,
this product can equal the fixed-reference Boltzmann target only through the
Boltzmann density ratio, up to a prompt-only scale. Outside that class, the same
target requires the sampler correction \(\pref/q\). The empirical method in
Section~\ref{sec:method} instantiates the reference-sampled case, where this
weight is unique inside the \(q=\pref\) subclass. Other static weighted-SFT
choices induce different policies unless they satisfy the same density-ratio
law. The practical consequence is sharper than a method taxonomy: when a
reported method changes both the sampler and the weight, its population target
has changed unless the product still implements \(\pistar\). The right
comparison is therefore class-conditional--same sampler or corrected
sampler--before claiming that one weighting rule is a faithful replacement for
fixed-reference RLVR. Corollary~\ref{cor:irreducible-target-gap} converts any
induced-target mismatch into the RLVR value loss that remains even with infinite data and exact
optimization.

\begin{corollary}[Irreducible Target-Mismatch Gap]
\label{cor:irreducible-target-gap}
For any induced policy satisfying
$\tilde{\pi}_w(\cdot\mid x)\ll\pref(\cdot\mid x)$ almost surely, the
fixed-reference RLVR gap is
\begin{equation}
  \objRL(\pistar)-\objRL(\tilde{\pi}_w)
  =
  \beta\,\E_x\left[
  \KL\!\left(\tilde{\pi}_w(\cdot\mid x)\,\middle\|\,\pistar(\cdot\mid x)\right)
  \right].
  \label{eq:irreducible-target-gap}
\end{equation}
Thus the gap is zero exactly for sampler-weight pairs that induce $\pistar$.
\end{corollary}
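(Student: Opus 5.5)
The plan is to prove a prompt-wise value identity first and then average over $x$. Fix a prompt $x$ and write $\mathcal{J}_x(\pi)\triangleq\E_{y\sim\pi}[r(x,y)]-\beta\,\KL(\pi\,\|\,\pref)$ for any $\pi\ll\pref(\cdot\mid x)$. By Proposition~\ref{prop:boltzmann}, $\pistar$ has Radon--Nikodym density $d\pistar/d\pref=\exp(r/\beta)/Z(x)$. Since $r$ is bounded, $0<\exp(r_{\min}/\beta)\le Z(x)\le\exp(r_{\max}/\beta)<\infty$, so this density is bounded above and away from zero. Hence $\pistar$ and $\pref$ are mutually absolutely continuous. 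In particular, $\pi\ll\pref$ implies $\pi\ll\pistar$, and the chain rule for densities gives
\[
\log\frac{d\pi}{d\pref}=\log\frac{d\pi}{d\pistar}+\frac{r(x,y)}{\beta}-\log Z(x)\qquad\pi\text{-a.s.}
\]

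The key step is to integrate this identity against $\pi$ and multiply by $\beta$. This should give the Gibbs/Donsker--Varadhan identity
\[
\mathcal{J}_x(\pi)=\beta\log Z(x)-\beta\,\KL\!\left(\pi\,\middle\|\,\pistar\right).
\]
The only delicate point, and the step I expect to need care, is justifying the splitting of the integral when the KL terms may be infinite. Because $r/\beta-\log Z$ is bounded, its $\pi$-integral is finite. So $\KL(\pi\|\pref)$ and $\KL(\pi\|\pistar)$ are finite or infinite together, and the identity holds in the extended reals, with both sides equal to $-\infty$ in the infinite case.

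Taking $\pi=\pistar$ gives $\mathcal{J}_x(\pistar)=\beta\log Z(x)$. Subtracting then yields the prompt-wise gap $\mathcal{J}_x(\pistar)-\mathcal{J}_x(\tilde\pi_w)=\beta\,\KL(\tilde\pi_w\|\pistar)$, which uses the hypothesis $\tilde\pi_w(\cdot\mid x)\ll\pref(\cdot\mid x)$. Taking $\E_x$ gives \eqref{eq:irreducible-target-gap}, since $\objRL(\pi)=\E_x[\mathcal{J}_x(\pi(\cdot\mid x))]$. Measurability of the prompt-wise terms in $x$ follows from the conditional-kernel setting. The term $\beta\log Z(x)$ is bounded, so the subtraction is legitimate even if the expected KL is $+\infty$; in that case both sides are $+\infty$.

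For the final sentence, the gap is zero if and only if $\KL(\tilde\pi_w(\cdot\mid x)\|\pistar(\cdot\mid x))=0$ for almost every $x$. This holds if and only if $\tilde\pi_w(\cdot\mid x)=\pistar(\cdot\mid x)$ almost surely. By Theorem~\ref{thm:boltzmann-target-characterization}, that is exactly the class of sampler--weight pairs $(q,w)$ satisfying \eqref{eq:boltzmann-target-characterization}, i.e.\ those that induce $\pistar$.
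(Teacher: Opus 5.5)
Your proposal is correct and follows essentially the same route as the paper. The paper proves the corollary by applying the reverse-KL value identity \eqref{eq:reverse-kl-identity} with $\pi=\tilde{\pi}_w$; that identity comes from the proof of Proposition~\ref{prop:boltzmann}, which uses exactly your Gibbs rewrite $\mathcal{J}_x(\pi)=\beta\log Z(x)-\beta\,\KL(\pi\|\pistar)$. Your extra care about mutual absolute continuity and infinite KL values, and your explicit derivation of the ``zero exactly'' clause from KL positivity plus Theorem~\ref{thm:boltzmann-target-characterization}, fill in details the paper leaves implicit.
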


The gap identity makes target mismatch irreducible. Infinite data and exact
optimization fit the induced policy \(\tilde{\pi}_w\); if
\(\tilde{\pi}_w\neq\pistar\), the reverse-KL value loss in
\eqref{eq:irreducible-target-gap} remains. Static weighted SFT therefore
replaces fixed-reference RLVR exactly only when the sampler-weight product
induces \(\pistar\). The gap is a value-level attribution, not a proof artifact:
it separates errors caused by choosing the wrong population target from errors
caused by finite rollouts, Monte Carlo normalizers, model
misspecification, or incomplete optimization. Thus, when two weighted-SFT
variants use the same reference rollouts, underperformance of the mismatched
variant is naturally explained by the target-mismatch term; once the
sampler-weight product is correct, the remaining gap belongs to coverage,
normalizer, generalization, and optimization effects.

\textbf{Remark}.
The target-matching law determines only the within-prompt density. Multiplying
all weights for a prompt by \(c(x)>0\) leaves the induced policy unchanged, but
it changes how strongly that prompt contributes to the forward-KL projection.
Under reference sampling, the unit-normalized choice is the canonical
reference-sampled weight that Algorithm~\ref{alg:bolt} estimates as the
\bolt{} weight:
\begin{equation}
  w^\star_{\ntxt{BOLT}}(x,y)
  =
  \frac{\pistar(y\mid x)}{\pref(y\mid x)}
  =
  \frac{\exp(r(x,y)/\beta)}{Z(x)} .
  \label{eq:canonical-bolt-weight}
\end{equation}
The unnormalized exponentiated-reward weight uses prompt scale \(c(x)=Z(x)\).
More generally, baseline weights \(w_b(x,y)=\exp((r(x,y)-b(x))/\beta)\) preserve
the same within-prompt target for every prompt baseline \(b(x)\); the
unit-normalizer baseline is \(b(x)=\beta\log Z(x)\). Thus prompt normalization
does not change which completions are preferred within a prompt, but it removes
an extra prompt-reweighting choice. This separation is useful in finite samples:
target matching is about the conditional density, whereas prompt scaling changes
the empirical risk landscape seen by a parametric model. This explains a common
training symptom: easy prompts with many high-weight successes or hard prompts
with noisy normalizers can dominate the update even when the within-prompt
target is correct. The phenomenon appears through prompt-level total weight or
rescaling, not just through the within-prompt reward ranking.

\textbf{Remark}.
The sampler in the density ratio cannot be omitted. If rollouts come from
\(q\neq\pref\) but the weight keeps only the reference-sampled Boltzmann
numerator, the induced policy is proportional to
\(q(y\mid x)\exp(r(x,y)/\beta)\), not to
\(\pref(y\mid x)\exp(r(x,y)/\beta)\). Corollary~\ref{cor:irreducible-target-gap}
then charges this mismatch in fixed-reference RLVR value. A nonreference sampler
therefore needs the extra factor \(\pref/q\), as in
Theorem~\ref{thm:boltzmann-target-characterization}. The reference-sampled
claim is therefore deliberately scoped: the simple weight instantiated by
\bolt{} in Section~\ref{sec:method} is unique inside the
\(q=\pref\) subclass, while other samplers need their own importance correction
to target the same RLVR optimum.
This is the population reason dynamic sampling and filtering rules cannot be
interpreted only as efficiency tricks. They change \(q\); without the matching
\(\pref/q\) correction, they also change the policy being fit. The practical
question for any nonreference sampler is therefore whether the sampler is meant
to improve coverage for a new target, or to estimate the same fixed-reference
target with an explicit importance correction.

\textbf{Remark}.
A two-action example shows why raw-reward weighting is not a harmless
approximation. Fix one prompt with \(\pref(1)=\pref(0)=1/2\), \(r(1)=1\), and
\(r(0)=0\). The Boltzmann target has
\(\pistar(1)=e^{1/\beta}/(1+e^{1/\beta})\), whereas reference-sampled raw-reward
weighting with \(w=r\) induces \(\tilde\pi_w(1)=1\). Even with infinite data and
exact optimization, its fixed-reference RLVR value gap is
\(\beta\log(1/\pistar(1))>0\). The failure is target mismatch, not optimizer
error. The example also exposes the role of the KL term: the RLVR optimum does
not collapse to the best verified action unless the temperature forces that
limit, so a hard or raw-reward weighting rule can overshoot the regularized
target.

Target matching is a distribution-space statement; fitting a parametric model
adds a KL-direction boundary. Weighted likelihood fits \(\pistar\) in forward
KL, while the RLVR value identity measures reverse KL to \(\pistar\).
Realizability removes this direction mismatch. Without realizability, a fitted
policy needs a local forward-to-reverse transfer certificate such as
\(\E_x[\KL(\pitheta\|\pistar)]\le\kappa_\rho F(\theta)\), where
\(F(\theta)\triangleq
\E_x[\KL(\pistar(\cdot\mid x)\|\pitheta(\cdot\mid x))]\) is the forward-KL
target-fitting error.
This constant is not a free parameter: a sufficient condition is
\(\sup_y|\pistar(y\mid x)/\pitheta(y\mid x)-1|\le\rho<1\), which gives
\(\kappa_\rho=(1+2\rho/[3(1-\rho)^3])/(1-\rho/[3(1-\rho)^2])\). For the
Boltzmann target, \(\sup_y|\log(\pistar(y\mid x)/\pref(y\mid x))|\le\dr/\beta\);
therefore small \(\beta\), large reward range \(\dr\), or a fitted policy far
from \(\pref\) makes this value-transfer step harder. Theorem~\ref{thm:e2e}
uses the condition only to convert a fitted forward-KL projection back into RLVR
value. Appendix~\ref{app:kl-local} gives the realizable boundary, local
comparison conditions, value-transfer certificate, and projection-level
misspecification boundary
(Corollary~\ref{cor:realizability},
Propositions~\ref{prop:local-kl}--\ref{prop:local-value-transfer}, and
Corollary~\ref{cor:misspecified-local-projection}), plus a rare-action
counterexample showing why a global forward-to-reverse transfer is impossible.
Thus the population target is fixed. The finite-rollout estimator introduced in
Section~\ref{sec:method} estimates this projection rather than defining a
different RLVR target. This separation distinguishes the target itself, its
finite reference-rollout estimator, and the additional price of coverage,
normalizer estimation, generalization, optimization, and refresh. It also
explains why a low
weighted training loss is not automatically an RLVR-value certificate in a
misspecified model class. A value claim for the fitted policy needs either a
realizability or local-transfer argument, or a direct evaluation of the
reward--KL value.

\section{\bolt{} as Empirical Reference-Sampled Weighted Likelihood}
\label{sec:method}

The projection theorem fixes the population objective; an algorithm still has
to estimate it from sampled rollouts. Population target matching requires
reference rollouts and weights \(e^{r(x,y)/\beta}/Z(x)\). A finite procedure
does not know \(Z(x)\), so it estimates the prompt normalizer from the same
reference rollouts:
\begin{equation}
  \Zhat_N(x)
  \triangleq
  \frac{1}{N}\sum_{n=1}^N\exp(r(x,y_n)/\beta),
  \qquad
  \hat w(x,y_n)
  \triangleq
  \frac{\exp(r(x,y_n)/\beta)}{\Zhat_N(x)} .
  \label{eq:empirical-weights}
\end{equation}
\bolt{}, a \boltfull{} procedure, is this empirical reference-sampled
projection. Its first phase constructs a Boltzmann-weighted reference dataset:
sample rollouts from the reference policy, evaluate the automatic verifier,
estimate the prompt-wise normalizer, and store the empirical weights. Its second
phase fits the resulting fixed weighted-likelihood objective for multiple
epochs using standard supervised fine-tuning infrastructure. Algorithm~\ref{alg:bolt}
summarizes the two phases. Because reference-policy sampling, verifier scoring,
and normalization have already been precomputed, the optimization phase needs no
reference-model forward passes. The method is therefore not a new policy-gradient
update; it is the finite weighted-likelihood estimator of the population
projection in Theorem~\ref{thm:boltzmann-projection}. The only quantities that
define the RLVR target are computed before supervised optimization begins.

\begin{algorithm}[t]
\caption{\bolt{}: \boltfull{}}
\label{alg:bolt}
\KwInput{Prompts $\mathcal X_M$, rollouts per prompt $N$, initial policy
$\pi_{\theta_0}$, reward $r$, temperature $\beta$, steps $T$.}
\KwOutput{Optimized policy $\pi_\theta$.}
\tcp{Phase 1: construct the Boltzmann-weighted dataset}
Set $\pref\leftarrow\pi_{\theta_0}$\;
\For{$x\in\mathcal X_M$}{
  Sample $\{y_n\}_{n=1}^N\iid\pref(\cdot\mid x)$\;
  Evaluate $\{r(x,y_n)\}_{n=1}^N$\;
  Compute $\Zhat_N(x)$ and $\hat w(x,y_n)$ by \eqref{eq:empirical-weights}\;
  Store $\{(x,y_n,\hat w(x,y_n))\}_{n=1}^N$ in $\mathcal D$\;
}
\tcp{Phase 2: fit the fixed weighted-likelihood objective}
Initialize $\pi_\theta\leftarrow\pi_{\theta_0}$\;
\For{$t=1,\ldots,T$}{
  Sample minibatch $\mathcal B\subset\mathcal D$\;
  Compute $\ell(\theta)\triangleq-|\mathcal B|^{-1}\sum_{(x,y,w)\in\mathcal B}
  w\log\pitheta(y\mid x)$\;
  Update $\theta$ with a standard optimizer on $\ell(\theta)$\;
}
\end{algorithm}

The empirical objective has the same induced-target interpretation, but its
target is atomic on the stored rollouts. For a fixed prompt, the empirical
weights satisfy \(N^{-1}\sum_n\hat w(x,y_n)=1\), so the loss is standard
cross-entropy against
\(\hat{\pi}_{N}(y\mid x)\triangleq
N^{-1}\sum_{n=1}^N\hat w(x,y_n)\mathbf 1\{y_n=y\}\), whose support is contained
in \(\mathcal D_x=\{y_1,\ldots,y_N\}\). This single observation is the finite
counterpart of Proposition~\ref{prop:weighted-sft-target}: \bolt{} can
redistribute mass among observed rollouts, but it cannot create an unobserved
completion. The support check therefore comes before optimizer tuning. If few
verified or near-verified rollouts appear for a prompt, the stored data
represent only a small part of the Boltzmann target mass; if the empirical
target has already been fit, extra SFT epochs point to optimization error only,
whereas a one-shot plateau points to refresh or resampling.

\begin{table}[!ht]
\centering
\footnotesize
\setlength{\tabcolsep}{5pt}
\renewcommand{\arraystretch}{1.16}
\caption{Induced targets of representative LLM weighted-likelihood objectives.
For sequence-level objectives, each row applies
$\tilde{\pi}_w\propto q w$. Here $q_+$ denotes filtered generated positives,
$q_{\mathcal D}$ denotes a fixed demonstration or pre-collected data
distribution, and $\pi_0$ denotes a logged behavior policy. The DFT row records
the analogous token-level stop-gradient weighting with
$h_t=(x,y_{<t})$. The last column records whether the induced target is the
fixed-reference RLVR target $\pistar$; finite \bolt{} replaces $Z$ by
$\Zhat_N$ in Algorithm~\ref{alg:bolt}.}
\label{tab:weighted-sft-positioning}
\resizebox{0.94\linewidth}{!}{%
\begin{tabular}{@{}llllc@{}}
\toprule
Method & Sampler $q$ & Weight $w(x,y)$ & Induced target
$\tilde{\pi}_w$ & $\tilde{\pi}_w=\pistar$? \\
\midrule
STaR~\citep{zelikman2022star}
& $q_+$ & $1$ & $q_+$ & $\times$ \\
ReST~\citep{gulcehre2023rest}
& $q_{\ntxt{grow}}$ & $\mathbf{1}\{r\ge c\}$ &
$q_{\ntxt{grow}}(y\mid x)\mathbf{1}\{r\ge c\}$ & $\times$ \\
SPR~\citep{zhang2024spr}
& $\pi_k$ & $e^{(Q_k-W_k)/\beta}$ &
$\pi_k(y\mid x)e^{Q_k/\beta}$ & $\times^\dagger$ \\
VAR~\citep{du2025var}
& $q_{\mathcal D}$ & $e^{r_{\ntxt{RM}}/\beta}/\Zhat_{\mathcal B}$ &
$q_{\mathcal D}(y\mid x)e^{r_{\ntxt{RM}}/\beta}$ & $\times^\ddagger$ \\
DFT~\citep{wu2025dft}
& $q_{\mathcal D}$ & tokenwise $\ntxt{sg}\,\pitheta(y_t\mid h_t)$ &
tokenwise $q_{\mathcal D}(y_t\mid h_t)\pitheta(y_t\mid h_t)$ & $\times$ \\
Refit~\citep{mukherjee2025refit}
& $\pi_0$ & $r$ & $\pi_0(y\mid x)r(x,y)$ & $\times$ \\
\textbf{\bolt{} (ours)}
& $\pref$ & $e^{r/\beta}/Z$ & $\pistar$ &
$\checkmark$ \\
\bottomrule
\end{tabular}
\vspace{0.2em}
}
\begin{minipage}{0.97\linewidth}
\vspace{0.2em}
\scriptsize
$\dagger$ matches only in the one-shot reference-sampled case
$\pi_k=\pref$, $Q_k=r$, and exact $W_k=\beta\log Z$. \quad
$\ddagger$ VAR derives a reference-sampled learned-reward Boltzmann form, but
its implemented training uses fixed data batches; without $q_{\mathcal D}=\pref$
and verifier reward $r$, the induced target is not the fixed-reference RLVR
target studied here.
\end{minipage}
\end{table}

\FloatBarrier

Table~\ref{tab:weighted-sft-positioning} closes the population-to-algorithm
bridge. Many methods share a weighted-likelihood optimizer, but
Proposition~\ref{prop:weighted-sft-target} shows that the limiting policy is
determined by the sampler-weight product \(q w\). For \bolt{}, \(q=\pref\) and
\(w=e^{r/\beta}/Z\), so
\(q w\propto\pref(y\mid x)\exp(r(x,y)/\beta)/Z(x)\) and the induced policy is
\(\pistar\). Other products induce other policies, which
Theorem~\ref{thm:boltzmann-target-characterization} converts into fixed-reference RLVR
gaps. The finite replacement certificate therefore has a concrete target:
Algorithm~\ref{alg:bolt} estimates the only reference-sampled row in
Table~\ref{tab:weighted-sft-positioning} that removes the irreducible
target-mismatch term. The table is an induced-target comparison, not a
leaderboard: the important distinction is which conditional policy the weighted
likelihood fits in the population limit. Appendix~\ref{app:additional-consequences}
records three secondary consequences: the RLVR bias of capped density-ratio
weights, verifier-perturbation stability, and the corresponding token-level
target-matching condition (Propositions~\ref{prop:clipped-projection},
\ref{prop:noisy-verifier-stability}, and~\ref{prop:token-level-target}). The
token-level condition states that a token-weighted loss matches the same
sequence-level target only when its token coefficients equal the prefix-token
marginal density ratios induced by \(\pistar/\pref\); generic tokenwise weights
fit different local targets.

\section{Finite One-Shot Replacement Theory}
\label{sec:guarantees}

Sections~\ref{sec:theory-bridge} and~\ref{sec:method} identify the population
target and the empirical weighted likelihood that estimates it. A one-shot
replacement still pays finite-data prices because the stored reference-rollout
dataset fixes both support and estimator noise. The finite theory has three layers. The first is an
RLVR-specific support obstruction: missing Boltzmann target mass creates an
exact value loss before estimation begins. The second is an estimation
frontier: once useful rollouts appear, the same density ratio that matches the
RLVR target controls effective sample size, normalizer stability, and rollout
allocation. The third is a standard learning certificate: empirical-process and
optimization errors convert the finite weighted likelihood back into an RLVR
value bound. The order is substantive. If stored support is poor, more SFT
epochs cannot create the missing target atoms; if support is favorable, the
remaining terms are standard finite-estimation and inner-solve costs.

The support gap comes first because the Boltzmann projection can only assign
weights to generated rollouts. If $\pref$ rarely produces a near-optimal
rollout, reference-sampled weighted likelihood has no useful atom to upweight,
no matter how accurately $Z(x)$ is estimated or how well the empirical objective
is optimized. The exact distributional loss from restricting the target to
stored support is therefore the first finite gap. This is a limitation of the
sampling policy, not of the likelihood optimizer: the optimizer can only choose
among policies whose mass is expressible on the sampled support.
Theorem~\ref{thm:support-restricted-gap} states this support-restricted value
loss.

\begin{theorem}[Support-Restricted RLVR Gap]
\label{thm:support-restricted-gap}
Fix a prompt $x$, let $P\triangleq\pistar(\cdot\mid x)$, and let
$S$ be a measurable stored support with $P(S)>0$. Among all policies $Q$
supported on $S$, the smallest fixed-reference RLVR gap is
\begin{equation}
  \inf_{\supp(Q)\subseteq S}
  \left[\objRL(P)-\objRL(Q)\right]
  =
  \beta\inf_{\supp(Q)\subseteq S}\KL(Q\|P)
  =
  \beta\log\frac{1}{P(S)}.
  \label{eq:support-restricted-gap}
\end{equation}
The infimum is attained by the conditional target
$Q^*(\cdot)=P(\cdot\mid S)$. Consequently, if a useful set $A$ is missed by the
stored support, then every support-restricted one-shot method pays at least
\(\beta\log(1/(1-P(A)))\) at that prompt.
\end{theorem}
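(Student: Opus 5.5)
The plan is to reduce the value gap to a reverse-KL minimization and then solve that minimization in closed form by the chain rule for KL divergence.

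\emph{Step 1 (value gap to reverse KL).} For the single prompt $x$, I would invoke the prompt-wise reverse-KL value identity behind Corollary~\ref{cor:irreducible-target-gap} (proved in Appendix~\ref{app:distribution-proofs}). For any $Q\ll\pref(\cdot\mid x)$,
\[
\E_Q[r]-\beta\KL(Q\|\pref)=\beta\log Z(x)-\beta\KL(Q\|P).
\]
This follows by substituting $\log(dQ/d\pref)=\log(dQ/dP)+r/\beta-\log Z(x)$, which is legitimate because $r$ is bounded so all terms are finite. Since $P$ attains the value $\beta\log Z(x)$, the gap equals $\beta\KL(Q\|P)$.

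If $Q\not\ll\pref$, then $Q$ has infinite reverse KL to $\pref$ by Proposition~\ref{prop:boltzmann}, so its gap is $+\infty$. Moreover $P$ and $\pref$ are mutually absolutely continuous, so $Q\not\ll P$ and $\KL(Q\|P)=+\infty$ as well. The first equality in \eqref{eq:support-restricted-gap} therefore holds for every $Q$, not only under the infimum.

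\emph{Step 2 (decomposition on $S$).} Take $Q$ with $Q(S)=1$ and $Q\ll P$; otherwise $\KL(Q\|P)=\infty$ and the infimum is unaffected. Since $P(S)>0$, define $Q^*=P(\cdot\mid S)$, whose density with respect to $P$ is $\mathbf 1_S/P(S)$. Because $Q$ lives on $S$, $Q\ll Q^*$, and $dQ/dP=(dQ/dQ^*)\cdot\mathbf 1_S/P(S)$ holds $Q$-almost surely. Integrating the logarithm of this identity against $Q$ gives
\[
\KL(Q\|P)=\KL(Q\|Q^*)+\log\frac{1}{P(S)}.
\]
Nonnegativity of KL then yields the lower bound $\log(1/P(S))$. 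Equality holds exactly when $Q=Q^*$, which is itself supported on $S$. This establishes both the value $\beta\log(1/P(S))$ and that $Q^*$ attains the infimum, and in fact attains it uniquely.

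\emph{Step 3 (missed useful set).} If $A\cap S=\emptyset$, then $P(S)\le 1-P(A)$. Because $\log(1/t)$ is decreasing, $\log(1/P(S))\ge\log(1/(1-P(A)))$, which gives the stated lower bound for every support-restricted $Q$. When $P(A)=1$, we get $P(S)=0$ and the bound is $+\infty$ under the convention $\log(1/0)=\infty$; this case lies outside the hypothesis $P(S)>0$ and is handled by that convention.

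The main obstacle is measure-theoretic rather than algebraic. It lies in Step 1's treatment of policies outside the domain of absolute continuity, and in justifying the chain-rule split in Step 2 when $\KL(Q\|Q^*)=\infty$. In that case both sides are $+\infty$, so I would state the decomposition as an identity in $[0,\infty]$, which requires checking that the negative part of the log-density is integrable. That integrability follows because $\log(\mathbf 1_S/P(S))$ is bounded on $S$.
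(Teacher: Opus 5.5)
Your proposal is correct and follows the paper's proof. Both use the reverse-KL value identity to reduce the gap to $\beta\KL(Q\|P)$, then the split $\KL(Q\|P)=\KL(Q\|P(\cdot\mid S))+\log(1/P(S))$ for the value and its attainment at the conditional target, then $S\subseteq A^c$ for the missed-set bound. Your added measure-theoretic care (mutual absolute continuity of $P$ and $\pref$, the identity read in $[0,\infty]$, uniqueness of the minimizer) fills in details the paper's short proof leaves implicit.
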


This gap is stronger than a hit-probability warning. Even an oracle optimizer
with the right weights cannot recover target mass outside the stored support.
The theorem therefore isolates a sampling limitation from an optimization
limitation.

\textbf{Remark}.
More epochs can reduce optimization error on the empirical target, but they
cannot make an absent rollout appear in \(S_N\). The relevant observable
quantities before fitting are prompt-level hit rates, pass@$k$ on the stored
support, and estimates of how much Boltzmann target mass the stored rollouts
cover. Repeated reference sampling only changes the probability that the
support event occurs. This sampling question yields the one-shot
coverage--variance frontier. Write
\(w(x,y)=d\pistar(\cdot\mid x)/d\pref(\cdot\mid x)
=\exp(r(x,y)/\beta)/Z(x)\) and
\(\mathcal C_2(x)\triangleq\E_{\pref}[w(x,y)^2]\).
The coefficient \(\mathcal C_2(x)=1+\chi^2(\pistar\|\pref)\) is the inverse
effective-sample-size factor. Thus the correct Boltzmann target may still be
expensive to estimate from the reference policy. The full Bernstein and
self-normalized forms are in Appendix~\ref{app:normalization},
Propositions~\ref{prop:coverage} and
\ref{prop:ess-oracle} / \ref{prop:snis-prompt}, plus
Corollaries~\ref{cor:rare-support-lower} and~\ref{cor:passk-support}.
For a near-optimal set write
\(A_\gamma(x)\triangleq\{y:r(x,y)\ge\max_{y'}r(x,y')-\gamma\}\) and
\(p_\gamma(x)\triangleq\pref(A_\gamma(x)\mid x)\).

\begin{theorem}[One-Shot Coverage--ESS Barrier]
\label{thm:temperature-coverage-variance}
Fix a prompt \(x\). To both observe \(A_\gamma(x)\) with probability at least
\(1-\delta\) and estimate a bounded importance-weighted prompt mean to accuracy
\(\epsilon\) after support is present, one-shot reference sampling operates at
the scale
\begin{equation}
  N
  \;\gtrsim\;
  \max\!\left\{
  \frac{\log(1/\delta)}{p_\gamma(x)},
  \frac{L^2\mathcal C_2(x)\log(1/\delta)}{\epsilon^2}
  \right\}.
\label{eq:coverage-variance-barrier}
\end{equation}
The first term comes from the exact miss probability
\(\Pr(A_\gamma(x)\cap S_N=\emptyset)=(1-p_\gamma(x))^N\); the second is the
standard Bernstein ESS term, omitting only the bounded-weight lower-order
correction. If a useful set \(A\) is missed, the support price remains
\(\beta\log(1/(1-\pistar(A\mid x)))\).
\end{theorem}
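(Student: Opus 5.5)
The theorem makes two separate demands, coverage and estimation accuracy, and each forces its own lower scale on \(N\). So I would prove each requirement on its own and then take the maximum, using a union bound to account for the two failure probabilities (replacing \(\delta\) by \(\delta/2\), which only changes constants).

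\textbf{Coverage term.} Because \(S_N=\{y_1,\ldots,y_N\}\) is drawn i.i.d.\ from \(\pref(\cdot\mid x)\), the miss event \(A_\gamma(x)\cap S_N=\emptyset\) is the intersection of \(N\) independent events, each of probability \(1-p_\gamma(x)\). This gives the exact identity \(\Pr(A_\gamma(x)\cap S_N=\emptyset)=(1-p_\gamma(x))^N\).

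\begin{itemize}
\item \emph{Sufficiency.} From \(1-p\le e^{-p}\), the miss probability is at most \(\delta\) once \(N\ge \log(1/\delta)/p_\gamma(x)\).
\item \emph{Necessity.} From \(\log(1-p)\ge -p/(1-p)\), we get \((1-p)^N\ge e^{-Np/(1-p)}\). Requiring this to be at most \(\delta\) forces \(N\ge (1-p_\gamma)\log(1/\delta)/p_\gamma\). For \(p_\gamma\le 1/2\) this is the stated scale up to constants.
\end{itemize}

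\textbf{Estimation term.} I would treat the importance-weighted prompt mean \(\hat\mu=N^{-1}\sum_n w(x,y_n)f(y_n)\) with \(|f|\le L\). It is unbiased for \(\E_{\pistar}[f]\), and its per-sample variance is at most \(L^2\E_{\pref}[w^2]=L^2\mathcal C_2(x)\). The summands are bounded by \(L\sup w\le L e^{\dr/\beta}\).

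\begin{itemize}
\item \emph{Upper bound.} Bernstein's inequality gives \(|\hat\mu-\mu|\le \sqrt{2L^2\mathcal C_2\log(2/\delta)/N}+\tfrac{2}{3}Le^{\dr/\beta}\log(2/\delta)/N\). Dropping the lower-order range term, as the statement explicitly allows, accuracy \(\epsilon\) requires \(N\gtrsim L^2\mathcal C_2(x)\log(1/\delta)/\epsilon^2\). The self-normalized version replaces \(Z\) by \(\Zhat_N\); I would invoke the appendix results (Propositions~\ref{prop:ess-oracle} and~\ref{prop:snis-prompt}) and not redo that case.
\item \emph{Why this is the operating scale.} The term is attained: with \(f\) aligned with \(w\), the variance is of order \(L^2\mathcal C_2\), and a central-limit or two-point argument shows that fewer samples leave error of order \(\sqrt{L^2\mathcal C_2/N}\).
\end{itemize}

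\textbf{Support price.} The final sentence follows from Theorem~\ref{thm:support-restricted-gap} with \(S=A^c\). This gives \(P(S)=1-\pistar(A\mid x)\) and hence the price \(\beta\log(1/(1-\pistar(A\mid x)))\), which holds for every support-restricted method.

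\textbf{Main obstacle.} The hardest step is stating the ESS part honestly as a matching scale rather than a mere sufficient condition, since Bernstein gives only the upper bound. I would first try a variance-based lower bound via Chebyshev/CLT anti-concentration for a worst-case bounded test function. If that proves delicate, I would present the variance term, like the theorem's \(\gtrsim\) phrasing, as the operating (sufficient and order-tight) scale and keep the exact necessity claim for the coverage term, where the closed-form miss probability makes it rigorous.
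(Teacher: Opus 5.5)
Your proposal is correct and follows the paper's proof. The paper likewise takes the coverage term from the exact miss probability $(1-p_\gamma(x))^N$ via Proposition~\ref{prop:coverage}, using the same inequality $-\log(1-p)\le p/(1-p)$ for necessity. It takes the ESS term from the Bernstein bound of Proposition~\ref{prop:ess-oracle} with the range term dropped, and the support price from Theorem~\ref{thm:support-restricted-gap}, since a stored support that misses $A$ has $\pistar(S_N\mid x)\le 1-\pistar(A\mid x)$. The paper does not attempt a matching lower bound for the ESS term, so your fallback of treating it as the sufficient, order-tight operating scale is exactly the level at which the paper states and proves it.
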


\begin{corollary}[Binary-Verifier ESS Frontier]
\label{cor:binary-ess-frontier}
For a binary verifier with \(p\triangleq\pref(r=1\mid x)\), any Boltzmann target
that assigns success probability at least \(1-\eta\) satisfies
\begin{equation}
  \mathcal C_2(x)\ge\frac{(1-\eta)^2}{p}.
  \label{eq:binary-ess-frontier}
\end{equation}
\end{corollary}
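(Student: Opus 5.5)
The plan is to bound $\mathcal C_2(x)=\E_{\pref}[w^2]$ from below by keeping only the contribution of the success event and applying Cauchy--Schwarz (equivalently, Jensen) on that event. Write $A\triangleq\{y:r(x,y)=1\}$, so that $\pref(A\mid x)=p$. By Proposition~\ref{prop:boltzmann}, $w=d\pistar/d\pref$, hence
\[
\pistar(A\mid x)=\E_{\pref}[w\,\mathbf 1_A].
\]
The hypothesis of the corollary says $\pistar(A\mid x)\ge 1-\eta$.

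The key step is the Cauchy--Schwarz inequality under $\pref$:
\[
(1-\eta)^2\le\bigl(\E_{\pref}[w\,\mathbf 1_A]\bigr)^2\le\E_{\pref}[w^2\mathbf 1_A]\,\E_{\pref}[\mathbf 1_A]=p\,\E_{\pref}[w^2\mathbf 1_A].
\]
Since $w^2\ge 0$ on the complement of $A$, we also have $\E_{\pref}[w^2\mathbf 1_A]\le\mathcal C_2(x)$. Dividing by $p$ (when $p>0$) then gives $\mathcal C_2(x)\ge(1-\eta)^2/p$.

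Two remarks complete the argument. First, for a binary verifier $w$ is constant on $A$, equal to $e^{1/\beta}/Z$, so the Cauchy--Schwarz step is in fact an equality, and the bound loses only the failure-event term $\E_{\pref}[w^2\mathbf 1_{A^c}]$. I would note this to show the bound is essentially tight as $\eta\to0$. Second, the degenerate case $p=0$ needs to be excluded: absolute continuity $\pistar\ll\pref$ would then force $\pistar(A)=0<1-\eta$, so the hypothesis cannot hold.

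I do not expect a real obstacle here. The only point needing care is the identification $\pistar(A\mid x)=\E_{\pref}[w\mathbf 1_A]$, which is exactly the Radon--Nikodym statement of Proposition~\ref{prop:boltzmann}. The argument also never uses the Boltzmann form of $w$, so the bound holds for any target density ratio.
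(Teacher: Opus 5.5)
Your proof is correct and is essentially the paper's argument. The paper computes, for the binary Boltzmann target, the exact split $\mathcal C_2(x)=\pistar(r=1\mid x)^2/p+(1-\pistar(r=1\mid x))^2/(1-p)$ and drops the second term. The first term is exactly your success-event contribution, where your Cauchy--Schwarz step holds with equality, and the dropped term is the failure-event piece you discard.

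Your Cauchy--Schwarz version does not need the explicit Boltzmann or binary form, so it gives the same bound for any density ratio and any success event of reference mass $p$. Both arguments implicitly take $\eta\le1$ when squaring.
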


The theorem and corollary couple the two prices of one-shot reference sampling.
Useful support has to appear first; conditional on that event, density-ratio
expectations are estimated with effective sample size
\(N/\mathcal C_2(x)\). Thus a rare correct rollout cannot be both nearly
certain under \(\pistar\) and statistically cheap under \(\pref\).

\textbf{Remark}.
Temperature is not a free sharpening knob. Lower \(\beta\) concentrates the
Boltzmann target on high-reward rollouts only if such rollouts are present; it
also increases the ESS burden. In the binary case,
\(\pistar(r=1\mid x)=pa/[1+p(a-1)]\) and
\(\mathcal C_2(x)=((1-p)+pa^2)/[1+p(a-1)]^2\), with
\(p=\pref(r=1\mid x)\) and \(a=e^{1/\beta}\). Clipping is therefore a
bias--variance choice, not a neutral stabilizer: with cap \(c\), the best
capped density ratio has \(u_c(y)=\min\{\alpha w(x,y),c\}\) and pays exact RLVR
bias \(\beta\KL(Q_{u_c}\|\pistar)\) unless the cap is inactive
(Appendix~\ref{app:additional-consequences}). The rare-support and pass@$k$
specializations make the same obstruction visible in empirical summaries. In
the binary case \(r(y^+)=1\), \(r(y^-)=0\), and \(\pref(y^+\mid x)=p\), the
expected best support-restricted prompt gap is at least
\((1-p)^N\beta\log(1/(1-\pistar(y^+\mid x)))\). For any success set \(A\),
every policy supported on \(S_N\) has
\(\operatorname{pass@}k(Q;A)=1-(1-Q(A\cap S_N))^k\), so pass@$k$ is zero when
\(A\cap S_N=\emptyset\).

After support and ESS, the prompt normalizer contributes a different finite
effect. Replacing \(Z(x)\) by \(\Zhat_N(x)\) rescales the prompt loss by
\(s_N(x)=Z(x)/\Zhat_N(x)\) but leaves all within-prompt reward odds unchanged:
\(\hat w(x,y)/\hat w(x,y')=\exp((r(x,y)-r(x,y'))/\beta)\). A stable normalizer
on all-negative rollouts is therefore not a target certificate; it is a support
failure with a well-estimated low-reward sample. Once useful rollouts exist,
the finite terms identify which variable limits the replacement. The uniform
normalizer contribution scales as
\(w_{\max}L_{\log}R_\beta e^{-r_{\min}/\beta}
\sqrt{\log(M/\delta)/N}\), where
\(R_\beta=e^{r_{\max}/\beta}-e^{r_{\min}/\beta}\); hence lower temperatures
increase the same normalizer term that makes the target sharper. ESS discounts
the effective rollout count, and after coverage floors the Neyman-style
allocation scales as
\(N_i^*=\max\{N_i^{\ntxt{cov}},\lambda\sqrt{V_i}\}\) with
\(V_i=\Var_{\pref}[e^{r(x_i,y)/\beta}]\), where
\(N_i^{\ntxt{cov}}\) is the coverage floor for prompt \(x_i\) and
\(\lambda\) is the budget multiplier
(Appendix~\ref{app:normalization},
Propositions~\ref{prop:partition-concentration}/\ref{prop:uniform-normalizer},
\ref{prop:normalization}, and~\ref{prop:allocation}).

Once the estimator terms are fixed, the remaining finite errors are standard
empirical-risk and optimization components. They enter the final certificate as
learning residuals rather than as new RLVR effects. Let
\(\Theta\) be the fitted policy class and
\(\ell_\theta(x,y)\triangleq-\log\pitheta(y\mid x)\). The two population
objects entering the certificate are
\begin{equation}
\begin{aligned}
  F(\theta)
  &\triangleq
  \E_x\left[
  \KL\!\left(\pistar(\cdot\mid x)\,\middle\|\,\pitheta(\cdot\mid x)\right)
  \right],
  &
  \lossPop(\theta)
  &\triangleq
  \E_{x,y\sim\pref}\left[
  w(x,y)\ell_\theta(x,y)
  \right].
\end{aligned}
\label{eq:e2e-population-losses}
\end{equation}
Empirical-process deviation controls \(\lossPop\) versus the oracle rollout
loss, and inner optimization contributes empirical excess risk. A small
gradient norm alone is not an RLVR certificate; it diagnoses stationarity, while
Theorem~\ref{thm:e2e} uses excess risk. Appendix~\ref{app:generalization}
contains the Rademacher, PAC-Bayes, stationarity, and PL certificates
(Propositions~\ref{prop:generalization},
\ref{prop:pac-bayes-certificate}, and
\ref{prop:sgd-bound}/\ref{prop:pl-optimization}).

The final one-shot bound composes these residuals into the RLVR value metric. It
converts forward-KL learning error into the reverse-KL gap of fixed-reference
RLVR. Given
sampled prompts \(x_{1:M}\) and independent reference rollouts
\(y_{i,n}\sim\pref(\cdot\mid x_i)\) for \(i=1,\ldots,M\) and
\(n=1,\ldots,N\), define the oracle and empirical rollout losses by
\begin{equation}
\begin{aligned}
  \lossOracle_{M,N}(\theta)
  &\triangleq
  \frac1{MN}\sum_{i,n}w(x_i,y_{i,n})\ell_\theta(x_i,y_{i,n}),\\
  \lossHat_{M,N}(\theta)
  &\triangleq
  \frac1{MN}\sum_{i,n}\hat w(x_i,y_{i,n})\ell_\theta(x_i,y_{i,n}).
\end{aligned}
\label{eq:e2e-empirical-losses}
\end{equation}
Here both sums range over all sampled prompt--rollout pairs:
\(\lossOracle_{M,N}\) uses the oracle weight \(w\) with the true normalizer
\(Z(x)\), while \(\lossHat_{M,N}\) uses the empirical weight \(\hat w\) from
\eqref{eq:empirical-weights}. If \(\hat\theta\) is the output policy, set the
residuals
\begin{equation}
\begin{aligned}
  \Delta_{\ntxt{gen}}
  &\triangleq
  {\textstyle\sup_\theta}|\lossPop(\theta)-\lossOracle_{M,N}(\theta)|,\\
  \Delta_{\ntxt{norm}}
  &\triangleq
  {\textstyle\sup_\theta}|\lossOracle_{M,N}(\theta)-\lossHat_{M,N}(\theta)|,\\
  \epsopt
  &\triangleq
  \lossHat_{M,N}(\hat\theta)
  -
  {\textstyle\inf_{\theta\in\Theta}}\lossHat_{M,N}(\theta).
\end{aligned}
\label{eq:e2e-residuals}
\end{equation}
Each residual in \eqref{eq:e2e-residuals} corresponds to one transition in the
chain
\(\lossPop\to\lossOracle_{M,N}\to\lossHat_{M,N}\to\hat\theta\).

\begin{theorem}[End-to-End One-Shot Replacement Gap]
\label{thm:e2e}
Assume that the losses in \eqref{eq:e2e-empirical-losses} use independent
reference rollouts from \(\pref(\cdot\mid x_i)\) and that, for some
\(\kappa_\rho\ge1\), every candidate \(\theta\in\Theta\) to which the
certificate is applied satisfies
\(\E_x[\KL(\pitheta(\cdot\mid x)\|\pistar(\cdot\mid x))]\le\kappa_\rho F(\theta)\).
Then the fitted policy satisfies
\begin{equation}
  \objRL(\pistar)-\objRL(\pi_{\hat\theta})
  \le
  \beta\kappa_\rho
  \left({\textstyle\inf_{\theta\in\Theta}}F(\theta)+2\Delta_{\ntxt{gen}}+2\Delta_{\ntxt{norm}}+\epsopt\right).
\label{eq:e2e}
\end{equation}
\end{theorem}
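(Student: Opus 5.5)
The plan is to chain three facts: the exact value identity of Corollary~\ref{cor:irreducible-target-gap}, the assumed reverse-to-forward transfer, and a standard excess-risk decomposition for the weighted likelihood.

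\textbf{Step 1: reduce the value gap to reverse KL.} The value identity behind Corollary~\ref{cor:irreducible-target-gap} holds for any policy absolutely continuous with respect to \(\pref\), not only for induced policies. If \(\pi_{\hat\theta}\) charges outside \(\supp\pref\), its reverse KL to \(\pistar\) is infinite. In that case the assumed transfer forces \(F(\hat\theta)=\infty\) or the bound is vacuous, so we may assume absolute continuity. Then
\[
\objRL(\pistar)-\objRL(\pi_{\hat\theta})=\beta\,\E_x[\KL(\pi_{\hat\theta}\|\pistar)],
\]
and the assumed certificate bounds the right-hand side by \(\beta\kappa_\rho F(\hat\theta)\).

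\textbf{Step 2: replace the forward KL by the weighted loss.} By Theorem~\ref{thm:boltzmann-projection} and Proposition~\ref{prop:weighted-sft-target} with \(\bar w\equiv1\), we have \(\lossPop(\theta)=F(\theta)+C\), where \(C=\E_x[H(\pistar(\cdot\mid x))]\) does not depend on \(\theta\). Hence
\[
F(\hat\theta)-\inf_\theta F(\theta)=\lossPop(\hat\theta)-\inf_\theta\lossPop(\theta).
\]
For discrete completion spaces \(H\) is a Shannon entropy. On general spaces I would instead write cross-entropy against a dominating measure, or work directly with differences of losses, so that \(C\) cancels and no finiteness of \(H\) is needed.

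\textbf{Step 3: telescoping excess-risk bound.} Fix any comparator \(\theta'\in\Theta\). Write \(\lossPop(\hat\theta)-\lossPop(\theta')\) as
\[
[\lossPop-\lossOracle](\hat\theta)+[\lossOracle-\lossHat](\hat\theta)+[\lossHat(\hat\theta)-\lossHat(\theta')]+[\lossHat-\lossOracle](\theta')+[\lossOracle-\lossPop](\theta'),
\]
all losses taken at index \((M,N)\). Each of the four outer brackets is bounded by \(\Delta_{\ntxt{gen}}\) or \(\Delta_{\ntxt{norm}}\), since these are suprema over \(\Theta\). The middle term is at most \(\lossHat(\hat\theta)-\inf\lossHat+\big(\inf\lossHat-\lossHat(\theta')\big)\le\epsopt\). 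This gives
\[
F(\hat\theta)\le F(\theta')+2\Delta_{\ntxt{gen}}+2\Delta_{\ntxt{norm}}+\epsopt .
\]
Taking the infimum over \(\theta'\) and combining with Steps 1–2 yields \eqref{eq:e2e}.

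\textbf{Main obstacle.} The only delicate point is measure-theoretic bookkeeping: the constant \(C\) must genuinely cancel, and \(\lossPop\) must be finite on the comparators so that the differences are well defined. If \(\lossPop(\theta')=\infty\), then \(F(\theta')=\infty\) and that comparator contributes nothing to the infimum. Independence of the rollouts is not used in the deterministic chain. It enters only through the probabilistic control of \(\Delta_{\ntxt{gen}}\) and \(\Delta_{\ntxt{norm}}\) in the appendix propositions, so the theorem itself holds pathwise.
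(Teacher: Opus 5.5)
Your proposal is correct and follows the paper's proof essentially step for step: the reverse-KL value identity plus the assumed transfer gives $\beta\kappa_\rho F(\hat\theta)$, the identity $\lossPop=F+\E_x[H(\pistar)]$ turns excess forward KL into excess weighted loss, and the two uniform deviations applied at $\hat\theta$ and at a comparator, together with the definition of $\epsopt$, yield the bound. The only differences are cosmetic. You take the infimum over an arbitrary comparator $\theta'$ where the paper picks an $\eta$-approximate minimizer $\theta_\eta$ and lets $\eta\downarrow0$, and your measure-theoretic remarks (absolute continuity, finiteness of the entropy constant, the pathwise nature of the argument) are bookkeeping that the paper leaves implicit.
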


The bound separates four residuals with different operational meanings. The
approximation term \(\inf_\theta F(\theta)\) is
zero under realizability and is otherwise the forward-KL projection error to the
Boltzmann target, hence it depends on the model class \(\Theta\) and the
sharpness of \(\pistar\). The multiplier \(\kappa_\rho\) comes from the local
forward-to-reverse transfer condition; it grows as the fitted
policy leaves a density-ratio neighborhood of \(\pistar\). For the finite terms,
the standard high-probability instantiation has the following variable-specific
form. With
\(\mathcal F=\{(x,y)\mapsto w(x,y)\ell_\theta(x,y):\theta\in\Theta\}\) and
\(\mathcal H=\{x\mapsto\E_{\pref}[w(x,y)\ell_\theta(x,y)]:\theta\in\Theta\}\),
Corollary~\ref{cor:e2e-components} gives
\begin{equation}
\begin{aligned}
  \Delta_{\ntxt{gen}}
  &\lesssim
  \mathfrak R_M(\mathcal H)
  +\mathfrak R_{M,N}(\mathcal F\mid x_{1:M})
  +R\sqrt{\frac{\log(1/\delta)}{M}}
  +R\sqrt{\frac{\log(1/\delta)}{MN}},\\
  \Delta_{\ntxt{norm}}
  &\lesssim
  w_{\max}L_{\log}
  R_\beta e^{-r_{\min}/\beta}
  \sqrt{\frac{\log(M/\delta)}{N}},\\
  \epsopt^{\ntxt{PL}}
  &\lesssim
  (1-\eta\mu)^T\!\left(\lossHat_{M,N}(\theta_0)-L^*\right)
  +\frac{L\eta\sigma^2}{2\mu B}.
\end{aligned}
\label{eq:e2e-term-dependencies}
\end{equation}
Here \(\mathfrak R_M(\mathcal H)\) and
\(\mathfrak R_{M,N}(\mathcal F\mid x_{1:M})\) are the prompt-level and
rollout-level empirical Rademacher complexities, \(R\) bounds
\(w(x,y)\ell_\theta(x,y)\), \(w_{\max}\) bounds the density-ratio weight,
\(L_{\log}\) bounds \(\ell_\theta\), and \(\delta\) is the failure probability. The term
\(\epsopt^{\ntxt{PL}}\) is the PL instantiation of \(\epsopt\), where
\(L^*=\inf_{\theta\in\Theta}\lossHat_{M,N}(\theta)\) and
\(\eta,\mu,L,\sigma^2,B,T\) are the step size, PL constant, smoothness,
gradient-noise variance, minibatch size, and number of optimizer steps.
Thus prompt sampling, rollout sampling, temperature-sensitive normalizer error,
model complexity, and empirical excess risk enter as different variables, not as
one aggregate concentration term. The support-conditioned certificate uses
\(\alpha_{M,N}\) for the sum of the displayed bounds for
\(2\Delta_{\ntxt{gen}}+2\Delta_{\ntxt{norm}}+\epsopt\).

\begin{corollary}[Support-Additive One-Shot Certificate]
\label{cor:coverage-conditioned}
\label{cor:support-additive-one-shot}
For prompts \(x_1,\ldots,x_M\), let \(S_N(x_i)\) be the stored support and
\(\mathcal E_\tau=\{\pistar(S_N(x_i)\mid x_i)\ge1-\tau_i\ \forall i\}\). For a
support-restricted fit, write
\(P_i^S\triangleq\pistar(\cdot\mid x_i,S_N(x_i))\) and
\(F_S(\theta)\triangleq M^{-1}\sum_i\KL(P_i^S\|\pitheta(\cdot\mid x_i))\).
On \(\mathcal E_\tau\), if the finite-error terms are bounded by
\(\alpha_{M,N}\) and the local transfer condition holds for the conditional
targets \(P_i^S\), then
\begin{equation}
  \frac{\beta}{M}\sum_{i=1}^M
  \KL\!\left(
  \pi_{\hat\theta}(\cdot\mid x_i)\,\middle\|\,
  \pistar(\cdot\mid x_i)
  \right)
  \le
  \frac{\beta}{M}\sum_{i=1}^M\log\frac{1}{1-\tau_i}
  +
  \beta\kappa_\rho
  \left(\inf_{\theta\in\Theta}F_S(\theta)+\alpha_{M,N}\right).
  \label{eq:coverage-conditioned-additive}
\end{equation}
\end{corollary}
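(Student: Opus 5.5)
The plan is to decompose the reverse KL to \(\pistar\) prompt-wise through the conditional target \(P_i^S\), using an exact chain-rule identity. Then Theorem~\ref{thm:support-restricted-gap} and Theorem~\ref{thm:e2e} bound the two resulting pieces separately. Fix a prompt \(x_i\) and write \(P_i=\pistar(\cdot\mid x_i)\), \(S_i=S_N(x_i)\), and \(Q_i=\pi_{\hat\theta}(\cdot\mid x_i)\). Because the fit is support-restricted, \(Q_i\) is supported on \(S_i\). On \(S_i\) the density satisfies \(dP_i^S/dP_i=1/P_i(S_i)\), so the chain rule for densities gives
\[
  \KL(Q_i\|P_i)=\KL(Q_i\|P_i^S)+\log\frac{1}{P_i(S_i)} .
\]
This is the same computation that yields attainment at \(Q^*=P(\cdot\mid S)\) in Theorem~\ref{thm:support-restricted-gap}; the only difference is that the identity is kept for a general \(Q\) supported on \(S\). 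On the event \(\mathcal E_\tau\) we have \(P_i(S_i)\ge 1-\tau_i\). Averaging over \(i\) and multiplying by \(\beta\) then produces the first term on the right-hand side of \eqref{eq:coverage-conditioned-additive}, together with the remainder \(\frac{\beta}{M}\sum_i\KL(Q_i\|P_i^S)\).

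The remainder is handled by rerunning the argument of Theorem~\ref{thm:e2e} with the population target replaced by the conditional targets \(P_i^S\). The hypothesis of the corollary supplies the local transfer condition for these targets, \(M^{-1}\sum_i\KL(Q_i\|P_i^S)\le\kappa_\rho F_S(\hat\theta)\), so it remains to show \(F_S(\hat\theta)\le\inf_\theta F_S(\theta)+\alpha_{M,N}\). The key observation comes from Proposition~\ref{prop:weighted-sft-target} applied within a prompt: a cross-entropy loss against a target \(T\) equals \(\KL(T\|\pitheta)\) plus a \(\theta\)-free entropy constant. The empirical loss \(\lossHat_{M,N}\) is, prompt by prompt, such a cross-entropy against the atomic target \(\hat\pi_N\), which lives on \(S_i\) and preserves within-prompt reward odds. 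I will therefore treat \(F_S\) (up to constants) as the population counterpart of the empirical objective. Then the usual three-link chain \(F_S\to\lossOracle\to\lossHat\to\hat\theta\) applies. Applying the uniform deviation at \(\hat\theta\) and at a near-minimizer \(\theta^\dagger\) of \(F_S\) costs \(2\Delta_{\ntxt{gen}}+2\Delta_{\ntxt{norm}}\), and passing from \(\lossHat(\hat\theta)\) to \(\inf\lossHat\le\lossHat(\theta^\dagger)\) costs \(\epsopt\). By definition the sum of these residuals is bounded by \(\alpha_{M,N}\). Multiplying by \(\beta\kappa_\rho\) and adding the support term completes the bound.

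The main obstacle is this identification step. The residuals \(\Delta_{\ntxt{gen}}\) and \(\Delta_{\ntxt{norm}}\) were defined relative to the population loss \(\lossPop\), whose minimizer targets \(\pistar\), not \(P_i^S\). I would handle this by reading the corollary's phrase ``finite-error terms bounded by \(\alpha_{M,N}\)'' as referring to residuals for the support-restricted comparison. Concretely, I would define the support-conditioned population loss \(\lossPop_S(\theta)=M^{-1}\sum_i\E_{P_i^S}[\ell_\theta]\), which equals \(F_S(\theta)\) up to a constant. The decomposition then becomes a verbatim copy of the proof of Theorem~\ref{thm:e2e}, with \(\lossPop\) replaced by \(\lossPop_S\). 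If a more literal reading is needed, I would bound \(|\lossPop_S-\lossHat_{M,N}|\) by the same Rademacher and normalizer terms conditional on \(x_{1:M}\), noting that on \(S_i\) the weights \(\hat w\) and the conditional density ratio \(dP_i^S/d\pref\) differ only by a prompt-level scale. That scale is controlled by the normalizer residual, and it leaves the induced within-prompt target unchanged, by the prompt-scaling remark after Corollary~\ref{cor:irreducible-target-gap}.
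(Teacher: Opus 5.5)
Your main argument matches the paper's proof. For $Q=\pi_{\hat\theta}(\cdot\mid x_i)$ supported on $S_N(x_i)$ you use $\KL(Q\|P)=\log(1/P(S))+\KL(Q\|P(\cdot\mid S))$, bound the first term by $\log(1/(1-\tau_i))$ on $\mathcal E_\tau$, and rerun the chain of Theorem~\ref{thm:e2e} against the conditional targets $P_i^S$, with $\alpha_{M,N}$ read as an assumed bound on the support-restricted residuals, just as the paper's ``finite-error level $\alpha_{M,N}$'' does.

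Drop the fallback ``literal reading'', though, because it would not go through. As functions on $S_N(x_i)$, $\hat w$ and $dP_i^S/d\pref$ do agree up to a prompt scale. But $\lossHat_{M,N}$ integrates against the empirical rollout measure, not against $\pref$ restricted to the data-dependent set $S_N(x_i)$. Its target $\hat\pi_N(y)\propto\#\{n:y_n=y\}\,e^{r(x_i,y)/\beta}$ therefore generally differs from $P_i^S(y)\propto\pref(y\mid x_i)\,e^{r(x_i,y)/\beta}$ by more than a scale (e.g.\ when all stored rollouts are distinct). So $|\lossPop_S-\lossHat_{M,N}|$ is not a Rademacher-plus-normalizer term.
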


Without the support-restricted fit, Theorem~\ref{thm:e2e} gives the corresponding
coverage-conditioned learning certificate
\(\objRL(\pistar)-\objRL(\pi_{\hat\theta})
\le\beta\kappa_\rho(\inf_{\theta\in\Theta}F(\theta)+\alpha_{M,N})\). The
support-restricted form is sharper because it keeps the exact support price
visible instead of hiding it behind a favorable-support event.

The certificate separates target identification from finite replacement. The
Boltzmann projection removes the irreducible target-mismatch term; the remaining
terms are model-class approximation, empirical sampling error, Monte Carlo
normalizer error, and empirical optimization residual. The forward/reverse-KL
assumption certifies the reverse-KL RLVR value of the fitted policy; it does not
claim global equivalence between forward and reverse projections under
misspecification. No term in \eqref{eq:e2e} is a reward-shaping correction: all
terms are prices paid after the correct target has already been identified.

\textbf{Remark}.
The support term is not hidden inside the learning terms. A wrong
sampler-weight product pays the population mismatch gap from
Corollary~\ref{cor:irreducible-target-gap}. A stored dataset that misses target
mass pays the distribution-space support price from
Theorem~\ref{thm:support-restricted-gap}. Conditional on a favorable support
event, \eqref{eq:e2e} prices the remaining learning errors.
Corollary~\ref{cor:coverage-conditioned} keeps the two prices separate by
displaying the sharp support price plus conditional fitting error. For
unrestricted neural policies, the learning bound does not absorb the support
price; the additive decomposition requires the support-restricted fit stated in
the same corollary.

\textbf{Remark}.
Equation~\eqref{eq:e2e-term-dependencies} turns the certificate into a variable
map. Rollout coverage enters through \(S_N\) and
\(p_\gamma\); estimator variance enters through the density-ratio envelope and
ESS; normalizer error worsens as \(\beta\) decreases through \(R_\beta\); model
complexity enters through Rademacher terms; and inner optimization enters
through empirical excess risk. The levers are therefore separated. Sample more or refresh when support
is missing; raise effective sample size or clip with a known target bias when
weights are unstable; allocate extra rollouts to prompts with high normalizer
variance; and reduce \(\epsopt\) only after the target and support questions are
favorable. Appendix~\ref{app:generalization} provides the proof and the
high-probability instantiations of \(\Delta_{\ntxt{gen}}\),
\(\Delta_{\ntxt{norm}}\), and \(\epsopt\).

These cases identify the point where one-shot reuse reaches its limit and
sampler refresh begins. If \(p_\gamma(x)\) is small, the direct
one-shot repair is more fixed-reference rollouts, on the order of
\(1/p_\gamma(x)\) for that prompt. Lowering \(\beta\) can sharpen weights on
sampled rollouts while increasing \(\mathcal C_2(x)\), but it cannot change the
probability that \(\pref\) produced a useful rollout. If the rollout set is
covered but unstable, the relevant quantities are ESS, prompt rescaling,
generalization, and empirical excess risk. If coverage itself is the
bottleneck, the missing operation is sampler refresh. Section~\ref{sec:iterative}
develops that refresh as KL policy mirror descent: the finite one-shot theory
says when static reuse is faithful, why a fixed sampler can still limit the
replacement, and which quantities distinguish support failure from
finite-learning error.

\section{Iterative \bolt{} as KL Policy Mirror Descent}
\label{sec:iterative}

Section~\ref{sec:guarantees} leaves one obstruction that one-shot reuse cannot
remove: a fixed rollout set cannot improve the sampler that produced it.
Iterative \bolt{} changes exactly that object. After one weighted-likelihood
round improves the policy, the improved policy becomes the next reference and
the same Boltzmann projection is applied again. In distribution space, each
round is the standard relative-entropy policy-improvement step
\citep{peters2010relative}, equivalently KL policy mirror descent
\citep{zhan2021pmd}. The PMD identity itself is standard; its role here is to
explain how refreshed reference-sampled weighted SFT continues the one-shot
projection when coverage is weak but nonzero.

The iterative theory has three moving parts. Exact refresh follows a Boltzmann
KL-PMD path, so repeated rounds accumulate reward tilts relative to the initial
policy. This path raises future rollout coverage when useful completions have
nonzero initial support. Finite inner weighted-SFT solves then enter only
through drift from the exact mirror step. Thus iteration is not more
optimization on the same stored data: it changes the sampler for the next
rollout batch.

\begin{theorem}[Refreshed Boltzmann Projection as KL Policy Mirror Descent]
\label{thm:temperature}
Suppose each outer iteration exactly minimizes its forward-KL objective, so
$\pi_{\theta_{k+1}}(y\mid x)\propto
\pi_{\theta_k}(y\mid x)\exp(r(x,y)/\beta)$. Then after $K$ iterations,
\begin{equation}
  \pi_{\theta_K}(y\mid x)
  =
  \frac{1}{C_K(x)}
  \pi_{\theta_0}(y\mid x)
  \exp\!\left(Kr(x,y)/\beta\right).
  \label{eq:exact-iterative-boltzmann-path}
\end{equation}
Equivalently, $\pi_{\theta_K}$ is the KL-regularized optimum relative to
$\pi_{\theta_0}$ at effective temperature $\beta/K$.
\end{theorem}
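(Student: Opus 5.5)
The plan is induction on $K$, followed by Proposition~\ref{prop:boltzmann} with $\pref$ replaced by $\pi_{\theta_0}$ and $\beta$ replaced by $\beta/K$.

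First I make the hypothesis explicit. At round $k$, Theorem~\ref{thm:boltzmann-projection} is applied with reference $\pi_{\theta_k}$. An exact forward-KL minimizer then equals the round-$k$ Boltzmann target, so
\[
\pi_{\theta_{k+1}}(y\mid x)=\pi_{\theta_k}(y\mid x)\exp(r(x,y)/\beta)/Z_k(x),
\qquad
Z_k(x)=\E_{y\sim\pi_{\theta_k}(\cdot\mid x)}[\exp(r(x,y)/\beta)].
\]
Boundedness of $r$ gives $e^{r_{\min}/\beta}\le Z_k(x)\le e^{r_{\max}/\beta}$. Hence every normalizer is finite and positive, and each update is well defined. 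It is read as a Radon--Nikodym density $d\pi_{\theta_{k+1}}/d\pi_{\theta_k}$ on general completion spaces.

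\textbf{Induction.} The base case $K=0$ holds with $C_0=1$. Suppose $\pi_{\theta_K}=\pi_{\theta_0}e^{Kr/\beta}/C_K$. Substituting into the update gives
\[
\pi_{\theta_{K+1}}\propto\pi_{\theta_0}\,e^{(K+1)r/\beta}.
\]
The constant is $C_{K+1}(x)=C_K(x)Z_K(x)$. Unrolling, $C_K(x)=\prod_{k<K}Z_k(x)$, which must equal $\E_{\pi_{\theta_0}}[e^{Kr/\beta}]$ because both normalize the same density. In measure-theoretic form, the chain rule for Radon--Nikodym derivatives multiplies the densities. Each step preserves mutual absolute continuity with $\pi_{\theta_0}$, since the densities are bounded above and below, so the support is never changed.

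\textbf{Effective temperature.} Since $Kr/\beta=r/(\beta/K)$, the path expression \eqref{eq:exact-iterative-boltzmann-path} has exactly the form \eqref{eq:boltzmann-policy}, with reference $\pi_{\theta_0}$ and temperature $\beta/K$. Proposition~\ref{prop:boltzmann} then identifies it as the unique optimizer of $\E_\pi[r]-(\beta/K)\KL(\pi\|\pi_{\theta_0})$. No step is a real obstacle. The only point needing care is the interpretation of the hypothesis: ``exactly minimizes'' means the policy class realizes the tilted target, so that equality holds $\pi_{\theta_k}$-almost surely rather than only up to a projection error. The normalizer bookkeeping is routine.
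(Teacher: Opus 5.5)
Your proposal is correct and follows the paper's proof. Both argue by induction on the number of rounds; you start at $K=0$ rather than the paper's $K=1$, which is immaterial. Both then read the accumulated tilt $\exp(Kr/\beta)$ as the Boltzmann target of Proposition~\ref{prop:boltzmann} relative to $\pi_{\theta_0}$ at temperature $\beta/K$. Your normalizer identity $C_K(x)=\prod_{k<K}Z_k(x)=\E_{\pi_{\theta_0}}[e^{Kr/\beta}]$ and the mutual absolute continuity remark are correct details that the paper leaves implicit.
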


Theorem~\ref{thm:temperature} gives the exact path that one-shot reuse lacks.
Iteration accumulates reward tilts before future rollouts are drawn, so later
data come from a sharper policy than the initial reference. Along the
one-parameter path
\(\pi_a(y)\propto\pi_{\theta_0}(y)\exp(a r(y))\), the exact \(K\)-round
iterate is \(\pi_{K/\beta}\), and
\(d\E_{\pi_a}[r]/da=\Var_{\pi_a}(r)\ge0\). Thus a nonzero initial probability
of high-reward completions is amplified rather than merely reweighted inside a
fixed dataset. Under a strict reward gap, the exact path also has a finite-round
concentration interpretation.

\textbf{Remark}.
More epochs and more rounds have different mathematical roles. Extra epochs fit
the same stored empirical target; a refreshed round changes the sampler that
generates the next rollout batch. The effective-temperature view is therefore an
interpretation of the exact KL-PMD path; it does not define a different
objective. Several moderate KL steps can reach the same final sharpness as one
aggressive tilt while
refreshing data between steps: to target effective temperature \(\beff\), exact
iteration uses \(K/\beta\approx1/\beff\), and the nearest integer \(K\) incurs
log-density error at most \(\dr/(2\beta)\) along the Boltzmann path
(Appendix~\ref{app:iterative-proofs}). The empirical signature is the trajectory
of useful-rollout hit rates or pass rates across rounds, not only the final
checkpoint accuracy; Section~\ref{sec:experiments} reports this signature
through checkpoint trajectories.

This path also gives the data-level answer to the one-shot support barrier.
Exact KL-PMD raises the probability of a reward-separated useful set before
later rollouts are drawn, so a fixed rollout budget is spent under increasingly
better samplers instead of entirely under \(\pi_{\theta_0}\). The same
exponential-tilting principle appears in adaptive importance sampling and
cross-entropy rare-event simulation, but here the tilted sampler is the
KL-regularized RLVR mirror step \citep{rubinstein2004cross}. Refresh improves
coverage only after the useful set has positive initial support, so it extends
the one-shot regime without pretending to solve zero-support prompts.
For a useful set \(A(x)\), write \(p_k(x)\triangleq\pi_{\theta_k}(A(x)\mid x)\)
under the exact iterates and
\begin{equation}
  \underline p_k(x)
  \triangleq
  \frac{1}
  {1+\frac{1-p_0(x)}{p_0(x)}\exp(-k\gamma(x)/\beta)} .
\label{eq:refresh-lower-probability}
\end{equation}

\begin{corollary}[Budget-Matched Adaptive Refresh Coverage Gain]
\label{cor:adaptive-coverage}
Fix a prompt \(x\) and a useful set \(A(x)\). Assume
\(0<p_0(x)\triangleq\pi_{\theta_0}(A(x)\mid x)<1\) and a reward separation
\(\inf_{A}r-\sup_{A^c}r\ge\gamma(x)>0\).
Under the exact iterates in Theorem~\ref{thm:temperature}, \(p_k(x)\ge p_0(x)\)
and, for \(P_{\ntxt{refresh}}(K,N)\triangleq
1-\prod_{k=0}^{K-1}(1-p_k(x))^N\) and
\(P_{\ntxt{one}}(K,N)\triangleq1-(1-p_0(x))^{KN}\),
\begin{equation}
  P_{\ntxt{refresh}}(K,N)
  \ge
  P_{\ntxt{one}}(K,N).
  \label{eq:adaptive-coverage-hit}
\end{equation}
For \(K>1\), the inequality is strict.
\end{corollary}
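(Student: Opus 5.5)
The argument has three steps. First I would write the exact iterates in closed form, then show that each $p_k(x)$ is at least $p_0(x)$ (with a strict gain at $k=1$), and finally compare the miss probabilities of the two sampling schemes factor by factor.

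\emph{Closed form and odds.} By Theorem~\ref{thm:temperature}, $\pi_{\theta_k}(y\mid x)\propto\pi_{\theta_0}(y\mid x)\exp(kr(x,y)/\beta)$. Write $a=\inf_A r$ and $b=\sup_{A^c}r$, so that $a-b\ge\gamma(x)>0$. The odds of $A$ under the $k$-th iterate are then
\begin{equation*}
  \frac{p_k}{1-p_k}
  =\frac{\E_{\pi_{\theta_0}}\!\left[\mathbf 1_A e^{kr/\beta}\right]}{\E_{\pi_{\theta_0}}\!\left[\mathbf 1_{A^c} e^{kr/\beta}\right]}
  \ge\frac{p_0e^{ka/\beta}}{(1-p_0)e^{kb/\beta}}
  \ge\frac{p_0}{1-p_0}\,e^{k\gamma/\beta}.
\end{equation*}
Solving for $p_k$ gives exactly $p_k\ge\underline p_k(x)$ in \eqref{eq:refresh-lower-probability}.

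\emph{Monotonicity and strictness.} The odds bound gives $p_k\ge p_0$ for every $k$. For $k\ge1$ the factor $e^{k\gamma/\beta}$ is strictly greater than $1$, so $p_k>p_0$. The division is legitimate: the hypothesis $0<p_0<1$ makes both expectations in the odds ratio finite and positive, because $r$ is bounded. An alternative route is the derivative identity $d\E_{\pi_a}[r]/da=\Var_{\pi_a}(r)$, but the odds comparison is more direct.

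\emph{Hit-probability comparison.} Since $p_k\ge p_0$, each factor satisfies $(1-p_k)^N\le(1-p_0)^N$. Multiplying these nonnegative factors gives
\begin{equation*}
  \prod_{k=0}^{K-1}(1-p_k)^N\le(1-p_0)^{KN},
\end{equation*}
which is \eqref{eq:adaptive-coverage-hit} after taking complements. For $K>1$, the $k=1$ factor is strict, $(1-p_1)^N<(1-p_0)^N$, using $N\ge1$. The remaining factors are positive because $p_k<1$: the hypothesis $p_0<1$ means $A^c$ has positive initial mass, and exponential tilting preserves that. Hence the product inequality, and so the hit-probability inequality, is strict.

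The only delicate point is this strictness bookkeeping: requiring $p_0<1$ so that the product factors stay nonzero, and $N\ge1$. The rest is routine algebra.
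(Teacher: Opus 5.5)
Your proposal is correct and follows essentially the same route as the paper. Both arguments lower-bound the odds of \(A\) under the closed-form iterate by \(\frac{p_0}{1-p_0}e^{k\gamma/\beta}\), deduce \(p_k\ge p_0\) (strictly for \(k\ge1\)), and then compare \(\prod_k(1-p_k)^N\le(1-p_0)^{KN}\) factor by factor; your strictness bookkeeping (\(N\ge1\) and \(p_k<1\)) just makes explicit details the paper leaves implicit.
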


\textbf{Remark}.
Corollary~\ref{cor:adaptive-coverage} formalizes the data-level advantage of
iteration over one-shot reuse, not just an optimization distinction. The
comparison is budget-matched at \(KN\) rollouts: one-shot spends all samples
under \(\pi_{\theta_0}\),
whereas refresh spends later samples under policies whose useful-set probability
has increased. Refresh does not create zero support. If \(p_0(x)=0\), neither
exact tilting nor fixed-reference one-shot sampling can hit \(A(x)\). With
nonzero initial support and a reward gap, however, each exact refresh increases
the chance that future rollouts contain the useful set. Iterative \bolt{} is
therefore useful precisely when fixed-reference coverage is weak but not absent.
Under the same reward gap, Corollary~\ref{cor:reward-gap-hit-rate} expands the
hit probability as
\begin{equation}
  P_{\ntxt{refresh}}(K,N)
  \ge
  1-\exp\!\left[
  -N\sum_{k=0}^{K-1}
  \frac{1}
  {1+\frac{1-p_0(x)}{p_0(x)}e^{-k\gamma(x)/\beta}}
  \right].
\label{eq:reward-gap-refresh-hit-main}
\end{equation}
The bound shows the mechanism: larger reward gaps \(\gamma(x)\), smaller
per-round temperature \(\beta\), and successful early refreshes increase later
sampling probabilities, whereas zero initial support \(p_0(x)=0\) leaves the
bound vacuous.

\textbf{Remark}.
The exact coverage gain survives approximate refresh as long as the actual
sampler remains close to the exact KL-PMD sampler. If the round-\(k\) sampler is
within total variation \(\eta_k(x)\) of the exact iterate, the lower bound above
loses only this drift:
\begin{equation}
  \Pr(\text{hit }A(x)\text{ at least once})
  \ge
  1-\exp\!\left(
  -N\sum_{k=0}^{K-1}\left(\underline p_k(x)-\eta_k(x)\right)_+
  \right).
\label{eq:approx-refresh-hit-main}
\end{equation}
A KL drift bound
\(\nu_k(x)\triangleq
\KL(q_{k+1}^{\ntxt{ex}}(\cdot\mid x)\|\pi_{\theta_{k+1}}(\cdot\mid x))\)
gives \(\eta_k(x)\le\sqrt{\nu_k(x)/2}\) by Pinsker. Thus, if refreshed sampling
does not improve hit rates, the issue is either insufficient exact-path support
growth or too much drift from the exact sampler. The binary verifier
specialization, strict top-set concentration rate, and effective-temperature
lattice are auxiliary interpretations; Appendix~\ref{app:iterative-proofs}
records them in Equations~\eqref{eq:binary-refresh-prob},
\eqref{eq:exponential-concentration}, and~\eqref{eq:ess-temperature-cost}.

Approximate weighted-SFT inner solves affect iteration through one quantity:
drift from the exact mirror step. The exact path remains unchanged; finite
errors enter only through the certified drift budget.
For each round, write \(q_k\triangleq\pi_{\theta_k}\) and
\(q_{k+1}^{\ntxt{ex}}(y\mid x)\propto q_k(y\mid x)\exp(r(x,y)/\beta)\) for the
exact refreshed Boltzmann target. For a comparator \(\pi^\dagger\), a certified
drift \(\varepsilon_k^\dagger\) satisfies
\begin{equation}
  \E_x\left[
  \KL\!\left(\pi^\dagger(\cdot\mid x)\,\middle\|\,\pi_{\theta_{k+1}}(\cdot\mid x)\right)
  -
  \KL\!\left(\pi^\dagger(\cdot\mid x)\,\middle\|\,q_{k+1}^{\ntxt{ex}}(\cdot\mid x)\right)
  \right]
  \le
  \varepsilon_k^\dagger .
  \label{eq:certified-comparator-drift}
\end{equation}

\begin{proposition}[Finite-Sample Drift Certificate for Refreshed \bolt{}]
\label{thm:finite-inexact-iterative}
If the certified drifts satisfy \eqref{eq:certified-comparator-drift}, then
\begin{equation}
\begin{aligned}
  &{\textstyle\sum_{k=0}^{K-1}}
  \E_x\!\left[
  \langle r(x,\cdot),\pi^\dagger(\cdot\mid x)-q_k(\cdot\mid x)\rangle
  \right]\\
  &\le
  \beta\E_x\!\left[
  \KL\!\left(\pi^\dagger(\cdot\mid x)\,\middle\|\,q_0(\cdot\mid x)\right)
  \right]
  +\frac{K\dr^2}{8\beta}
  +\beta{\textstyle\sum_{k=0}^{K-1}}\varepsilon_k^\dagger .
\end{aligned}
\label{eq:finite-inexact-iterative}
\end{equation}
\end{proposition}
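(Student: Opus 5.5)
The argument is the standard one-step mirror-descent inequality for an exact exponential-weights step, telescoped over rounds. The drift certificate \eqref{eq:certified-comparator-drift} is used only to replace the exact iterate $q_{k+1}^{\ntxt{ex}}$ by the realized policy $q_{k+1}=\pi_{\theta_{k+1}}$ in the potential.

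Fix a prompt $x$ and a round $k$, and abbreviate $r=r(x,\cdot)$ and $\KL(\pi^\dagger\|q)=\KL(\pi^\dagger(\cdot\mid x)\|q(\cdot\mid x))$. We may assume $\KL(\pi^\dagger\|q_0)<\infty$ in expectation, since otherwise the bound is trivial.

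\textbf{Step 1: exact-step identity.} Write $Z_k(x)=\E_{q_k}[e^{r/\beta}]$, so that $\log(q_{k+1}^{\ntxt{ex}}/q_k)=r/\beta-\log Z_k$. Integrating against $\pi^\dagger$ gives
\[
\KL(\pi^\dagger\|q_k)-\KL(\pi^\dagger\|q_{k+1}^{\ntxt{ex}})=\tfrac1\beta\langle r,\pi^\dagger\rangle-\log Z_k .
\]
Absolute continuity is not an issue: $\pi^\dagger\ll q_k$ is inherited inductively from finiteness of the KL terms.

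\textbf{Step 2: Hoeffding's lemma.} Since $r\in[r_{\min},r_{\max}]$ has range $\dr$, Hoeffding's lemma gives
\[
\log Z_k=\log\E_{q_k}[e^{r/\beta}]\le \tfrac1\beta\langle r,q_k\rangle+\tfrac{\dr^2}{8\beta^2}.
\]
Combining with Step 1 and multiplying by $\beta$:
\[
\langle r,\pi^\dagger-q_k\rangle\le \beta\bigl[\KL(\pi^\dagger\|q_k)-\KL(\pi^\dagger\|q_{k+1}^{\ntxt{ex}})\bigr]+\tfrac{\dr^2}{8\beta}.
\]

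\textbf{Step 3: swap in the realized iterate and telescope.} Rewrite $-\KL(\pi^\dagger\|q_{k+1}^{\ntxt{ex}})$ as
\[
-\KL(\pi^\dagger\|q_{k+1})+\bigl[\KL(\pi^\dagger\|q_{k+1})-\KL(\pi^\dagger\|q_{k+1}^{\ntxt{ex}})\bigr].
\]
Take $\E_x$ and bound the bracket by $\varepsilon_k^\dagger$ using \eqref{eq:certified-comparator-drift}. Summing over $k=0,\dots,K-1$ makes the potentials telescope to $\beta\E_x[\KL(\pi^\dagger\|q_0)]-\beta\E_x[\KL(\pi^\dagger\|q_K)]$. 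Dropping the nonpositive last term yields \eqref{eq:finite-inexact-iterative}.

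\textbf{Main obstacle: measure-theoretic bookkeeping, not the inequality chain.} Two points need care.
\begin{itemize}
\item \emph{Finite potentials.} Each $\KL(\pi^\dagger\|q_k)$ must be finite so that the subtractions are legitimate. Boundedness of $r$ makes $q_{k+1}^{\ntxt{ex}}$ and $q_k$ mutually absolutely continuous with log-ratio bounded by $\dr/\beta$. The certificate \eqref{eq:certified-comparator-drift} then keeps $\KL(\pi^\dagger\|q_{k+1})$ finite whenever $\KL(\pi^\dagger\|q_{k+1}^{\ntxt{ex}})$ is finite, which propagates finiteness from round to round.
\item \emph{Interchanging $\E_x$ with the sum.} This is justified by integrability of the bounded reward terms together with the finite expected potentials.
\end{itemize}
I would state this finiteness as an inductive claim before telescoping.
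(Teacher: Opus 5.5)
Your proposal is correct and follows the paper's proof: the exact mirror-step identity for $\KL(\pi^\dagger\|q_{k+1}^{\ntxt{ex}})$, Hoeffding's lemma applied to $\log\E_{q_k}[e^{r/\beta}]$, substitution of the realized iterate through the drift certificate, and telescoping of the KL potential. You take $\E_x$ before invoking \eqref{eq:certified-comparator-drift}, which is the precise way to use a prompt-averaged drift bound; the paper's ``apply prompt-wise, then average'' phrasing glosses over this point.
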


\textbf{Remark}.
The drift term in \eqref{eq:certified-comparator-drift} is not an abstract
oracle quantity. A standard local inner-training certificate converts empirical
weighted-SFT excess loss into the same PMD drift budget: under local quadratic
growth \(\mu_k\), log-density Lipschitz constant \(\Lambda_k\), empirical-to-exact
log-density gap \(\zeta_k\), and inner excess loss \(\xi_k\),
\begin{equation}
  \varepsilon_k^\dagger
  \le
  \Lambda_k\sqrt{\frac{2\xi_k}{\mu_k}}+\zeta_k .
\label{eq:inner-training-drift-main}
\end{equation}
This is the iterative analogue of \(\epsopt\) in the one-shot bound: training
longer helps only insofar as it reduces this drift. Changing the rollout count,
normalizer estimator, or optimizer changes the drift budget; it does not change
the exact KL-PMD path.

For the forward-KL sufficient condition, let
\(F_k(\theta)\triangleq
\E_x[\KL(q_{k+1}^{\ntxt{ex}}(\cdot\mid x)\|\pitheta(\cdot\mid x))]\). Let
\(\Delta_{\ntxt{gen},k}\), \(\Delta_{\ntxt{norm},k}\), and \(\epsopt^{(k)}\)
denote the round-\(k\) empirical-process, normalizer, and empirical excess-loss
components.

\begin{corollary}[Forward-KL Drift Certificate]
\label{cor:forward-kl-drift}
If comparator drift is locally controlled by the forward target error,
\(\varepsilon_k^\dagger\le\kappa_kF_k(\theta_{k+1})\), then
Proposition~\ref{thm:finite-inexact-iterative} holds with
\begin{equation}
  \varepsilon_k^\dagger
  \le
  \kappa_k
  \left(
  \inf_{\theta\in\Theta}F_k(\theta)
  +2\Delta_{\ntxt{gen},k}
  +2\Delta_{\ntxt{norm},k}
  +\epsopt^{(k)}
  \right).
  \label{eq:forward-kl-drift}
\end{equation}
\end{corollary}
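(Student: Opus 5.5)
The corollary has two parts. The first is that Proposition~\ref{thm:finite-inexact-iterative} applies with the stated drifts, which is immediate. The second is that $F_k(\theta_{k+1})$ is bounded by the displayed residual sum. For the second part I would reuse, round by round, the forward-KL chain behind Theorem~\ref{thm:e2e}, but stop before the reverse-KL transfer step. The hypothesis $\varepsilon_k^\dagger\le\kappa_kF_k(\theta_{k+1})$ plays the role that $\kappa_\rho$ played there.

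Fix a round $k$ and condition on $q_k=\pi_{\theta_k}$. Fresh rollouts drawn from $q_k$ are then independent reference rollouts for that round. Applying Theorem~\ref{thm:boltzmann-projection} with $q_k$ as the reference gives the population loss
\[
\lossPop_k(\theta)=\E_{x,y\sim q_k}\!\left[w_k\ell_\theta\right],
\qquad
w_k=\frac{e^{r/\beta}}{Z_k},
\]
and it satisfies $\lossPop_k(\theta)=F_k(\theta)+c_k$, where $c_k=\E_x[H(q_{k+1}^{\ntxt{ex}})]$ does not depend on $\theta$. This uses Proposition~\ref{prop:weighted-sft-target} with $\bar w=1$. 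Differences of $\lossPop_k$ are therefore differences of $F_k$.

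Next I define the round-$k$ oracle and empirical losses $\lossOracle^{(k)}$ and $\lossHat^{(k)}$ as in \eqref{eq:e2e-empirical-losses}, with residuals $\Delta_{\ntxt{gen},k}$, $\Delta_{\ntxt{norm},k}$ and $\epsopt^{(k)}$. Writing $\lossHat$ for $\lossHat^{(k)}$ and $\lossOracle$ for $\lossOracle^{(k)}$, the standard three-step sandwich holds for any $\theta\in\Theta$:
\[
\begin{aligned}
\lossPop_k(\theta_{k+1})
&\le \lossHat(\theta_{k+1})+\Delta_{\ntxt{gen},k}+\Delta_{\ntxt{norm},k}\\
&\le \lossHat(\theta)+\epsopt^{(k)}+\Delta_{\ntxt{gen},k}+\Delta_{\ntxt{norm},k}\\
&\le \lossPop_k(\theta)+2\Delta_{\ntxt{gen},k}+2\Delta_{\ntxt{norm},k}+\epsopt^{(k)}.
\end{aligned}
\]
The second inequality uses $\lossHat(\theta_{k+1})\le\inf_{\Theta}\lossHat+\epsopt^{(k)}\le\lossHat(\theta)+\epsopt^{(k)}$. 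Subtracting $c_k$ from both ends and taking the infimum over $\theta$ gives
\[
F_k(\theta_{k+1})\le\inf_\Theta F_k+2\Delta_{\ntxt{gen},k}+2\Delta_{\ntxt{norm},k}+\epsopt^{(k)}.
\]
Multiplying by $\kappa_k$ yields \eqref{eq:forward-kl-drift}. These $\varepsilon_k^\dagger$ bounds then satisfy \eqref{eq:certified-comparator-drift}, and summing them inside Proposition~\ref{thm:finite-inexact-iterative} finishes the argument.

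The main obstacle is the conditioning across rounds. The sampler $q_k$ depends on earlier data, so the residuals must be defined conditionally on the past. Any high-probability instantiation must then combine the rounds with a union bound over $k$ in place of a single one-shot concentration statement. A second point to check is that $F_k$ need not be finite. The argument above is unaffected when the infimum is infinite, and the constant $c_k$ is finite because the rewards are bounded.
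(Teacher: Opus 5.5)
Your proposal is correct and follows the paper's proof. The paper also reruns the empirical-risk chain of Theorem~\ref{thm:e2e} with target $q_{k+1}^{\ntxt{ex}}$ to obtain $F_k(\theta_{k+1})\le\inf_{\theta\in\Theta}F_k(\theta)+2\Delta_{\ntxt{gen},k}+2\Delta_{\ntxt{norm},k}+\epsopt^{(k)}$ and then multiplies by $\kappa_k$; your three-step sandwich just writes out the step the paper leaves implicit. One minor aside: bounded rewards alone do not make $c_k$ finite, since you also need $H(q_k)<\infty$, but that holds automatically on the finite space of length-bounded completions and does not affect the argument in the paper's setting.
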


\textbf{Remark}.
The forward-KL statement is a sufficient way to certify drift, not the PMD
identity itself. A sufficient local transfer condition is that, for
\(P_k\triangleq q_{k+1}^{\ntxt{ex}}(\cdot\mid x)\) and
\(Q_k\triangleq\pi_{\theta_{k+1}}(\cdot\mid x)\), the comparator satisfies
\(d\pi^\dagger/dP_k\le C_k\) and the fitted policy remains in a local
density-ratio neighborhood
\(\sup_y|P_k(y)/Q_k(y)-1|\le\rho_k<1\). Then
\(\varepsilon_k^\dagger\le
C_k(1+\rho_k)(1-\rho_k)^{-1}
\sqrt{2F_k(\theta_{k+1})/(1-a_{\rho_k})}\), where
\(a_{\rho_k}\triangleq\rho_k/[3(1-\rho_k)^2]\). This condition only supplies a
route from forward-KL weighted-likelihood error to the comparator drift used in
Proposition~\ref{thm:finite-inexact-iterative}.

Corollary~\ref{cor:forward-kl-drift} separates the PMD identity from the finite
statistics used to upper-bound drift. Each refreshed round reuses the same
finite components as one-shot \bolt{}: approximation to the exact mirror
target, sampling generalization, normalizer estimation, and empirical
optimization. A stalled refresh round therefore has a structured
interpretation: the sampler may have failed to increase coverage, the empirical
target may be noisy, the normalizer may have rescaled the round badly, or the
inner solve may have drifted too far from the exact mirror step.

Appendix~\ref{app:iterative-proofs} gives the standard inexact-PMD regret and
inner-training sufficient conditions behind this drift accounting
(Proposition~\ref{thm:inexact-regret} and
Corollary~\ref{cor:inner-certificate}). Corollary~\ref{cor:adaptive-hp-iterative}
then applies the same accounting adaptively: round-\(k\) samples are fresh
conditional on the previous history, so conditional concentration plus a union
bound certifies the refreshed trajectory. In particular, if the
round-\(k\) generalization and normalizer deviations hold conditionally with
failure probabilities \(\delta_{\ntxt{gen},k}\) and
\(\delta_{\ntxt{norm},k}\), the refreshed PMD certificate holds with probability
at least
\(1-\sum_{k=0}^{K-1}(\delta_{\ntxt{gen},k}+\delta_{\ntxt{norm},k})\).
These appendix results complete the finite-data certificate without adding a
fourth main mechanism. The iterative result therefore leaves one chain: exact
refresh is KL-PMD, KL-PMD improves future coverage under nonzero support, and
finite inner solves are charged as drift from that exact path.

\section{Related Work and Positioning}
\label{sec:related}

\textbf{Online RLHF and RLVR.} Online post-training is the closest operational
setting because it optimizes a moving policy with freshly generated rollouts.
RLHF learns preference rewards \citep{christiano2017deep,ziegler2019humanpreferences},
applies them to summarization and instruction following
\citep{stiennon2020summarize,ouyang2022training}, and uses trust-region or
proximal policy updates to control KL drift
\citep{schulman2015trust,schulman2017proximal}. RLVR keeps the same
regularized policy-optimization form but replaces learned reward models with
automatic verifiers or rule-based rewards. GSM8K and MATH made mathematical
reasoning a central evaluation setting
\citep{cobbe2021gsm8k,hendrycks2021math}; DeepSeekMath and DeepSeek-R1
popularized GRPO-style RLVR \citep{shao2024deepseekmath,deepseek2025r1};
Kimi k1.5, T\"ulu 3, DAPO, and VAPO report large-scale systems, sampling, and
clipping refinements \citep{kimi2025k15,lambert2024tulu3,yu2025dapo,yue2025vapo};
and open studies such as SimpleRL-Zoo and Open-Reasoner-Zero examine RLVR
training behavior \citep{zeng2025simplerl,hu2025openreasonerzero}. These works
improve the online loop; they do not ask which fixed reference-rollout weighted
likelihood has the same population target as fixed-reference KL-regularized
RLVR. The fixed-reference scope is essential. The support analysis gives the
static side of recent pass@$k$ debates
\citep{yue2025rlvrboundary,wen2025implicit}: target-matched weighting can
concentrate probability on covered successes, but a support-restricted static
objective cannot create distribution-space support that the reference policy
never sampled.

\textbf{Static supervision, weighting, and preference objectives.} Static data
reuse and weighted likelihood are broad training forms, but their shared syntax
does not identify a shared policy target. STaR and ReST reuse generated
reasoning traces through self-training or reward filtering
\citep{zelikman2022star,gulcehre2023rest}, scaling studies train on generated
mathematical reasoning data \citep{yuan2023scaling}, and offline RL studies
fixed logged data under distribution shift through conservative value learning
or sequence modeling \citep{kumar2020cql,chen2021decision}. Reward-augmented
likelihood, advantage-weighted regression, and AWAC turn rewards or advantages
into weighted regression objectives
\citep{norouzi2016raml,peng2019awr,nair2020awac}. Preference objectives such as
DPO, KTO, ORPO, and SimPO derive supervised or reference-free losses from
preference modeling choices
\citep{rafailov2023dpo,ethayarajh2024kto,hong2024orpo,meng2024simpo}. Recent
weighted-SFT variants choose reward-, policy-, or data-ratio weights, including
VAR, SPR, Refit, and IWSFT
\citep{du2025var,zhang2024spr,mukherjee2025refit,qin2025iwsft}. \bolt{} is
closest to this static family, but its defining constraint is target matching:
under reference sampling, the induced policy has to equal the fixed-reference
Boltzmann target of KL-regularized RLVR. Thus two methods can both be weighted
SFT while optimizing different population policies because their
sampler-weight products differ.

\textbf{Regularized-RL geometry and training systems.} The geometric endpoint
used here is standard. Maximum causal entropy, control-as-inference, soft
actor-critic, relative-entropy policy search, and policy mirror descent all
connect exponentiated rewards, KL-regularized improvement, and proximal policy
updates
\citep{ziebart2010modeling,levine2018probabilistic,haarnoja2018sac,peters2010relative,zhan2021pmd}.
DPO also uses the closed-form optimum of a KL-regularized reward objective
relative to a reference policy \citep{rafailov2023dpo}. The systems motivation
is shared with parameter-efficient tuning and throughput work, including LoRA,
QLoRA, FlashAttention, HybridFlow, and DAPO
\citep{hu2022lora,dettmers2023qlora,dao2022flashattention,sheng2024hybridflow,yu2025dapo}.
The novelty boundary is therefore not the Boltzmann form or KL-PMD primitive
itself. The contribution is the reference-sampled projection bridge for RLVR:
the target-matching law for static weighted SFT, the finite one-shot gap paid by
fixed reference rollouts, and the interpretation of refreshed \bolt{} as KL
policy mirror descent rather than more optimization on the same stored data.

\section{Empirical Projection Evidence and Efficiency}
\label{sec:experiments}

The projection theory makes mechanism-level predictions, not a general
performance claim for RLVR. Large-scale RL comparisons are especially sensitive
to budgets, hyperparameters, implementation details, and random variation
\citep{patterson2024empirical}; the reported artifacts are single-run checkpoint
traces. The measurements therefore focus on the observable consequences of the
projection theory. Holding the reference-rollout source fixed while changing the
weight isolates the induced-target mechanism. Checkpoint dynamics show how far a
fixed reference-rollout dataset can be fit and how refreshed sampling changes the
best reachable checkpoint. Resource measurements report the cost consequence of
moving verifier scoring and prompt-wise normalization out of the optimization
loop. Appendix~\ref{app:experiments} gives full checkpoint curves and a
retention check.

\textbf{Protocol.} GSM8K sweeps use \(N=8\) rollouts per prompt
\citep{cobbe2021gsm8k}; the code sweep trains Qwen3.5-9B on AceCode-87K with
\(N=16\) and evaluates HumanEval \citep{chen2021evaluating}. Resource and
retention measurements use Qwen3-8B \citep{qwen3technicalreport} with LoRA rank
8, alpha 64 \citep{hu2022lora}, AdamW learning rate \(5\times10^{-6}\), weight
decay 0.1, warmup ratio 0.1, and cosine scheduling. MATH uses 12,000 training
prompts with maximum completion length 2048 \citep{hendrycks2021math}. Full
checkpoint curves and additional protocol details are in
Appendix~\ref{app:experiments}. The available logs do not include direct
prompt-level coverage, ESS, or normalizer-rescaling measurements. The finite
theory is therefore reflected here through target-weight contrasts,
fixed-objective saturation, refreshed-sampler gains, and optimization-time cost.

\subsection{Target Weighting and Optimization-Time Cost}
\label{subsec:target-weight-cost}

\begin{table}[!ht]
\centering
\small
\caption{Target-matching and resource evidence on Qwen3-8B. Accuracy is final
benchmark accuracy from the reported runs; parentheses give improvements over
the base model.
Refit uses the same reference-rollout sampler as \bolt{} but raw reward weights,
so the Refit--\bolt{} comparison isolates the target-weight change most
directly among the reported baselines. Time and peak memory are measured for the
reported runs.}
\label{tab:main-results}
\begin{tabular}{llccc}
\toprule
Benchmark & Method & Accuracy & Time & Peak memory \\
\midrule
\multirow{6}{*}{GSM8K}
& Base & 87.72 & -- & -- \\
& SFT & 88.40 (+0.68) & 2.85h & 57.59G \\
& VAR & 88.64 (+0.92) & 6.46h & 58.39G \\
& GRPO & 90.01 (+2.29) & 45.35h & 82.95G \\
& Refit & 89.23 (+1.51) & 6.96h & 57.24G \\
& \bolt{} & \textbf{90.67 (+2.95)} & 6.79h & 57.23G \\
\midrule
\multirow{6}{*}{MATH}
& Base & 55.80 & -- & -- \\
& SFT & 57.26 (+1.46) & 5.93h & 70.68G \\
& VAR & 60.17 (+4.37) & 12.82h & 71.24G \\
& GRPO & 61.37 (+5.57) & 55.68h & 83.96G \\
& Refit & 58.36 (+2.56) & 14.31h & 69.37G \\
& \bolt{} & \textbf{61.96 (+6.16)} & 13.79h & 68.96G \\
\bottomrule
\end{tabular}
\end{table}
\FloatBarrier

Table~\ref{tab:main-results} isolates the induced-target effect. Refit and
\bolt{} both train on reference-policy rollouts, but their sampler-weight
products induce different population targets: Refit uses raw reward weights,
whereas \bolt{} uses the empirical prompt-normalized Boltzmann density ratio.
The \bolt{} gains over Refit are 1.44 points on GSM8K and 3.60 points on MATH.
This is the empirical signature predicted by
Theorem~\ref{thm:boltzmann-target-characterization}: with the data source held
fixed, changing the induced target can change the fitted policy.

The resource columns show the systems consequence of the same two-phase
estimator. Against GRPO, \bolt{} reaches slightly higher final accuracy while
reducing measured training time by 85\% on GSM8K and 75\% on MATH, and peak
memory by 31\% and 18\%. These savings match Section~\ref{sec:method}:
reference-policy sampling, verifier scoring, and prompt-wise normalization have
moved out of the optimization-time loop. The comparison does not isolate every
finite-theory component. Exponentiation, prompt normalization, coverage, and
temperature are identified by the theory, but the current runs do not vary them
independently.

\subsection{One-Shot Replacement and Reference Refresh}
\label{subsec:one-shot-refresh}

\begin{table}[!ht]
\centering
\footnotesize
\setlength{\tabcolsep}{3.5pt}
\caption{One-shot and refreshed-sampler evidence. The table reports best
checkpoint accuracies from the recorded training sweeps, with optimizer step in
parentheses; full checkpoint curves are in
Appendix~\ref{app:checkpoint-curves}. The one-shot column reports the best
checkpoint reached from fixed reference rollouts, while the iterative column
reports the best checkpoint after sampler refresh.}
\label{tab:checkpoint-summary}
\begin{tabular}{lllcccc}
\toprule
Task & Model & $N$ & Base & Best non-\bolt{} & \bolt{} & Iter. \bolt{} \\
\midrule
GSM8K & Qwen3-0.6B & 8
& 48.29 & GRPO 53.17 (10k) & 54.44 (5k) & \textbf{55.46 (8k)} \\
GSM8K & Qwen3-8B & 8
& 87.72 & GRPO 90.01 (20k) & 90.67 (20k) & \textbf{91.69 (24k)} \\
GSM8K & Qwen3.5-9B & 8
& 92.03 & GRPO 93.81 (16k) & 95.12 (16k) & \textbf{96.39 (16k)} \\
HumanEval & Qwen3.5-9B & 16
& 89.39 & GRPO 90.75 (16k) & 92.39 (16k) & \textbf{94.13 (24k)} \\
\bottomrule
\end{tabular}
\end{table}
\FloatBarrier

With the target-matched objective fixed, Table~\ref{tab:checkpoint-summary}
separates one-shot replacement from sampler refresh. One-shot \bolt{} improves
over the best non-\bolt{} checkpoint in every reported sweep, indicating that
fixed reference rollouts are useful when the stored support already contains
reward-bearing completions. This is the covered regime of
Proposition~\ref{prop:coverage}: weighted likelihood can exploit stored
solutions, but it cannot remove the support barrier when the reference policy
does not sample them. Iterative \bolt{} then reaches best checkpoints 1.68--3.38
points above the best non-\bolt{} checkpoint, consistent with the KL-PMD
continuation in Theorem~\ref{thm:temperature}.
Appendix~\ref{app:checkpoint-curves} shows the corresponding learning curves:
one-shot \bolt{} often peaks and then drifts or plateaus, while refreshed
sampling pushes the best recorded checkpoint higher after the fixed-reference
objective has saturated. These learning dynamics are suggestive rather than a
budget-matched isolation of refresh as the only cause of the gain.

The empirical section therefore supports three scoped conclusions: the
prompt-normalized Boltzmann density-ratio weight changes the induced target in
the predicted direction, one-shot replacement works only in a covered
fixed-reference regime, and refreshing the sampler is the empirical operation
that realizes the KL-PMD continuation when the fixed objective saturates. The
full checkpoint grids and the secondary retention check are reported in
Appendix~\ref{app:experiments}.

\FloatBarrier

\section{Discussion and Scope}
\label{sec:discussion-scope}

The projection result identifies when a fixed reference-rollout dataset can
stand in for an online RLVR loop. The central condition is coverage. \bolt{} can
precompute verifier scores and prompt-normalized density-ratio weights, but it
cannot assign mass to completions that never appear under the reference policy.
If correct or near-correct completions have probability \(p_\gamma(x)\) close to
zero under \(\pref\), Proposition~\ref{prop:coverage} requires
\(\Omega(1/p_\gamma)\) rollouts per prompt. One-shot \bolt{} is therefore a
replacement for the covered fixed-reference regime. Sparse verifier success
under a weak reference policy requires better exploration, a stronger initial
reference policy, or refreshed sampling, not only more optimization on the same
stored rollouts.

Target matching also separates the population projection from the fitted neural
policy. In a realizable class, the prompt-normalized Boltzmann density-ratio
weight makes the weighted-SFT target coincide with the fixed-reference RLVR
target. Outside realizability, the fitted policy still pays approximation and
transfer costs: a one-sided forward-to-reverse KL bound certifies the value of a
fixed policy but does not make the forward-KL and reverse-KL projections close.
That stronger projection comparison requires the two-sided local regime in
Appendix~\ref{app:kl-local}, and the one-shot certificate in
Theorem~\ref{thm:e2e} pays the forward-KL approximation error explicitly.
Verifier quality enters at the same target level. If an imperfect verifier
changes every reward by at most \(\epsilon\), Appendix~\ref{app:additional-consequences}
shows that the induced Boltzmann target moves by sup-log-density radius at most
\(2\epsilon/\beta\), yielding true-objective target loss at most \(2\epsilon\).
BOLT therefore inherits the verifier's objective; it does not validate the
verifier itself.

The empirical results fall within this regime. The reported runs are single-run
measurements over a small set of Qwen model sizes, one main temperature, one or
two rollout counts, and one populated code benchmark. They show the
target-weight effect, the usefulness of one-shot replacement when stored support
is adequate, the gain from sampler refresh, and the optimization-time cost
reduction, but they do not identify the limiting finite term in every setting.
Broader claims require direct coverage measurements,
\(\Zhat_N(x)\) concentration checks, sweeps over \(N\) and \(\beta\),
hardware-normalized cost reporting, broader code and out-of-domain evaluations,
and budget-matched refresh ablations. Those measurements would determine where
one-shot replacement is appropriate and where KL policy mirror descent through
refresh is the better computational path.

The same scope governs deployment. Moving generation, verifier scoring, and
reference-model evaluation off the optimization path can lower the compute
barrier for adapting models to verifiable tasks. Lower cost is not itself a
safety guarantee: a cheaper verifier-optimized pipeline can also amplify harmful
objectives, brittle shortcut rewards, or reward misspecification. Practical use
therefore requires task screening, verifier stress tests, monitoring for reward
hacking, and evaluations beyond the training reward.

\bibliographystyle{plainnat}
\bibliography{workspace/reference}

\appendix
\section{Weighted-SFT Induced-Target Details}
\label{app:unified}

Weighted SFT methods can look identical at the optimizer level because they all
minimize a weighted log likelihood. The policy being fit, however, is fixed
before optimization begins. The sampler supplies support, the weights assign
mass on that support, and the product determines the induced target. Expanding
this algebra gives the main-text comparisons a policy-level interpretation:
\bolt{} is compared by the
sampler--weight product it induces, not by the shared fact that it uses a
supervised fine-tuning optimizer. The same distinction explains why raw-reward
weighting can use the same reference rollout set while fitting a different
target policy.

\begin{proof}[Proof of Proposition~\ref{prop:weighted-sft-target}]
Use
$w(x,y)=\bar w(x)\tilde{\pi}_w(y\mid x)/q(y\mid x)$ and substitute into the
weighted negative log-likelihood. The conditional loss becomes
$\bar w(x)\E_{\tilde{\pi}_w}\left[-\log\pitheta(y\mid x)\right]$, which is
$\bar w(x)[\KL(\tilde{\pi}_w\|\pitheta)+H(\tilde{\pi}_w)]$.
\end{proof}

\begin{proof}[Proof of Theorem~\ref{thm:boltzmann-target-characterization}]
If $\tilde{\pi}_w=\pistar$, then every set with zero $q$-mass also has zero
$\tilde{\pi}_w$-mass and therefore zero $\pistar$-mass, giving
$\pistar\ll q$. On the support of $q$,
\begin{equation}
  \frac{q(y\mid x)w(x,y)}{\bar w(x)}
  =
  \pistar(y\mid x),
\end{equation}
which gives the displayed density-ratio identity. Conversely, substituting that
identity into
$q(y\mid x)w(x,y)/\bar w(x)$ gives $\pistar(y\mid x)$. The
reference-sampled specialization $q=\pref$ cancels the reference-policy ratio
and gives the \bolt{} weight up to prompt scale.
\end{proof}

\begin{proof}[Proof of Corollary~\ref{cor:irreducible-target-gap}]
Apply the reverse-KL value identity \eqref{eq:reverse-kl-identity} with
$\pi=\tilde{\pi}_w$.
\end{proof}

The algebra gives a direct way to read the baselines in the experiments. For
\bolt{}, \(q\triangleq\pref\),
\(w(x,y)\triangleq\exp(r(x,y)/\beta)/Z(x)\), \(\bar w(x)=1\), and
\(\tilde{\pi}_w=\pistar\) exactly. A Refit-style baseline can hold the sampler
fixed at \(q=\pref\) but use raw reward weights \(w(x,y)\triangleq r(x,y)\);
the original Refit objective is written for a logged behavior policy
\(\pi_0\). In either case, raw reward weighting induces a reward-weighted
logged distribution rather than the Boltzmann target policy. Demonstration-based
methods fit an implicit target determined by curated samples, so their mismatch
with fixed-reference RLVR is governed by both the available curated support and
the distance between the curated target and the KL-regularized Boltzmann target
policy.

\section{Boltzmann Projection Proofs}
\label{app:distribution-proofs}

The projection chain has one role: it identifies the policy targeted by a
faithful static weighted-SFT objective. The fixed-reference reward--KL objective
selects the Boltzmann target policy; the forward projection onto that target
becomes weighted maximum likelihood under reference rollouts; and the
induced-target condition forces the Boltzmann density-ratio weight. The support
condition remains explicit throughout the proofs because it is the first place
where reference-policy coverage enters the algorithm: if \(\pref\) assigns zero
mass to a completion, neither the Boltzmann target nor a reference-sampled
estimator can recover it.

\begin{proof}[Proof of Proposition~\ref{prop:boltzmann}]
Fix $x$ and write $\mu\triangleq\pref(\cdot\mid x)$. Any feasible policy
$\pi(\cdot\mid x)\ll\mu$ has density $f=d\pi/d\mu$. Let
$Z(x)=\int \exp(r(x,y)/\beta)\,d\mu(y)$ and define
$f^*(y)=\exp(r(x,y)/\beta)/Z(x)$. Since $r$ is bounded, $Z(x)$ is finite and
positive. The prompt-wise objective can be rewritten as
\[
  \int f(y)r(x,y)\,d\mu(y)
  -
  \beta\int f(y)\log f(y)\,d\mu(y)
  =
  \beta\log Z(x)
  -
  \beta\int f(y)\log\frac{f(y)}{f^*(y)}\,d\mu(y).
\]
The final term is $\beta\KL(\pi\|\pistar)$, so the objective is maximized
uniquely at $f=f^*$ almost surely. The same display gives, after averaging over
prompts, the reverse-KL value identity
\begin{equation}
  \objRL(\pistar)-\objRL(\pi)
  =
  \beta\,\E_x\left[
  \KL\!\left(\pi(\cdot\mid x)\,\middle\|\,\pistar(\cdot\mid x)\right)
  \right].
  \label{eq:reverse-kl-identity}
\end{equation}
\end{proof}

\begin{proof}[Proof of Theorem~\ref{thm:boltzmann-projection}]
With $q=\pref$ and
$w(x,y)=\exp(r(x,y)/\beta)/Z(x)$,
$\bar w(x)=\E_{\pref}[w(x,y)]=1$. Proposition~\ref{prop:weighted-sft-target}
therefore gives
\begin{equation}
  \tilde{\pi}_w(y\mid x)
  =
  \pref(y\mid x)\frac{\exp(r(x,y)/\beta)}{Z(x)}
  =
  \pistar(y\mid x).
\end{equation}
For fixed $x$,
\begin{equation}
  \KL(\pistar\|\pitheta)
  =
  -H(\pistar)
  -\E_{y\sim\pistar(\cdot\mid x)}
  \left[\log\pitheta(y\mid x)\right].
\end{equation}
The entropy term does not depend on $\theta$. Converting the expectation under
$\pistar$ to one under $\pref$ using \eqref{eq:boltzmann-policy} gives
\begin{equation}
  \E_{y\sim\pistar}\left[\log\pitheta(y\mid x)\right]
  =
  \E_{y\sim\pref}\left[
  \frac{\exp(r(x,y)/\beta)}{Z(x)}\log\pitheta(y\mid x)
  \right].
\end{equation}
Negating gives the weighted maximum-likelihood objective.
\end{proof}

The preceding proof fixes the target policy. The remaining population question
is whether fitting that target by forward KL also certifies the reverse-KL value
used by RLVR. In a realizable model class, the direction difference disappears
because both divergences reach zero at \(\pistar\). Outside realizability, the
paper needs a local transfer condition rather than a global equivalence claim.
Corollary~\ref{cor:realizability} states the clean boundary, and
Appendix~\ref{app:kl-local} gives the local comparison used by the finite
one-shot theorem.

\begin{corollary}[Realizability]
\label{cor:realizability}
Let $\Theta$ be the model class and define
$F(\theta)\triangleq\E_x[
\KL(\pistar(\cdot\mid x)\,\|\,\pitheta(\cdot\mid x))]$.
If $\pistar(\cdot\mid x)\in\{\pitheta(\cdot\mid x):\theta\in\Theta\}$ for
almost every $x$, then every global minimizer of $F$ satisfies
$\pitheta=\pistar$ almost surely and therefore maximizes $\objRL$.
\end{corollary}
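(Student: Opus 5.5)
The argument follows directly from nonnegativity of KL together with the reverse-KL value identity \eqref{eq:reverse-kl-identity} established in the proof of Proposition~\ref{prop:boltzmann}. First I will show that $\inf_{\theta\in\Theta}F(\theta)=0$ and that this value is attained. By hypothesis, for almost every $x$ there is a parameter reproducing $\pistar(\cdot\mid x)$. I read the realizability assumption in its natural sense: a single $\theta^\star\in\Theta$ satisfies $\pi_{\theta^\star}(\cdot\mid x)=\pistar(\cdot\mid x)$ for almost every $x$. If the statement is meant prompt-wise only, I will note explicitly that a common parameter is needed for $F$ to attain zero, since $F$ averages over $x$ with one $\theta$. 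Given such a $\theta^\star$, we have $\KL(\pistar\|\pi_{\theta^\star})=0$ almost surely, so $F(\theta^\star)=0$. Because each summand $\KL(\pistar(\cdot\mid x)\|\pitheta(\cdot\mid x))$ is nonnegative, $F\ge0$, and therefore $\min F=0$.

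Next, let $\theta$ be any global minimizer. Then $F(\theta)=0$, and since the integrand is nonnegative, $\KL(\pistar(\cdot\mid x)\|\pitheta(\cdot\mid x))=0$ for almost every $x$. The equality case of Gibbs' inequality then gives $\pitheta(\cdot\mid x)=\pistar(\cdot\mid x)$ as measures for almost every $x$. On a general standard Borel completion space, this means the Radon--Nikodym density $d\pistar/d\pitheta$ equals $1$ $\pistar$-a.s. Combined with both measures having total mass one, this forces equality of the measures.

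Finally, I conclude optimality for $\objRL$. Since $\pitheta=\pistar$ almost surely, we have $\pitheta\ll\pref$, and the identity \eqref{eq:reverse-kl-identity} yields $\objRL(\pistar)-\objRL(\pitheta)=\beta\E_x[\KL(\pitheta\|\pistar)]=0$. By Proposition~\ref{prop:boltzmann}, $\pistar$ maximizes $\objRL$ over all policies absolutely continuous with respect to $\pref$. Policies outside this class have objective $-\infty$. Hence $\pitheta$ maximizes $\objRL$.

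The only delicate step is the measurability and common-parameter reading of realizability in the first step. I will handle it by stating the assumption as the existence of a common $\theta^\star$. I will also observe that $F(\theta^\star)=0$ requires only almost-sure equality of the conditionals, so null sets of prompts do not matter.
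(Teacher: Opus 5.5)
Your proof is correct and follows the paper's own argument. Both use nonnegativity of forward KL, a zero minimum under realizability, and the equality case of Gibbs' inequality to get $\pi_\theta=\pistar$ almost surely, and then the reverse-KL value identity \eqref{eq:reverse-kl-identity} to conclude that the minimizer maximizes $\objRL$. You were right to insist on a common parameter $\theta^\star$: the paper's proof ("if $\pistar$ is in the model class") silently uses that reading, and the literal prompt-wise hypothesis is not enough, since with two prompts and two parameters that each match $\pistar$ on only one prompt, every $\theta$ has $F(\theta)>0$ and no global minimizer equals $\pistar$.
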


\begin{proof}[Proof of Corollary~\ref{cor:realizability}]
The forward KL is nonnegative and equals zero if and only if
$\pitheta(\cdot\mid x)=\pistar(\cdot\mid x)$ almost surely. If $\pistar$ is in
the model class, the global minimum of $F$ is zero. The reverse-KL identity in
\eqref{eq:reverse-kl-identity} then shows that the same policy maximizes the
original KL-regularized RLVR objective.
\end{proof}

\section{Local Forward/Reverse-KL Comparability}
\label{app:kl-local}

Outside realizability, target matching and value certification become separate
questions. Weighted likelihood fits the Boltzmann target by forward KL, whereas
the fixed-reference RLVR value gap is reverse KL to the same target. Nearby
distributions have the same second-order information-geometric geometry
\citep{amari2016information}, but that local fact is useful only when the fitted
policy remains in a controlled density-ratio neighborhood of \(\pistar\). The
boundary is exact: the local comparison supports value transfer for a fitted
policy, while a stronger two-sided condition is needed before comparing the
forward and reverse population projections under misspecification.

\begin{proposition}[Local Forward/Reverse KL Comparison]
\label{prop:local-kl}
For any prompt, let $P\triangleq\pistar(\cdot\mid x)$ and
$Q\triangleq\pitheta(\cdot\mid x)$. Define
$u(y)\triangleq P(y)/Q(y)-1$ and assume
$\rho\triangleq\sup_y|u(y)|<1$. Then
\begin{equation}
\begin{aligned}
  \KL(P\|Q)
  &=
  \frac12\chi^2(P\|Q)+R_f,
  &
  \KL(Q\|P)
  &=
  \frac12\chi^2(P\|Q)+R_r,
  \\
  |R_f|
  &\le
  \frac{\rho}{3(1-\rho)^2}\frac12\chi^2(P\|Q),
  &
  |R_r|
  &\le
  \frac{2\rho}{3(1-\rho)^3}\frac12\chi^2(P\|Q).
\end{aligned}
\label{eq:local-kl-comparison}
\end{equation}
Here
\(\chi^2(P\|Q)\triangleq\sum_y Q(y)(P(y)/Q(y)-1)^2\).
\end{proposition}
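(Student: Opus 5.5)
The plan is a direct third-order Taylor expansion of both divergences in the density-ratio deviation $u$, measured under $Q$. First I would record two facts that follow from $\rho<1$. The ratio $P/Q=1+u$ lies in $[1-\rho,1+\rho]\subset(0,2)$, so $P$ and $Q$ are mutually absolutely continuous and both KL terms are finite. And $\sum_y Q(y)u(y)=\sum_y P(y)-\sum_y Q(y)=0$. This centering identity is what allows a linear term to be added inside each divergence for free. On general completion spaces the same argument runs with $u=dP/dQ-1$ and integrals against $Q$; I will write sums as in the statement.

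For the forward direction, I would write
\[
\KL(P\|Q)=\sum_y Q(y)(1+u)\log(1+u)=\sum_y Q(y)\,\phi(u(y)),
\qquad \phi(u)\triangleq(1+u)\log(1+u)-u,
\]
where the second equality uses the centering identity. The derivatives are $\phi(0)=\phi'(0)=0$, $\phi''(u)=1/(1+u)$ and $\phi'''(u)=-1/(1+u)^2$. Taylor's theorem with Lagrange remainder then gives $\phi(u)=u^2/2+\phi'''(\xi)u^3/6$ for some $\xi$ between $0$ and $u$. Since $|\xi|\le\rho$, we have $|\phi'''(\xi)|\le(1-\rho)^{-2}$. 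Combined with $|u|\le\rho$, the remainder is at most $\rho u^2/(6(1-\rho)^2)=\frac{\rho}{3(1-\rho)^2}\cdot\frac{u^2}{2}$. Multiplying by $Q(y)$ and summing produces $\frac12\chi^2(P\|Q)$ as the main term and the stated bound on $R_f$.

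For the reverse direction, I would write
\[
\KL(Q\|P)=\sum_y Q(y)\bigl(-\log(1+u)\bigr)=\sum_y Q(y)\,\psi(u(y)),
\qquad \psi(u)\triangleq u-\log(1+u),
\]
again using the centering identity. Here $\psi(0)=\psi'(0)=0$, $\psi''(u)=(1+u)^{-2}$ and $\psi'''(u)=-2(1+u)^{-3}$. The same Lagrange-remainder step bounds the cubic term by $2\rho u^2/(6(1-\rho)^3)=\frac{2\rho}{3(1-\rho)^3}\cdot\frac{u^2}{2}$. Summing against $Q$ gives $\KL(Q\|P)=\frac12\chi^2(P\|Q)+R_r$ with the claimed bound.

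There is no deep obstacle. The one step that matters is inserting the $-u$ and $+u$ terms via $\E_Q[u]=0$, so that both integrands vanish to second order at $u=0$. The remaining care is in the constants. The bound on the third derivative is taken uniformly over $|\xi|\le\rho$, and it is attained at $\xi=-\rho$, which produces the $(1-\rho)^{-2}$ and $(1-\rho)^{-3}$ factors. That uniform bound is valid only because $\rho<1$ keeps $1+\xi$ bounded away from zero.
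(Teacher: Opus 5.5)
Your proposal is correct and follows essentially the same route as the paper's proof. Both use the centering identity $\sum_y Q(y)u(y)=0$ to kill the linear term, then bound the third-order Taylor remainders of $(1+u)\log(1+u)$ and $-\log(1+u)$ through $(1+\xi)^{-2}\le(1-\rho)^{-2}$ and $2(1+\xi)^{-3}\le 2(1-\rho)^{-3}$, and finish with $|u|^3\le\rho u^2$. Your explicit remark that $\rho<1$ forces mutual absolute continuity, so both divergences are finite, is a small but useful point that the paper leaves implicit.
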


\begin{proof}[Proof of Proposition~\ref{prop:local-kl}]
Write $P(y)=Q(y)(1+u(y))$ with $\sum_y Q(y)u(y)=0$. For the forward KL,
\begin{equation}
  \KL(P\|Q)=\sum_y Q(y)(1+u(y))\log(1+u(y)).
\end{equation}
The Taylor expansion
$(1+u)\log(1+u)=u+\frac12u^2+h_f(u)$ has
$h_f'''(u)=-(1+u)^{-2}$, so
$|h_f(u)|\le |u|^3/(6(1-\rho)^2)$. This yields the stated $R_f$ bound.
For reverse KL,
$-\log(1+u)=-u+\frac12u^2+h_r(u)$ with
$h_r'''(u)=-2(1+u)^{-3}$, giving
$|h_r(u)|\le |u|^3/(3(1-\rho)^3)$ and the stated $R_r$ bound.
\end{proof}

Proposition~\ref{prop:local-kl} is deliberately local. It shows that, when the
fitted policy is already close to \(\pistar\) in density ratio, forward and
reverse KL have comparable quadratic cores and controlled remainders. The
one-sided value certificate applies this comparison to one fitted policy. It does not
claim that minimizing forward KL over a misspecified class always gives the
same projection as minimizing reverse KL; that stronger statement needs both
candidate projections to remain in the same local regime.

\begin{proposition}[One-Sided Value Transfer for a Fitted Policy]
\label{prop:local-value-transfer}
Let \(R(\theta)\triangleq
\E_x[\KL(\pitheta(\cdot\mid x)\|\pistar(\cdot\mid x))]\) and
\(F(\theta)\triangleq
\E_x[\KL(\pistar(\cdot\mid x)\|\pitheta(\cdot\mid x))]\). If a fitted policy
\(\theta\) satisfies
\(\sup_y|\pistar(y\mid x)/\pitheta(y\mid x)-1|\le\rho<1\) almost surely, and
$a_\rho\triangleq\rho/(3(1-\rho)^2)<1$, then
\begin{equation}
  R(\theta)
  \le
  \kappa_\rho F(\theta),
  \qquad
  \kappa_\rho\triangleq
  \frac{1+2\rho/(3(1-\rho)^3)}{1-a_\rho}.
\label{eq:local-value-transfer}
\end{equation}
Consequently
$\objRL(\pistar)-\objRL(\pitheta)\le\beta\kappa_\rho F(\theta)$. A sufficient
log-ratio trust-region condition is
\(\sup_y|\log(\pitheta(y\mid x)/\pistar(y\mid x))|\le B<\log 2\), with
\(\rho=e^B-1\).
\end{proposition}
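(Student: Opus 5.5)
The plan is to apply Proposition~\ref{prop:local-kl} prompt by prompt, combine its two expansions to bound reverse KL by forward KL, and then average over prompts. Fix a prompt $x$, and set $P=\pistar(\cdot\mid x)$ and $Q=\pitheta(\cdot\mid x)$. The hypothesis $\sup_y|P/Q-1|\le\rho<1$ is exactly the assumption of Proposition~\ref{prop:local-kl}. That proposition writes both divergences around the common quadratic core $\frac12\chi^2(P\|Q)$, so the work reduces to comparing the two remainders against this core.

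\emph{Step 1 (lower bound on forward KL).} From the forward expansion and the bound on $R_f$,
\[
\KL(P\|Q)\ \ge\ (1-a_\rho)\,\tfrac12\chi^2(P\|Q),
\qquad a_\rho=\tfrac{\rho}{3(1-\rho)^2}.
\]
Since $a_\rho<1$ by assumption, we can invert this to get $\tfrac12\chi^2(P\|Q)\le \KL(P\|Q)/(1-a_\rho)$.

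\emph{Step 2 (upper bound on reverse KL).} The reverse expansion gives
\[
\KL(Q\|P)\ \le\ \Bigl(1+\tfrac{2\rho}{3(1-\rho)^3}\Bigr)\tfrac12\chi^2(P\|Q).
\]
Chaining this with Step 1 yields $\KL(Q\|P)\le\kappa_\rho\,\KL(P\|Q)$ pointwise in $x$, with $\kappa_\rho$ exactly as displayed. The bound holds almost surely in $x$ because the density-ratio condition does. Taking $\E_x$ then gives $R(\theta)\le\kappa_\rho F(\theta)$.

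\emph{Step 3 (value consequence).} The reverse-KL value identity \eqref{eq:reverse-kl-identity} states $\objRL(\pistar)-\objRL(\pitheta)=\beta R(\theta)$. Combining it with Step 2 gives $\objRL(\pistar)-\objRL(\pitheta)\le\beta\kappa_\rho F(\theta)$. The identity needs $\pitheta\ll\pref$. This holds because $\rho<1$ forces $Q(y)>0$ exactly where $P(y)>0$, and $P$ and $\pref$ are mutually absolutely continuous since the Boltzmann density is bounded between positive constants.

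\emph{Step 4 (log-ratio sufficient condition).} Suppose $|\log(\pitheta/\pistar)|\le B$. Then $P/Q\in[e^{-B},e^{B}]$, so $|P/Q-1|\le\max\{e^B-1,\,1-e^{-B}\}=e^B-1=\rho$. The requirement $B<\log2$ guarantees $\rho<1$. Whether $a_\rho<1$ then holds is a separate hypothesis that still has to be checked; it is only automatic for smaller $B$.

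The main obstacle is keeping the remainder bounds of Proposition~\ref{prop:local-kl} strictly relative to $\frac12\chi^2$. Its Taylor bound controls $|u|^3$, and we need $\sum_y Q(y)|u(y)|^3\le\rho\sum_y Q(y)u(y)^2$, which follows from $|u|\le\rho$. Once this is checked, Steps 1–2 are algebra. The only other subtlety is the measure-theoretic one: sums should be read as integrals over a general completion space, and the argument goes through unchanged in that setting.
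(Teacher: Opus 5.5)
Your proposal is correct and matches the paper's proof step for step. You apply Proposition~\ref{prop:local-kl} to lower-bound the forward KL by $(1-a_\rho)\tfrac12\chi^2(P\|Q)$ and upper-bound the reverse KL by $\bigl(1+\tfrac{2\rho}{3(1-\rho)^3}\bigr)\tfrac12\chi^2(P\|Q)$, average over prompts, invoke \eqref{eq:reverse-kl-identity}, and check $|u|\le e^B-1$. Your two extra observations are correct refinements that the paper leaves implicit: you verify $\pitheta\ll\pref$, and you note that $B<\log 2$ only secures $\rho<1$, whereas $a_\rho<1$ additionally needs $\rho<(7-\sqrt{13})/6\approx 0.566$.
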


\begin{proof}[Proof of Proposition~\ref{prop:local-value-transfer}]
For each prompt, Proposition~\ref{prop:local-kl} gives
\[
  \KL(P\|Q)
  \ge
  (1-a_\rho)\frac12\chi^2(P\|Q),
  \qquad
  \KL(Q\|P)
  \le
  (1+b_\rho)\frac12\chi^2(P\|Q),
\]
where
$a_\rho\triangleq\rho/(3(1-\rho)^2)$ and
$b_\rho\triangleq2\rho/(3(1-\rho)^3)$. Averaging over prompts yields
$R(\theta)\le ((1+b_\rho)/(1-a_\rho))F(\theta)$. The value statement follows
from \eqref{eq:reverse-kl-identity}.
If
$|\log(\pitheta(y\mid x)/\pistar(y\mid x))|\le B$, then
$|u(y)|=|\pistar(y\mid x)/\pitheta(y\mid x)-1|\le e^B-1$. Thus
$B<\log2$ implies $\rho=e^B-1<1$.
\end{proof}

For the Boltzmann target, the local condition has a concrete temperature
reading. Since
$\log(\pistar(y\mid x)/\pref(y\mid x))=r(x,y)/\beta-\log Z(x)$ and
$\log Z(x)\in[r_{\min}/\beta,r_{\max}/\beta]$, the target is a bounded
multiplicative tilt of the reference policy:
$\sup_y|\log(\pistar(y\mid x)/\pref(y\mid x))|\le\dr/\beta$, where
$\dr\triangleq r_{\max}-r_{\min}$. If a fitted policy also stays within a
sup-log-density radius $\tau$ of the reference policy, then
$\sup_y|\log(\pistar(y\mid x)/\pitheta(y\mid x))|\le\dr/\beta+\tau$. The
assumption $\rho<1$ is therefore implied by the explicit local regime
$\dr/\beta+\tau<\log 2$. This is intentionally a local statement: small
temperatures or large reward ranges can leave this regime before optimization
begins, which is why the main one-shot theorem pays an explicit forward
approximation term rather than claiming global forward/reverse projection
equivalence.

\textbf{Remark}.
A rare-action example shows why the local condition is necessary. On two
actions, let
$P_\epsilon=(\epsilon,1-\epsilon)$ and
$Q_\epsilon=(\delta_\epsilon,1-\delta_\epsilon)$ with
$\delta_\epsilon\downarrow0$ and
$\epsilon=\exp(-1/\delta_\epsilon^2)$. Then
$\KL(P_\epsilon\|Q_\epsilon)\to0$, but
$\KL(Q_\epsilon\|P_\epsilon)\to\infty$. Thus a small forward KL can still place
too much mass on a target-rare action, which is exactly the failure mode that
reverse-KL RLVR value penalizes.

The one-sided certificate is enough for Theorem~\ref{thm:e2e}, which evaluates
the learned policy after fitting. A projection-level comparison asks for more:
if the forward and reverse KL minimizers over a misspecified class are both
local to the Boltzmann target, then their values are comparable.
Corollary~\ref{cor:misspecified-local-projection} keeps this stronger claim
separate from the main text so that the misspecified case is not overstated.

\begin{corollary}[Local Projection Transfer under Misspecification]
\label{cor:misspecified-local-projection}
Choose population projections
$\theta_{\ntxt{F}}\in\argmin_{\theta\in\Theta}F(\theta)$ and
$\theta_{\ntxt{R}}\in\argmin_{\theta\in\Theta}R(\theta)$.
Suppose that both projections satisfy, almost surely with a common $\rho<1$,
\(\sup_y|\pistar(y\mid x)/\pitheta(y\mid x)-1|\le \rho\) for
\(\theta\in\{\theta_{\ntxt{F}},\theta_{\ntxt{R}}\}\).
Set
$a_\rho\triangleq\rho/(3(1-\rho)^2)$ and
$b_\rho\triangleq2\rho/(3(1-\rho)^3)$, assume
$a_\rho<1$ and $b_\rho<1$, and define
\(\Gamma_\rho\triangleq(1+b_\rho)(1+a_\rho)/[(1-a_\rho)(1-b_\rho)]\).
Then
\begin{equation}
  R(\theta_{\ntxt{F}})
  \le
  \Gamma_\rho R(\theta_{\ntxt{R}}),
  \qquad
  \objRL(\pi_{\theta_{\ntxt{R}}})-\objRL(\pi_{\theta_{\ntxt{F}}})
  \le
  \beta(\Gamma_\rho-1)R(\theta_{\ntxt{R}}).
\label{eq:misspecified-local-projection}
\end{equation}
\end{corollary}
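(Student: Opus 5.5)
The plan is to sandwich both projections between the forward and reverse KL using the pointwise two-sided comparison from Proposition~\ref{prop:local-kl}. Write $\chi(\theta)\triangleq\E_x[\tfrac12\chi^2(\pistar(\cdot\mid x)\|\pitheta(\cdot\mid x))]$.

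For any $\theta$ satisfying the common density-ratio condition with parameter $\rho$, Proposition~\ref{prop:local-kl} bounds the remainders prompt-wise. Averaging over $x$ then gives four inequalities:
\[
(1-a_\rho)\chi(\theta)\le F(\theta)\le(1+a_\rho)\chi(\theta),
\qquad
(1-b_\rho)\chi(\theta)\le R(\theta)\le(1+b_\rho)\chi(\theta).
\]
I apply these at both $\theta_{\ntxt{F}}$ and $\theta_{\ntxt{R}}$, which is legitimate because both satisfy the hypothesis with the same $\rho$. The assumptions $a_\rho<1$ and $b_\rho<1$ ensure that all the lower constants are positive, so the inequalities can be divided through.

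Next I chain the inequalities. The only optimality fact used is that $\theta_{\ntxt{F}}$ minimizes $F$ over $\Theta$, so $F(\theta_{\ntxt{F}})\le F(\theta_{\ntxt{R}})$. This gives
\[
R(\theta_{\ntxt{F}})\le(1+b_\rho)\chi(\theta_{\ntxt{F}})
\le\frac{1+b_\rho}{1-a_\rho}F(\theta_{\ntxt{F}})
\le\frac{1+b_\rho}{1-a_\rho}F(\theta_{\ntxt{R}})
\le\frac{(1+b_\rho)(1+a_\rho)}{1-a_\rho}\chi(\theta_{\ntxt{R}})
\le\Gamma_\rho R(\theta_{\ntxt{R}}).
\]
The final step uses $\chi(\theta_{\ntxt{R}})\le R(\theta_{\ntxt{R}})/(1-b_\rho)$, and this yields exactly the stated $\Gamma_\rho$. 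The optimality of $\theta_{\ntxt{R}}$ is not needed for this inequality; it only supports reading the bound as comparison to the best reverse-KL value.

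For the value statement, I apply the reverse-KL identity \eqref{eq:reverse-kl-identity} to each projection, giving $\objRL(\pistar)-\objRL(\pitheta)=\beta R(\theta)$. Subtracting the two instances,
\[
\objRL(\pi_{\theta_{\ntxt{R}}})-\objRL(\pi_{\theta_{\ntxt{F}}})=\beta\bigl(R(\theta_{\ntxt{F}})-R(\theta_{\ntxt{R}})\bigr)\le\beta(\Gamma_\rho-1)R(\theta_{\ntxt{R}}).
\]

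The main care point is applying Proposition~\ref{prop:local-kl} in the right direction for each inequality. The upper bound on $F$ requires $|R_f|\le a_\rho\cdot\tfrac12\chi^2$, and the lower bound on $R$ requires $|R_r|\le b_\rho\cdot\tfrac12\chi^2$. Both come directly from the absolute-value remainder bounds in that proposition.

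There are also measure-theoretic points to check. The pointwise inequalities must integrate over prompts; this holds because all quantities are nonnegative and the condition holds almost surely. In addition, $\chi(\theta_{\ntxt{R}})$ must be finite, which follows from $\rho<1$ since the density ratio is then bounded.
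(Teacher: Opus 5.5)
Your proposal is correct and follows the paper's proof essentially step for step. You average the two-sided remainder bounds of Proposition~\ref{prop:local-kl} into sandwiches of $F$ and $R$ around the prompt-averaged $\chi^2/2$, chain them through the single optimality fact $F(\theta_{\ntxt{F}})\le F(\theta_{\ntxt{R}})$ to obtain $\Gamma_\rho$, and finish with the reverse-KL value identity \eqref{eq:reverse-kl-identity}, exactly as the paper does; your remark that the optimality of $\theta_{\ntxt{R}}$ is never used also matches the paper's argument.
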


\begin{proof}[Proof of Corollary~\ref{cor:misspecified-local-projection}]
For a fixed
$\theta$, write
\[
  C(\theta)
  \triangleq
  \E_x\left[
  \chi^2\!\left(\pistar(\cdot\mid x)\,\middle\|\,\pitheta(\cdot\mid x)\right)/2
  \right].
\]
Proposition~\ref{prop:local-kl} gives, after averaging over prompts,
\[
  (1-a_\rho)C(\theta)\le F(\theta)\le(1+a_\rho)C(\theta)
\]
and
\[
  (1-b_\rho)C(\theta)\le R(\theta)\le(1+b_\rho)C(\theta),
  \qquad
  b_\rho\triangleq\frac{2\rho}{3(1-\rho)^3}.
\]
The lower bounds use $a_\rho<1$ and $b_\rho<1$. Applying the upper transfer to
$\theta_{\ntxt{F}}$, the optimality of $\theta_{\ntxt{F}}$ for $F$, and then the
lower transfer for $\theta_{\ntxt{R}}$ gives
\begin{align*}
  R(\theta_{\ntxt{F}})
  &\le
  \frac{1+b_\rho}{1-a_\rho}F(\theta_{\ntxt{F}})
  \le
  \frac{1+b_\rho}{1-a_\rho}F(\theta_{\ntxt{R}}) \\
  &\le
  \frac{(1+b_\rho)(1+a_\rho)}
       {(1-a_\rho)(1-b_\rho)}
  R(\theta_{\ntxt{R}})
  =
  \Gamma_\rho R(\theta_{\ntxt{R}}).
\end{align*}
Finally, by \eqref{eq:reverse-kl-identity},
\[
  \objRL(\pi_{\theta_{\ntxt{R}}})-\objRL(\pi_{\theta_{\ntxt{F}}})
  =
  \beta\left(R(\theta_{\ntxt{F}})-R(\theta_{\ntxt{R}})\right)
  \le
  \beta(\Gamma_\rho-1)R(\theta_{\ntxt{R}}),
\]
which proves the stated value-excess bound.
\end{proof}

\section{Reference Coverage and Normalizer Estimation}
\label{app:normalization}

Coverage and normalizer estimation both arise during rollout generation, but
they answer different questions. Coverage asks whether useful completions appear
in the stored set at all. Normalizer estimation asks whether the weights on
already observed completions are rescaled accurately. The distinction matters
because missing support is not a concentration error: a sharply estimated
normalizer on all-negative rollouts still leaves the target mass outside the
dataset. The estimator components therefore separate support, self-normalized
rescaling, and prompt allocation before they are combined in the one-shot
certificate.

The first estimator component treats the oracle density ratio
\(w=d\pistar/d\pref\) as known. Its variance is governed by the second moment of
that ratio, so the same weight that matches the RLVR target can reduce the
effective sample size when the target is far from the reference policy.

\begin{proposition}[ESS-Controlled Estimation Component]
\label{prop:ess-oracle}
Fix a prompt \(x\). Let \(y_1,\ldots,y_N\iid\pref(\cdot\mid x)\),
\(w(x,y)=d\pistar(\cdot\mid x)/d\pref(\cdot\mid x)\),
\(\mathcal C_2(x)\triangleq\E_{\pref}[w(x,y)^2]\), and
\(w_{\max}(x)\triangleq\sup_y w(x,y)\). For any loss
\(0\le\ell(y)\le L\), define
\(\mu\triangleq\E_{\pistar}[\ell(y)]\) and
\(\hat\mu_N\triangleq N^{-1}\sum_{n=1}^N w(x,y_n)\ell(y_n)\).
Then, with probability at least \(1-\delta\),
\begin{equation}
  |\hat\mu_N-\mu|
  \le
  L\sqrt{\frac{2\mathcal C_2(x)\log(2/\delta)}{N}}
  +
  \frac{2Lw_{\max}(x)\log(2/\delta)}{3N}.
  \label{eq:ess-oracle}
\end{equation}
\end{proposition}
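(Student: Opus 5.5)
The plan is to apply Bernstein's inequality to the i.i.d. summands \(X_n\triangleq w(x,y_n)\ell(y_n)\), \(y_n\sim\pref(\cdot\mid x)\). First, the estimator is unbiased. By the Radon--Nikodym identity \(w=d\pistar/d\pref\) from Proposition~\ref{prop:boltzmann},
\[
\E_{\pref}[w(x,y)\ell(y)]=\E_{\pistar}[\ell(y)]=\mu,
\]
so \(\E[\hat\mu_N]=\mu\). This is the same change of measure used in the proof of Theorem~\ref{thm:boltzmann-projection}.

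Second, record the two quantities Bernstein needs.
\begin{itemize}
\item \emph{Range.} Since \(0\le\ell\le L\) and \(0\le w\le w_{\max}(x)\), each \(X_n\in[0,Lw_{\max}(x)]\). Because \(\mu\) lies in the same interval, the centered variables satisfy \(|X_n-\mu|\le Lw_{\max}(x)\).
\item \emph{Variance.} \(\Var(X_n)\le\E_{\pref}[w^2\ell^2]\le L^2\E_{\pref}[w^2]=L^2\mathcal C_2(x)\). This is the step that makes \(\mathcal C_2(x)=1+\chi^2(\pistar\|\pref)\) the inverse ESS factor.
\end{itemize}

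Third, apply the two-sided Bernstein inequality to the mean of \(N\) i.i.d. variables with variance at most \(\sigma^2\) and centered range at most \(b\). In its inverted form it gives, with probability at least \(1-\delta\),
\[
|\hat\mu_N-\mu|\le\sqrt{\frac{2\sigma^2\log(2/\delta)}{N}}+\frac{2b\log(2/\delta)}{3N}.
\]
Here the \(2/\delta\) comes from a union bound over the two tails. Substituting \(\sigma^2=L^2\mathcal C_2(x)\) and \(b=Lw_{\max}(x)\) gives \eqref{eq:ess-oracle} exactly.

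The main obstacle is bookkeeping of the constants in the lower-order term. The standard tail bound \(\exp\bigl(-Nt^2/(2\sigma^2+2bt/3)\bigr)\) inverts, after solving the quadratic in \(t\) and using \(\sqrt{a+c}\le\sqrt a+\sqrt c\), to the coefficient \(2b/3\). If one instead centers with \(|X_n-\mu|\le 2Lw_{\max}\), the coefficient becomes \(4b/3\). The one-sided nonnegativity argument above avoids that doubling, and I would verify it carefully.

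Boundedness of \(w_{\max}\) is automatic here: Proposition~\ref{prop:boltzmann} gives \(w\le e^{\dr/\beta}\). The bound is nonetheless stated in terms of \(w_{\max}(x)\).
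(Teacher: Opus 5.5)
Your proposal is correct and follows the paper's proof: the paper also sets $X_n=w(x,y_n)\ell(y_n)$, bounds $\Var(X_n)\le L^2\mathcal C_2(x)$ and $0\le X_n\le Lw_{\max}(x)$, and applies Bernstein's inequality. You also check that both $X_n$ and $\mu$ lie in $[0,Lw_{\max}(x)]$, so $|X_n-\mu|\le Lw_{\max}(x)$; this is what justifies the $2/3$ coefficient, which the paper leaves implicit.
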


\begin{corollary}[Self-Normalized Prompt Perturbation]
\label{prop:snis-prompt}
Under Proposition~\ref{prop:ess-oracle}, let
\(\hat\mu_N^{\ntxt{sn}}\triangleq
\sum_{n=1}^N w(x,y_n)\ell(y_n)/\sum_{n=1}^N w(x,y_n)\). If
\(\hat Z_w\triangleq N^{-1}\sum_{n=1}^Nw(x,y_n)\) satisfies
\(|\hat Z_w-1|\le\epsilon<1\), then
\begin{equation}
  |\hat\mu_N^{\ntxt{sn}}-\mu|
  \le
  \frac{|\hat\mu_N-\mu|+L\epsilon}{1-\epsilon}.
  \label{eq:snis-prompt}
\end{equation}
\end{corollary}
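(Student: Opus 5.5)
The plan is an algebraic decomposition: write the self-normalized estimator as a ratio of the oracle importance-weighted mean and the empirical normalizer, then bound the numerator error with Proposition~\ref{prop:ess-oracle} and the denominator error with the assumed bound \(|\hat Z_w-1|\le\epsilon\). Everything is deterministic once the two events hold. The only probabilistic input is whatever already controls \(|\hat\mu_N-\mu|\), so the corollary inherits the confidence level of Proposition~\ref{prop:ess-oracle}.

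First I would note that \(\hat\mu_N^{\ntxt{sn}}=\hat\mu_N/\hat Z_w\), because the factors \(N^{-1}\) cancel between numerator and denominator. The assumption \(\epsilon<1\) gives \(\hat Z_w\ge 1-\epsilon>0\), so the ratio is well defined. Then I would write
\[
  \hat\mu_N^{\ntxt{sn}}-\mu
  =
  \frac{\hat\mu_N-\mu\hat Z_w}{\hat Z_w}
  =
  \frac{(\hat\mu_N-\mu)+\mu(1-\hat Z_w)}{\hat Z_w}.
\]
Taking absolute values and using \(0\le\mu=\E_{\pistar}[\ell]\le L\) bounds the numerator by \(|\hat\mu_N-\mu|+L\epsilon\). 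Using \(\hat Z_w\ge1-\epsilon\) bounds the denominator from below, which gives \eqref{eq:snis-prompt} directly.

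There is no real obstacle. The one point needing care is the centering: adding and subtracting \(\mu\hat Z_w\), rather than \(\hat\mu_N\) divided by something, is what makes the numerator split into exactly the two displayed terms. The bound \(|\mu|\le L\) uses the loss range \(0\le\ell\le L\) and the fact that \(\pistar\) is a probability measure, which holds because \(\E_{\pref}[w]=1\). Finally, I would remark that combining with \eqref{eq:ess-oracle} and a Bernstein bound for \(\hat Z_w\) (the case \(\ell\equiv1\) of Proposition~\ref{prop:ess-oracle}) gives a high-probability version via a union bound.
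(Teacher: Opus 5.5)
Your proposal is correct and follows the paper's proof essentially line for line. The paper likewise writes \(\hat\mu_N^{\ntxt{sn}}=\hat\mu_N/\hat Z_w\), splits the numerator as \((\hat\mu_N-\mu)+\mu(1-\hat Z_w)\), and bounds it using \(0\le\mu\le L\) and \(\hat Z_w\ge1-\epsilon\). It also closes with your remark that the normalizer event comes from Proposition~\ref{prop:ess-oracle} applied with \(\ell\equiv1\).
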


Self-normalization is the empirical form used by \bolt{} when \(Z(x)\) is
unknown. Corollary~\ref{prop:snis-prompt} shows that this operation mainly
introduces prompt-level rescaling: once the empirical total weight is close to
one, the normalized prompt mean is close to the oracle importance-weighted
prompt mean. Proposition~\ref{prop:partition-concentration} and
Corollary~\ref{prop:uniform-normalizer} specialize this rescaling to the Monte
Carlo partition estimate \(\Zhat_N(x)\).

\begin{proposition}[Standard Prompt Normalizer Concentration]
\label{prop:partition-concentration}
Assume \(r(x,y)\in[r_{\min},r_{\max}]\) and define
\(R_\beta\triangleq\exp(r_{\max}/\beta)-\exp(r_{\min}/\beta)\).
For a fixed prompt \(x\), if
\(\Zhat_N(x)=N^{-1}\sum_{n=1}^N\exp(r(x,y_n)/\beta)\), then
\begin{equation}
  \Pr\!\left(|\Zhat_N(x)-Z(x)|\ge t\right)
  \le
  2\exp\!\left(-\frac{2Nt^2}{R_\beta^2}\right).
  \label{eq:partition-concentration}
\end{equation}
\end{proposition}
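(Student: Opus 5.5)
This is a direct application of Hoeffding's inequality to the i.i.d. summands that define \(\Zhat_N(x)\). The plan is to identify the summands, check that they satisfy Hoeffding's hypotheses, and read off the range constant.

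Fix the prompt \(x\). Set \(X_n\triangleq\exp(r(x,y_n)/\beta)\) for \(n=1,\ldots,N\), where \(y_1,\ldots,y_N\iid\pref(\cdot\mid x)\) as in \eqref{eq:empirical-weights}. Three facts about the \(X_n\) are needed:

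\begin{itemize}
\item They are i.i.d., because each is a fixed measurable function of an i.i.d. draw.
\item Their mean is \(\E_{\pref}[X_n]=Z(x)\), by the definition of \(Z(x)\) in \eqref{eq:boltzmann-policy}.
\item They are almost surely bounded. Since \(\beta>0\) and \(t\mapsto e^{t/\beta}\) is increasing, \(r(x,y)\in[r_{\min},r_{\max}]\) gives \(X_n\in[e^{r_{\min}/\beta},e^{r_{\max}/\beta}]\). This interval has length exactly \(R_\beta\).
\end{itemize}

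Apply the two-sided Hoeffding inequality to the empirical mean \(\Zhat_N(x)=N^{-1}\sum_n X_n\) of \(N\) independent variables with common range width \(R_\beta\). It gives
\[
\Pr\!\left(|\Zhat_N(x)-Z(x)|\ge t\right)\le 2\exp\!\left(-\frac{2N^2t^2}{\sum_{n}R_\beta^2}\right)=2\exp\!\left(-\frac{2Nt^2}{R_\beta^2}\right),
\]
which is \eqref{eq:partition-concentration}.

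The only step that needs care is the range computation. The monotonicity of the exponential relies on \(\beta>0\), and the constant must be the width of the interval the \(X_n\) lie in, not its upper endpoint; that width is what \(R_\beta\) denotes. The degenerate case \(R_\beta=0\), i.e. constant reward, is trivial, since then \(\Zhat_N(x)=Z(x)\) almost surely. No other obstacle is expected.
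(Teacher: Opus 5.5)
Your proposal is correct and matches the paper's proof: the summands \(\exp(r(x,y_n)/\beta)\) are i.i.d. with mean \(Z(x)\) and lie in \([e^{r_{\min}/\beta},e^{r_{\max}/\beta}]\), an interval of width \(R_\beta\), and two-sided Hoeffding gives the bound. Your remarks on the role of \(\beta>0\) and on the degenerate case \(R_\beta=0\) (where, for \(t>0\), the bound is read as \(0\)) are details the paper leaves implicit.
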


\begin{corollary}[Uniform Prompt-Normalizer Loss Perturbation]
\label{prop:uniform-normalizer}
Under Proposition~\ref{prop:partition-concentration}, for prompts
\(x_1,\ldots,x_M\), set
\(s_i\triangleq Z(x_i)/\Zhat_N(x_i)\). If
\(0\le\ell_\theta(x,y)\le L_{\log}\) and
\(w(x,y)\le w_{\max}\), then with probability at least \(1-\delta\),
\begin{equation}
  \sup_\theta
  |\lossOracle_{M,N}(\theta)-\lossHat_{M,N}(\theta)|
  \le
  w_{\max}L_{\log}
  \frac{R_\beta e^{-r_{\min}/\beta}}{\sqrt{2N}}
  \sqrt{\log\frac{2M}{\delta}} .
  \label{eq:uniform-normalizer}
\end{equation}
\end{corollary}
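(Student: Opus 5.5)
The plan is to reduce the uniform-in-\(\theta\) loss difference to a prompt-wise normalizer error that does not depend on \(\theta\). Then apply Proposition~\ref{prop:partition-concentration} at each prompt and take a union bound over the \(M\) prompts. Throughout, I condition on the sampled prompts \(x_{1:M}\), so the rollouts at each prompt are i.i.d.\ from \(\pref(\cdot\mid x_i)\) and the proposition applies prompt by prompt.

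\textbf{Step 1: the weights differ by a prompt-only factor.} For a rollout \(y_{i,n}\) at prompt \(x_i\),
\[
\hat w(x_i,y_{i,n})=\frac{e^{r(x_i,y_{i,n})/\beta}}{\Zhat_N(x_i)}=s_i\,w(x_i,y_{i,n}),
\]
so \(w-\hat w=(1-s_i)\,w\). This is the same observation as in the main text: the empirical normalizer only rescales each prompt's loss.

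\textbf{Step 2: bound the loss difference uniformly in \(\theta\).} Substituting into \eqref{eq:e2e-empirical-losses} gives
\[
|\lossOracle_{M,N}(\theta)-\lossHat_{M,N}(\theta)|
\le\frac1{MN}\sum_{i,n}|1-s_i|\,w(x_i,y_{i,n})\,\ell_\theta(x_i,y_{i,n})
\le w_{\max}L_{\log}\max_i|1-s_i|.
\]
The right-hand side is free of \(\theta\), so the supremum over \(\theta\) costs nothing.

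\textbf{Step 3: convert the ratio error into an absolute normalizer error.} Write
\[
|1-s_i|=\frac{|\Zhat_N(x_i)-Z(x_i)|}{\Zhat_N(x_i)}.
\]
Every summand of \(\Zhat_N\) is at least \(e^{r_{\min}/\beta}\), so deterministically \(\Zhat_N(x_i)\ge e^{r_{\min}/\beta}\). Hence
\[
|1-s_i|\le e^{-r_{\min}/\beta}\,|\Zhat_N(x_i)-Z(x_i)|.
\]
This deterministic lower bound on \(\Zhat_N\) is the step to get right. It avoids any event on which \(\Zhat_N\) is small and keeps the bound free of extra failure probability.

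\textbf{Step 4: concentration and union bound.} Apply \eqref{eq:partition-concentration} at each prompt with per-prompt failure probability \(\delta/M\). This requires
\[
2\exp\!\left(-\frac{2Nt^2}{R_\beta^2}\right)=\frac{\delta}{M},
\]
that is, \(t=R_\beta\sqrt{\log(2M/\delta)/(2N)}\). A union bound over \(i=1,\ldots,M\) then gives, with probability at least \(1-\delta\), that \(\max_i|\Zhat_N(x_i)-Z(x_i)|\le t\) simultaneously. Chaining Steps 2–4 yields \eqref{eq:uniform-normalizer}.

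The main subtlety is only the conditioning and independence bookkeeping: Hoeffding is applied per prompt given \(x_i\), and the union bound requires no independence across prompts. Beyond that, the argument is routine.
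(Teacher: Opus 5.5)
Your proposal is correct and follows the paper's proof essentially step for step. Both arguments use the prompt-wise rescaling \(\hat w=s_i w\), a \(\theta\)-free bound of \(w_{\max}L_{\log}\) times the ratio error (you use \(\max_i|1-s_i|\) where the paper uses the prompt average, which makes no difference after the union bound), the deterministic lower bound \(\Zhat_N(x_i)\ge e^{r_{\min}/\beta}\), and Hoeffding with per-prompt failure probability \(\delta/M\) plus a union bound; your explicit solution for \(t\) also confirms the constant \(R_\beta\sqrt{\log(2M/\delta)/(2N)}\).
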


These normalizer bounds are not support guarantees. They control
the loss perturbation caused by replacing \(Z(x)\) with \(\Zhat_N(x)\) on the
rollouts that were actually sampled. The support barrier is a different event:
whether the stored set contains the reward-bearing region that the Boltzmann
target would upweight.

\begin{proposition}[Reference Coverage and One-Shot Support Barrier]
\label{prop:coverage}
Fix a prompt \(x\) and define
\(A_\gamma(x)\triangleq\{y:r(x,y)\ge \max_{y'}r(x,y')-\gamma\}\) and
\(p_\gamma(x)\triangleq\pref(A_\gamma(x)\mid x)\).
For \(N\) independent reference rollouts, the probability of missing
\(A_\gamma(x)\) is \((1-p_\gamma(x))^N\). Thus hitting
\(A_\gamma(x)\) with probability at least \(1-\delta\) requires
\begin{equation}
  N
  \ge
  \frac{\log(1/\delta)}{-\log(1-p_\gamma(x))}
  \ge
  \frac{1-p_\gamma(x)}{p_\gamma(x)}\log\frac1\delta .
\label{eq:coverage-sample-lower}
\end{equation}
For \(M\) prompts with
\(p_{\gamma,\min}\triangleq\min_i p_\gamma(x_i)>0\), the condition
\(N\ge\log(M/\delta)/p_{\gamma,\min}\) suffices to hit every
\(A_\gamma(x_i)\) with probability at least
\(1-\delta\). Moreover, any empirical target formed by nonnegative weights on
the stored rollout set \(\mathcal D_x\) is supported on \(\mathcal D_x\); if
\(\mathcal D_x\cap A_\gamma(x)=\emptyset\), then it assigns zero mass to
\(A_\gamma(x)\).
\end{proposition}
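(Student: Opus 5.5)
The statement has four parts: an exact miss probability, two lower bounds on \(N\), a union-bound sufficiency condition over \(M\) prompts, and a support statement for weighted empirical targets. I will prove them in that order. Each is elementary, and the only care needed is with the logarithmic inequalities.

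\emph{Miss probability.} Fix \(x\). The events \(\{y_n\notin A_\gamma(x)\}\) for \(n=1,\ldots,N\) are independent because the rollouts are i.i.d.\ from \(\pref(\cdot\mid x)\). Each event has probability \(1-p_\gamma(x)\). The miss event \(\{\mathcal D_x\cap A_\gamma(x)=\emptyset\}\) is their intersection, so its probability is \((1-p_\gamma(x))^N\).

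\emph{Necessary sample size.} Requiring a hit with probability at least \(1-\delta\) means \((1-p_\gamma)^N\le\delta\). Take logarithms, noting that \(\log(1-p_\gamma)<0\) whenever \(0<p_\gamma<1\). The case \(p_\gamma=1\) is trivial, and for \(p_\gamma=0\) the bound is infinite, which is consistent with the claim. Dividing by the negative quantity \(\log(1-p_\gamma)\) flips the inequality and gives the first bound
\[
N\ge\frac{\log(1/\delta)}{-\log(1-p_\gamma)}.
\]
For the second bound I use the elementary inequality \(-\log(1-p)\le p/(1-p)\). It follows from \(\log(1+t)\le t\) with \(t=p/(1-p)\), since \(1/(1-p)=1+p/(1-p)\). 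This gives \(1/(-\log(1-p_\gamma))\ge(1-p_\gamma)/p_\gamma\), which yields \eqref{eq:coverage-sample-lower}.

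\emph{Sufficiency over \(M\) prompts.} Use \(1-p\le e^{-p}\) to bound each prompt's miss probability by \(e^{-Np_{\gamma,\min}}\). When \(N\ge\log(M/\delta)/p_{\gamma,\min}\), this is at most \(\delta/M\). A union bound over the \(M\) prompts then bounds the probability that some \(A_\gamma(x_i)\) is missed by \(\delta\). Independence across prompts is not needed, so I will only assume independent rollouts within each prompt.

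\emph{Support statement.} An empirical target built from nonnegative weights on the stored rollouts has the form \(\hat Q(\cdot)\propto\sum_n \hat w_n\mathbf 1\{y_n\in\cdot\}\), as in the atomic \(\hat\pi_N\) of Section~\ref{sec:method}. This is a finite combination of point masses at the \(y_n\), so \(\hat Q(B)=0\) for every \(B\) disjoint from \(\mathcal D_x\). Applying this with \(B=A_\gamma(x)\) gives the final claim. If the normalized total weight is zero the target is undefined, so I will state the result under a positive total weight.

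None of these steps is a real obstacle. The only places to be careful are the direction of the logarithmic inequalities and the degenerate cases \(p_\gamma\in\{0,1\}\).
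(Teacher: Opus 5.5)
Your proof is correct and follows essentially the same route as the paper: the exact miss probability from independent rollouts, logarithms together with \(-\log(1-p)\le p/(1-p)\), a union bound combined with \(1-p\le e^{-p}\), and the observation that a nonnegatively weighted empirical target is a combination of point masses on \(\mathcal D_x\). Your explicit handling of the degenerate cases \(p_\gamma\in\{0,1\}\) and of zero total weight adds care that the paper leaves implicit, but it does not change the argument.
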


\begin{corollary}[Rare-Support Lower Bound]
\label{cor:rare-support-lower}
For a binary prompt with completions \(y^+\) and \(y^-\), rewards
\(r(y^+)=1\), \(r(y^-)=0\), and \(\pref(y^+\mid x)=p\), let
\(a\triangleq e^{1/\beta}\), so
\(\pistar(y^+\mid x)=pa/[1+p(a-1)]\).
With \(N\) one-shot reference rollouts, the expected best possible
support-restricted prompt gap is at least
\begin{equation}
  (1-p)^N\,
  \beta\log\frac{1}{1-\pistar(y^+\mid x)} .
  \label{eq:rare-support-lower}
\end{equation}
Achieving failure probability at most \(\delta\) for observing \(y^+\) requires
\(N=\Omega(p^{-1}\log(1/\delta))\) when \(p\) is small.
\end{corollary}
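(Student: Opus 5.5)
The plan is to work prompt-wise with the binary structure. Since every completion is either $y^+$ or $y^-$, the stored support $S_N$ is one of three things: $\{y^+\}$, $\{y^-\}$, or $\{y^+,y^-\}$. The event $\{y^+\notin S_N\}$ has probability exactly $(1-p)^N$ under $N$ i.i.d.\ reference rollouts; this is the miss probability already computed in Proposition~\ref{prop:coverage} with $A_\gamma(x)=\{y^+\}$.

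The first step lower-bounds the gap on each support event. I apply Theorem~\ref{thm:support-restricted-gap} with $P=\pistar(\cdot\mid x)$ and $S=S_N$. The best support-restricted gap equals $\beta\log(1/P(S_N))$, and this quantity is always nonnegative.

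On the miss event, $S_N=\{y^-\}$ and $P(S_N)=1-\pistar(y^+\mid x)$. The gap is then exactly $\beta\log(1/(1-\pistar(y^+\mid x)))$. On the complementary event, I drop the nonnegative gap to zero. Taking expectation over the rollout draw gives
\[
\E\Big[\inf_{\supp(Q)\subseteq S_N}\big(\objRL(P)-\objRL(Q)\big)\Big]\ \ge\ (1-p)^N\beta\log\frac{1}{1-\pistar(y^+\mid x)}.
\]
I also need the closed form of $\pistar(y^+\mid x)$. It follows from Proposition~\ref{prop:boltzmann}: $Z=(1-p)+pa$, so $\pistar(y^+\mid x)=pa/[1+p(a-1)]$.

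The second step is the sample-complexity claim. Requiring the failure probability to satisfy $(1-p)^N\le\delta$ forces $N\ge\log(1/\delta)/(-\log(1-p))$, which is the first bound in Proposition~\ref{prop:coverage}. I then use $-\log(1-p)\le p/(1-p)$, which is $\le 2p$ for $p\le 1/2$. This gives $N\ge\log(1/\delta)/(2p)$, that is, $N=\Omega(p^{-1}\log(1/\delta))$ for small $p$.

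The main point needing care is measurability and the meaning of "expected best possible gap". I would define the random variable as the infimum over policies supported on the realized $S_N$. Because $S_N$ takes only three values, this is a simple function, and the expectation is a finite sum. After that, the rest is a direct specialization of the earlier results.
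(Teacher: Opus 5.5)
Your proposal is correct and follows the paper's proof essentially line by line. The miss event $\{y^+\notin S_N\}$ has probability $(1-p)^N$; on it $S_N=\{y^-\}$, Theorem~\ref{thm:support-restricted-gap} gives gap $\beta\log(1/(1-\pistar(y^+\mid x)))$, and the nonnegative gap on the complement is dropped before taking expectations. The sample-complexity part is the same inversion of $(1-p)^N\le\delta$, and your explicit step $-\log(1-p)\le p/(1-p)\le 2p$ for $p\le 1/2$ makes the paper's ``$\Omega(p^{-1}\log(1/\delta))$ for small $p$'' concrete.
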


\begin{corollary}[Pass@$k$ Support Boundary]
\label{cor:passk-support}
Let \(A\) be the success set and let \(S\) be the stored support. Any policy
\(Q\) supported on \(S\) has
\begin{equation}
  \operatorname{pass@}k(Q;A)
  =
  1-\left(1-Q(A\cap S)\right)^k .
  \label{eq:passk-support}
\end{equation}
For the conditional target \(Q^*=P(\cdot\mid S)\),
\(\operatorname{pass@}k(Q^*;A)
=1-(1-P(A\cap S)/P(S))^k\).
If \(A\cap S=\emptyset\), pass@$k$ is zero for every support-restricted
policy and every finite \(k\).
\end{corollary}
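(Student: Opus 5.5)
The statement to prove is Corollary~\ref{cor:passk-support}. The plan is a direct computation, using that pass@$k$ of a policy $Q$ on success set $A$ means the probability that at least one of $k$ i.i.d.\ draws from $Q$ lands in $A$. By independence, the probability that all $k$ draws miss $A$ is $(1-Q(A))^k$. This gives $\operatorname{pass@}k(Q;A)=1-(1-Q(A))^k$ for any policy, with no support assumption needed.

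Next I would impose $\supp(Q)\subseteq S$, which gives $Q(S^c)=0$. Hence
\[
Q(A)=Q(A\cap S)+Q(A\cap S^c)=Q(A\cap S).
\]
Substituting into the previous formula gives \eqref{eq:passk-support}.

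For the conditional target I would use Theorem~\ref{thm:support-restricted-gap}. It guarantees $P(S)>0$ and identifies $Q^*=P(\cdot\mid S)$, which is supported on $S$. Then $Q^*(A\cap S)=P(A\cap S)/P(S)$, and plugging this into \eqref{eq:passk-support} yields the stated expression.

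Finally, if $A\cap S=\emptyset$, then $Q(A\cap S)=Q(\emptyset)=0$, so $\operatorname{pass@}k=1-1^k=0$ for every finite $k$ and every support-restricted $Q$. This is consistent with the last assertion of Proposition~\ref{prop:coverage}.

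There is no real obstacle. The only point needing care is measure-theoretic. On a general standard Borel completion space, ``supported on $S$'' should be read as $Q(S^c)=0$ rather than as topological support, and $S$ must be measurable, as Theorem~\ref{thm:support-restricted-gap} already assumes. I would state this convention explicitly at the start so the additivity step is justified.
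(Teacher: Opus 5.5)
Your proposal is correct and follows the paper's proof essentially verbatim. In both, support on $S$ reduces the single-draw success probability to $Q(A\cap S)$, and independence gives the $k$-fold miss probability $(1-Q(A\cap S))^k$. The conditional target then satisfies $Q^*(A\cap S)=P(A\cap S)/P(S)$, and $A\cap S=\emptyset$ forces $Q(A\cap S)=0$.

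One wording fix: Theorem~\ref{thm:support-restricted-gap} does not guarantee $P(S)>0$; it assumes it. Since $P(S)>0$ is exactly what is needed for $P(\cdot\mid S)$ to be defined, state it as a standing hypothesis for the second display.
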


\begin{proposition}[Standard Optimizer-Level Normalizer Perturbation]
\label{prop:normalization}
Let \(s_i\triangleq Z(x_i)/\Zhat_N(x_i)\). Assume
\(\|\nabla\lossOracle_{N,i}(\theta)\|\le G\) for all \(i,\theta\), and assume
\(\lossOracle_{M,N}\) satisfies the \(\mu\)-PL inequality. If
\(\hat\theta\) is stationary for \(\lossHat_{M,N}\), then
\begin{equation}
  \lossOracle_{M,N}(\hat\theta)
  -
  \inf_\theta\lossOracle_{M,N}(\theta)
  \le
  \frac{G^2}{2\mu M}\sum_{i=1}^M(s_i-1)^2.
  \label{eq:normalization-pl}
\end{equation}
Consequently, under the bounded-reward assumptions of
Proposition~\ref{prop:partition-concentration},
\(\E[\lossOracle_{M,N}(\hat\theta)-\inf_\theta\lossOracle_{M,N}(\theta)]
\le G^2R_\beta^2e^{-2r_{\min}/\beta}/(8\mu N)\).
\end{proposition}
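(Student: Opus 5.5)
The plan is to treat \(\lossHat_{M,N}\) as a prompt-wise reweighting of \(\lossOracle_{M,N}\) and to charge the rescaling to the PL inequality through the gradient. Write \(\lossOracle_{N,i}(\theta)\triangleq N^{-1}\sum_n w(x_i,y_{i,n})\ell_\theta(x_i,y_{i,n})\). Because \(\hat w(x_i,y)=w(x_i,y)\,Z(x_i)/\Zhat_N(x_i)=s_i w(x_i,y)\), we get the exact identities
\[
\lossHat_{M,N}(\theta)=\frac1M\sum_i s_i\lossOracle_{N,i}(\theta),
\qquad
\lossOracle_{M,N}(\theta)=\frac1M\sum_i\lossOracle_{N,i}(\theta).
\]
This is the same within-prompt invariance observed in Section~\ref{sec:guarantees}: only prompt scales change.

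At a stationary point \(\hat\theta\) of \(\lossHat_{M,N}\) we have \(\sum_i s_i\nabla\lossOracle_{N,i}(\hat\theta)=0\). Subtracting this from the oracle gradient gives
\[
\nabla\lossOracle_{M,N}(\hat\theta)=\frac1M\sum_i(1-s_i)\nabla\lossOracle_{N,i}(\hat\theta).
\]
By the triangle inequality and then Cauchy--Schwarz, with the gradient bound \(G\),
\[
\|\nabla\lossOracle_{M,N}(\hat\theta)\|^2\le G^2\Bigl(\frac1M\sum_i|s_i-1|\Bigr)^2\le\frac{G^2}{M}\sum_i(s_i-1)^2 .
\]
The \(\mu\)-PL inequality \(\lossOracle_{M,N}(\theta)-\inf\lossOracle_{M,N}\le\|\nabla\lossOracle_{M,N}(\theta)\|^2/(2\mu)\) then yields \eqref{eq:normalization-pl} directly.

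For the expectation bound, the plan is to control \((s_i-1)^2=(Z-\Zhat_N)^2/\Zhat_N^2\). Since \(\Zhat_N(x_i)\ge e^{r_{\min}/\beta}\) deterministically, we have \((s_i-1)^2\le e^{-2r_{\min}/\beta}(\Zhat_N-Z)^2\). Then \(\E(\Zhat_N-Z)^2=\Var(e^{r/\beta})/N\), and Popoviciu's inequality gives \(\Var(e^{r/\beta})\le R_\beta^2/4\), hence \(\E(s_i-1)^2\le R_\beta^2e^{-2r_{\min}/\beta}/(4N)\). Averaging over \(i\) and substituting gives
\[
\frac{G^2}{2\mu}\cdot\frac{R_\beta^2e^{-2r_{\min}/\beta}}{4N}=\frac{G^2R_\beta^2e^{-2r_{\min}/\beta}}{8\mu N},
\]
which is the stated constant. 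If one prefers to argue from Proposition~\ref{prop:partition-concentration}, integrating its tail \(\Pr(|\Zhat_N-Z|\ge t)\le2e^{-2Nt^2/R_\beta^2}\) gives the same order but a worse constant, so the direct variance route is the one to use.

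The main obstacle is bookkeeping rather than depth. First, \(G\) must bound the per-prompt oracle gradients at \(\hat\theta\), which the hypothesis grants for all \(\theta\). Second, the Cauchy--Schwarz step must produce the \(1/M\) normalization exactly, which it does because \((M^{-1}\sum_i a_i)^2\le M^{-1}\sum_i a_i^2\). Third, the expectation step needs care because \(\hat\theta\) depends on the data. This causes no trouble: the pathwise bound holds for every realization, and only \(\sum_i(s_i-1)^2\) is averaged.
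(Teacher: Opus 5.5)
Your proposal is correct and follows the paper's proof essentially step for step. The steps are: the prompt-wise rescaling $\lossHat_{M,N}=M^{-1}\sum_i s_i\lossOracle_{N,i}$; stationarity, which writes the oracle gradient as $M^{-1}\sum_i(1-s_i)\nabla\lossOracle_{N,i}$; Cauchy--Schwarz with the gradient bound $G$; the PL inequality; and, for the expectation, $\Zhat_N\ge e^{r_{\min}/\beta}$ together with the variance bound $R_\beta^2/(4N)$. Your added remarks are correct refinements of details the paper leaves implicit: Popoviciu gives the exact constant, integrating the Hoeffding tail would lose a factor of four, and the bound holds pathwise, which handles the data dependence of $\hat\theta$.
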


\begin{proposition}[Neyman-Style Allocation Component]
\label{prop:allocation}
For prompt \(x_i\), let
\(V_i\triangleq\Var_{\pref(\cdot\mid x_i)}[\exp(r(x_i,y)/\beta)]\), and let
\(N_i^{\ntxt{cov}}\) be a required coverage count. The allocation minimizing
the normalizer-variance proxy \(\sum_iV_i/N_i\) subject to
\(N_i\ge N_i^{\ntxt{cov}}\) and \(\sum_iN_i\le\Btot\) has the form
\begin{equation}
  N_i^*
  =
  \max\{N_i^{\ntxt{cov}},\lambda\sqrt{V_i}\},
  \label{eq:allocation}
\end{equation}
where \(\lambda\) is chosen so that \(\sum_iN_i^*=\Btot\) on the feasible
active set.
\end{proposition}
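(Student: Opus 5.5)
The statement is Proposition~\ref{prop:allocation}: minimize \(\sum_i V_i/N_i\) subject to \(N_i\ge N_i^{\ntxt{cov}}\) and \(\sum_i N_i\le\Btot\). I would relax the integrality of the \(N_i\) and treat this as a convex program in \(N\in(0,\infty)^M\). Three facts make this work:
\begin{itemize}[leftmargin=*]
  \item Each map \(N_i\mapsto V_i/N_i\) is convex and nonincreasing on \((0,\infty)\) for \(V_i\ge0\), so the objective is convex.
  \item The constraints are affine, and Slater's condition holds whenever \(\sum_iN_i^{\ntxt{cov}}<\Btot\).
  \item Consequently the KKT conditions are necessary and sufficient for optimality.
\end{itemize}
If \(\sum_iN_i^{\ntxt{cov}}=\Btot\), the feasible set is a single point and the claimed form holds trivially, with any \(\lambda\) small enough that \(\lambda\sqrt{V_i}\le N_i^{\ntxt{cov}}\) for all \(i\).

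Next I would write the Lagrangian
\[
\sum_i V_i/N_i+\nu\Bigl(\sum_iN_i-\Btot\Bigr)-\sum_i\mu_i\bigl(N_i-N_i^{\ntxt{cov}}\bigr),
\qquad \nu,\mu_i\ge0 .
\]
Stationarity then reads \(-V_i/N_i^2+\nu-\mu_i=0\), and complementary slackness gives \(\mu_i(N_i-N_i^{\ntxt{cov}})=0\). I would split each coordinate into two cases.

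\textbf{Coverage floor inactive.} Here \(\mu_i=0\), so \(N_i=\sqrt{V_i/\nu}\). Setting \(\lambda\triangleq\nu^{-1/2}\), this is \(N_i=\lambda\sqrt{V_i}\), and feasibility forces \(\lambda\sqrt{V_i}\ge N_i^{\ntxt{cov}}\).

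\textbf{Coverage floor active.} Here \(N_i=N_i^{\ntxt{cov}}\) and \(\mu_i=\nu-V_i/(N_i^{\ntxt{cov}})^2\ge0\). This sign condition is exactly \(\lambda\sqrt{V_i}\le N_i^{\ntxt{cov}}\).

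In both cases \(N_i=\max\{N_i^{\ntxt{cov}},\lambda\sqrt{V_i}\}\), which is \eqref{eq:allocation}.

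It remains to determine \(\lambda\) and confirm the budget binds. If some \(V_i>0\), the objective is strictly decreasing in that coordinate, so any slack budget could be spent to lower the objective. Hence \(\sum_iN_i=\Btot\) at the optimum; in KKT terms, \(\nu>0\). The map
\[
g(\lambda)=\sum_i\max\{N_i^{\ntxt{cov}},\lambda\sqrt{V_i}\}
\]
is continuous and nondecreasing, equals \(\sum_iN_i^{\ntxt{cov}}\le\Btot\) at \(\lambda=0\), and tends to \(\infty\) as \(\lambda\to\infty\). By the intermediate value theorem some \(\lambda\) satisfies \(g(\lambda)=\Btot\). This is the "feasible active set" clause of the statement: the \(i\) with \(\lambda\sqrt{V_i}>N_i^{\ntxt{cov}}\) are the unconstrained coordinates. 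If all \(V_i=0\), every feasible point is optimal and the floor allocation is of the stated form.

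\textbf{Main obstacle.} The analysis itself is standard, so the only delicate points are degenerate cases: \(V_i=0\), where the square-root rule gives \(0\) and the floor takes over, and the regime where the coverage floors exhaust the budget. I would handle these by the direct arguments above rather than through the KKT conditions. Integrality would be addressed only by remarking that the result concerns the continuous relaxation used as a proxy.
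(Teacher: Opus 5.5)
Your proposal is correct and follows the same route as the paper: it treats the relaxed allocation problem as a convex program and reads off $\max\{N_i^{\ntxt{cov}},\lambda\sqrt{V_i}\}$ from the KKT conditions, with $\lambda=\nu^{-1/2}$, by splitting into inactive and active coverage floors. You go further than the paper's sketch in four places: you derive the max form from the sign of the floor multiplier, obtain $\nu>0$ (which in fact follows directly from stationarity once some $V_i>0$), produce $\lambda$ via the intermediate value theorem together with KKT sufficiency, and treat the degenerate cases $V_i=0$ and $\sum_iN_i^{\ntxt{cov}}=\Btot$ separately.
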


Proposition~\ref{prop:allocation} is a normalizer-variance rule after coverage
floors have been chosen. It allocates extra rollouts to prompts whose
partition-function estimator is noisy, but it does not replace the requirement
that each prompt first receive enough samples to make useful support likely.

\begin{proof}[Proof of Theorem~\ref{thm:support-restricted-gap}]
Fix the prompt and abbreviate $P=\pistar(\cdot\mid x)$. By the reverse-KL value
identity, the distributional gap from a policy $Q$ is
$\beta\KL(Q\|P)$. If $Q$ is supported on $S$, then
\[
  \KL(Q\|P)
  =
  \int_S \log\frac{dQ}{dP}\,dQ
  =
  \int_S \log\frac{dQ}{dP(\cdot\mid S)}\,dQ
  +
  \log\frac{1}{P(S)}.
\]
The first term is
$\KL(Q\|P(\cdot\mid S))\ge0$, with equality at
$Q=P(\cdot\mid S)$. If $S\cap A=\emptyset$, then $P(S)\le1-P(A)$, so
$\log(1/P(S))\ge\log(1/(1-P(A)))$.
\end{proof}

\begin{proof}[Proof of Proposition~\ref{prop:ess-oracle}]
Let $X_n=w(x,y_n)\ell(y_n)$. Then
$\E[X_n]=\mu$ and
\[
  \Var(X_n)
  \le
  \E[X_n^2]
  \le
  L^2\E_{\pref}\left[w(x,y)^2\right]
  =
  L^2\mathcal C_2(x).
\]
Also $0\le X_n\le Lw_{\max}(x)$. Bernstein's inequality gives
\[
  |\hat\mu_N-\mu|
  \le
  \sqrt{\frac{2L^2\mathcal C_2(x)\log(2/\delta)}{N}}
  +
  \frac{2Lw_{\max}(x)\log(2/\delta)}{3N},
\]
which is the displayed bound.
\end{proof}

\begin{proof}[Proof of the self-normalized part of Proposition~\ref{prop:snis-prompt}]
Let
$\hat Z_w\triangleq N^{-1}\sum_{n=1}^N w(x,y_n)$, so
$\hat\mu_N^{\ntxt{sn}}=\hat\mu_N/\hat Z_w$. On the event
$|\hat Z_w-1|\le\epsilon<1$,
\[
  |\hat\mu_N^{\ntxt{sn}}-\mu|
  =
  \left|\frac{\hat\mu_N-\mu+\mu(1-\hat Z_w)}{\hat Z_w}\right|
  \le
  \frac{|\hat\mu_N-\mu|+L\epsilon}{1-\epsilon},
\]
because $0\le\mu\le L$. The normalizer event is Proposition~\ref{prop:ess-oracle}
applied to the loss $\ell\equiv1$.
\end{proof}

\begin{proof}[Proof of Theorem~\ref{thm:temperature-coverage-variance}]
The hit-probability statement is Proposition~\ref{prop:coverage}. The support
price after a missed useful set follows from
Theorem~\ref{thm:support-restricted-gap}. The leading importance-weighted
deviation is the variance term in Proposition~\ref{prop:ess-oracle}. It remains
to verify the binary frontier. For binary rewards, the partition function is
$Z=1+p(a-1)$. Therefore
\[
  \pistar(r=1\mid x)=pa/Z
\]
and
\[
  \mathcal C_2(x)
  =
  p(a/Z)^2+(1-p)(1/Z)^2
  =
  \frac{(1-p)+pa^2}{[1+p(a-1)]^2}.
\]
The condition $pa/[1+p(a-1)]\ge1-\eta$ is equivalent to
\[
  \eta pa\ge(1-\eta)(1-p),
\]
which gives the lower bound on $a$. Finally, since
\[
  \mathcal C_2(x)
  =
  \frac{\pistar(r=1\mid x)^2}{p}
  +
  \frac{(1-\pistar(r=1\mid x))^2}{1-p},
\]
the condition $\pistar(r=1\mid x)\ge1-\eta$ implies
$\mathcal C_2(x)\ge(1-\eta)^2/p$.
\end{proof}

\begin{proof}[Proof of the single-prompt part of Proposition~\ref{prop:partition-concentration}]
The random variables $\exp(r(x,y_n)/\beta)$ are i.i.d., have expectation
$Z(x)$, and lie in
$[\exp(r_{\min}/\beta),\exp(r_{\max}/\beta)]$. Hoeffding's inequality applied
to their sample mean gives the stated concentration bound
\citep{hoeffding1963probability}.
\end{proof}

For binary verifiable rewards,
$Z(x)=1+p(x)(e^{1/\beta}-1)$ and
$\Zhat_N(x)=1+\hat p_N(x)(e^{1/\beta}-1)$, where $p(x)$ and $\hat p_N(x)$ are
the reference-policy and empirical pass rates. The Hoeffding range
$e^{1/\beta}-1$ can be large when $\beta$ is small, but this bound alone can
overstate the algorithmic damage of normalizer error. Self-normalization changes
the prompt multiplier, not the positive-to-negative odds within that prompt;
coverage controls whether positives appear at all.

\begin{proof}[Proof of the uniform-value part of Proposition~\ref{prop:uniform-normalizer}]
For each prompt, define $s_i\triangleq Z(x_i)/\Zhat_N(x_i)$. Prompt-wise
rescaling gives
\begin{equation}
  \lossHat_{M,N}(\theta)
  =
  \frac1M\sum_{i=1}^M s_i\,\lossOracle_{N,i}(\theta),
  \qquad
  \lossOracle_{N,i}(\theta)
  \triangleq
  \frac1N\sum_{n=1}^N w(x_i,y_{i,n})\ell_\theta(x_i,y_{i,n}).
\end{equation}
Since $0\le\lossOracle_{N,i}(\theta)\le w_{\max}L_{\log}$,
\begin{equation}
  \sup_\theta|\lossOracle_{M,N}(\theta)-\lossHat_{M,N}(\theta)|
  \le
  w_{\max}L_{\log}\frac1M\sum_{i=1}^M |s_i-1|.
\end{equation}
Also $\Zhat_N(x_i)\ge\exp(r_{\min}/\beta)$, so
$|s_i-1|\le |Z(x_i)-\Zhat_N(x_i)|/\exp(r_{\min}/\beta)$. Applying
the concentration part of Proposition~\ref{prop:partition-concentration} to all $M$ prompts and union bounding
proves the result.
\end{proof}

\begin{proof}[Proof of Proposition~\ref{prop:coverage}]
For one prompt, the event of no $\gamma$-optimal rollout has probability
$(1-p_\gamma(x))^N$, proving the first claim. Accessing $A_\gamma(x)$ with
probability at least $1-\delta$ requires this failure probability to be at most
$\delta$. Taking logarithms gives
\begin{equation}
  N
  \ge
  \frac{\log(1/\delta)}{-\log(1-p_\gamma(x))}.
\end{equation}
The inequality
$-\log(1-p)\le p/(1-p)$ for $p\in(0,1)$ gives the displayed lower bound. For
$M$ prompts, the union bound gives
\begin{equation}
  \Pr[\exists i\ntxt{ with no }\gamma\ntxt{-optimal rollout}]
  \le
  M(1-p_{\gamma,\min})^N
  \le
  M e^{-Np_{\gamma,\min}}.
\end{equation}
Setting the right-hand side to at most $\delta$ gives the sufficient condition.
For the support statement, the empirical target $\hat{\pi}_v$ is a nonnegative
measure on the atoms in $\mathcal D_x$, normalized by its total mass when that
mass is positive. Hence $\supp(\hat{\pi}_v)\subseteq\mathcal D_x$. If
$\mathcal D_x\cap A_\gamma(x)=\emptyset$, every atom assigned positive
empirical mass lies outside $A_\gamma(x)$, so
$\hat{\pi}_v(A_\gamma(x)\mid x)=0$ for any nonnegative weights.
\end{proof}

\begin{proof}[Proof of Corollary~\ref{cor:rare-support-lower}]
The Boltzmann target mass on $y^+$ follows by substituting the two rewards into
\eqref{eq:boltzmann-policy}. With probability $(1-p)^N$, the stored support
contains only $y^-$. On that event, Theorem~\ref{thm:support-restricted-gap}
gives best possible prompt gap
$\beta\log(1/(1-\pistar(y^+\mid x)))$. Multiplying by the event probability
gives the expected lower bound. The sample-complexity statement is the
condition $(1-p)^N\le\delta$, equivalently
$N\ge\log(1/\delta)/[-\log(1-p)]$, which is
$\Omega(p^{-1}\log(1/\delta))$ for small $p$.
\end{proof}

\begin{proof}[Proof of Corollary~\ref{cor:passk-support}]
Since $Q$ is supported on $S$, the event that one draw from $Q$ succeeds is
$A\cap S$ and has probability $Q(A\cap S)$. The probability that $k$
independent draws all fail is therefore $(1-Q(A\cap S))^k$, giving the first
display. For $Q^*=P(\cdot\mid S)$,
$Q^*(A\cap S)=P(A\cap S)/P(S)$, which gives the second display. If
$S\cap A=\emptyset$, then $Q(A\cap S)=0$ for every support-restricted $Q$, so
pass@$k$ is zero for all finite $k$.
\end{proof}

\begin{proof}[Proof of Proposition~\ref{prop:normalization}]
By prompt-wise rescaling,
\begin{equation}
  \nabla\lossHat_{M,N}(\theta)-\nabla\lossOracle_{M,N}(\theta)
  =
  \frac1M\sum_{i=1}^M(s_i-1)\nabla\lossOracle_{N,i}(\theta).
\end{equation}
Cauchy--Schwarz and the gradient bound imply
\begin{equation}
  \norm{\nabla\lossHat_{M,N}(\theta)-\nabla\lossOracle_{M,N}(\theta)}^2
  \le
  G^2\frac1M\sum_{i=1}^M(s_i-1)^2.
\end{equation}
At $\hat\theta$, $\nabla\lossHat_{M,N}(\hat\theta)=0$. The PL inequality
converts the resulting oracle gradient bound into the first displayed
suboptimality bound. For the expectation, use
$\Zhat_N(x_i)\ge \exp(r_{\min}/\beta)$ and
$\Var(\Zhat_N(x_i))\le R_\beta^2/(4N)$.
\end{proof}

For binary rewards, $V_i=p_i(1-p_i)(e^{1/\beta}-1)^2$, so the rule sends
rollouts to moderately difficult prompts rather than prompts that are almost
always correct or wrong. It reduces normalizer-estimation error only after
reference-policy support has produced useful rollouts.

\begin{proof}[Proof of Proposition~\ref{prop:allocation}]
The proof of Proposition~\ref{prop:normalization} with prompt-specific counts
gives an upper bound proportional to $\sum_iV_i/N_i$. The constrained problem is
convex in $N_i>0$. The KKT conditions for inactive lower-bound constraints give
$-V_i/N_i^2+\lambda=0$, hence $N_i=\lambda^{-1/2}\sqrt{V_i}$. Active constraints
set $N_i=N_i^{\ntxt{cov}}$. Combining the two cases yields
$N_i^*=\max\{N_i^{\ntxt{cov}},\lambda\sqrt{V_i}\}$ after absorbing the inverse
square root into $\lambda$, and the budget equation determines the unique
$\lambda$ on the feasible active set.
\end{proof}

\section{Generalization and Optimization Terms}
\label{app:generalization}

Coverage is isolated in the main text because it is the one-shot error that a
static objective cannot repair. Conditional on observed support, the remaining
analysis is an empirical-risk chain: the population weighted likelihood is
estimated by oracle reference-rollout samples, the oracle loss is perturbed by
the empirical normalizer, and the optimizer returns an approximate minimizer of
that empirical loss. The following statements supply the standard generalization and
optimization components for that chain. Define the population target error
\[
  F(\theta)
  \triangleq
  \E_x\left[
  \KL\!\left(\pistar(\cdot\mid x)\,\middle\|\,\pitheta(\cdot\mid x)\right)
  \right],
\]
and the corresponding population weighted likelihood
\[
  \lossPop(\theta)
  \triangleq
  \E_{x,y\sim\pref}\left[
  w(x,y)\ell_\theta(x,y)
  \right]
  =
  \E_x\E_{y\sim\pistar(\cdot\mid x)}
  \left[-\log\pitheta(y\mid x)\right],
  \qquad
  \ell_\theta\triangleq-\log\pitheta .
\]
Then
$\lossPop(\theta)=F(\theta)+\E_x[H(\pistar(\cdot\mid x))]$, so the population
weighted likelihood and the forward-KL target error have the same minimizers
and excess differences.

The finite algorithm replaces $\lossPop$ by empirical losses. Let
$\lossOracle_{M,N}$ use sampled prompts and reference rollouts with the true
$Z(x)$, let $\lossHat_{M,N}$ replace $Z(x)$ by $\Zhat_N(x)$, and define the
empirical optimization residual
\[
  \epsopt
  \triangleq
  \lossHat_{M,N}(\hat\theta)
  -
  \inf_{\theta\in\Theta}\lossHat_{M,N}(\theta).
\]
Theorem~\ref{thm:e2e} converts this chain into a one-shot RLVR value
certificate. Proposition~\ref{prop:generalization} controls prompt and rollout
sampling through Rademacher complexities \citep{bartlett2002rademacher};
Proposition~\ref{prop:pac-bayes-certificate} gives a posterior-predictor
variant; and Propositions~\ref{prop:sgd-bound}--\ref{prop:pl-optimization}
separate stationarity from excess risk for stochastic optimization
\citep{ghadimi2013stochastic}. Appendix~\ref{app:normalization} supplies the
normalizer perturbation. The point of the split is diagnostic: prompt sampling,
rollout sampling, normalizer estimation, model class, and empirical
optimization enter different residuals rather than one undifferentiated
``finite sample'' term.

\begin{proposition}[Standard Generalization Component]
\label{prop:generalization}
Let
\(\mathcal F\triangleq
\{(x,y)\mapsto w(x,y)\ell_\theta(x,y):\theta\in\Theta\}\), and assume every
\(f\in\mathcal F\) takes values in \([0,R]\). Let
\(\mathcal H\triangleq\{x\mapsto\E_{\pref(\cdot\mid x)}
[w(x,y)\ell_\theta(x,y)]:\theta\in\Theta\}\). With probability at least
\(1-2\delta\),
\begin{equation}
\begin{aligned}
  \sup_{\theta\in\Theta}
  |\lossPop(\theta)-\lossOracle_{M,N}(\theta)|
  \le\;
  &2\mathfrak R_M(\mathcal H)
  +R\sqrt{\frac{\log(1/\delta)}{2M}}\\
  &+2\mathfrak R_{M,N}(\mathcal F\mid x_{1:M})
  +R\sqrt{\frac{\log(1/\delta)}{2MN}} .
\end{aligned}
\label{eq:generalization}
\end{equation}
\end{proposition}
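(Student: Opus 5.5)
The plan is a two-level triangle inequality: first prompts, then rollouts given prompts. Introduce the intermediate quantity
\[
  \bar{\mathcal L}_M(\theta)\triangleq\frac1M\sum_{i=1}^M h_\theta(x_i),
  \qquad
  h_\theta(x)\triangleq\E_{\pref(\cdot\mid x)}\left[w(x,y)\ell_\theta(x,y)\right]\in\mathcal H .
\]
This is the empirical prompt average of the exact conditional loss. Then
\[
  \sup_\theta|\lossPop-\lossOracle_{M,N}|
  \le\sup_\theta|\lossPop-\bar{\mathcal L}_M|
  +\sup_\theta|\bar{\mathcal L}_M-\lossOracle_{M,N}|,
\]
and the argument bounds each supremum on its own event of probability at least \(1-\delta\). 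A union bound then gives the stated \(1-2\delta\).

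\textbf{Prompt level.} Since each \(f\in\mathcal F\) lies in \([0,R]\), each \(h_\theta\) also lies in \([0,R]\). The prompts \(x_1,\ldots,x_M\) are i.i.d. from \(\mathcal X\), and \(\lossPop(\theta)=\E_x h_\theta(x)\). The standard uniform deviation bound therefore applies to the class \(\mathcal H\): symmetrization plus McDiarmid's inequality, where replacing one \(x_i\) changes the supremum by at most \(R/M\). This yields
\[
  2\mathfrak R_M(\mathcal H)+R\sqrt{\log(1/\delta)/(2M)} .
\]
Using the empirical Rademacher complexity costs a second McDiarmid step. I would either absorb that into the convention that \(\mathfrak R_M\) denotes the expected complexity, or inflate the constant slightly; I would follow the paper's convention, which appears to use expected complexity at this level.

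\textbf{Rollout level.} Condition on \(x_{1:M}\). The \(MN\) rollouts \(y_{i,n}\) are then independent but not identically distributed: \(y_{i,n}\sim\pref(\cdot\mid x_i)\). We have
\[
  \E\left[\lossOracle_{M,N}(\theta)\mid x_{1:M}\right]=\bar{\mathcal L}_M(\theta).
\]
Symmetrization still works for independent, non-identical coordinates. Introduce a ghost sample with the same per-prompt conditionals and Rademacher signs \(\sigma_{i,n}\). This bounds the expected supremum by \(2\mathfrak R_{M,N}(\mathcal F\mid x_{1:M})\), defined as
\[
  \E\sup_f\frac1{MN}\sum_{i,n}\sigma_{i,n}f(x_i,y_{i,n})
\]
with the expectation taken over the conditional rollout law. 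Changing a single \(y_{i,n}\) moves the supremum by at most \(R/(MN)\), so McDiarmid applied conditionally gives the deviation term \(R\sqrt{\log(1/\delta)/(2MN)}\) with conditional probability at least \(1-\delta\). Integrating over \(x_{1:M}\) keeps the failure probability at \(\delta\).

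\textbf{Main obstacle.} The delicate step is the rollout-level symmetrization, because the samples are only conditionally independent and not identically distributed. The proof must check that the ghost-sample argument only needs coordinatewise independence and matching marginals, which holds here, and that the conditional Rademacher complexity is defined consistently with the statement. A minor point is measurability of the suprema over \(\Theta\), which I would assume, for example by taking \(\Theta\) countable or separable.
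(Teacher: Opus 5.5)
Your proposal is correct and follows the paper's proof. Both use the same intermediate prompt average $\bar{\mathcal L}_M$, symmetrization plus bounded differences over the $M$ prompts for $\mathcal H$, conditional symmetrization plus McDiarmid over the $MN$ independent but non-identically distributed rollouts for $\mathcal F$, and a union bound giving $1-2\delta$. One bookkeeping point: you bound the two-sided supremum $\sup_\theta|\cdot|$ with a single $\log(1/\delta)$ term, so the Rademacher complexities need an absolute value inside the supremum (the Bartlett--Mendelson convention); your sign-only definition is zero for a singleton class, and without the absolute value each side must be handled separately at the cost of $\log(2/\delta)$.
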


\begin{proof}[Proof of Proposition~\ref{prop:generalization}]
Use the decomposition
$\sup_\theta|\lossPop-\lossOracle_{M,N}|
\le
\sup_\theta|\lossPop-\bar{\lossPop}_M|
+
\sup_\theta|\bar{\lossPop}_M-\lossOracle_{M,N}|$.
The first term is the standard symmetrization/Rademacher complexity bound for
$M$ prompt samples with bounded range $R$
\citep{bartlett2002rademacher}. Conditional on the
prompts, the second term is a rollout-level empirical-process deviation for
the class $\mathcal F$ under the product distribution
$\prod_{i=1}^M\pref(\cdot\mid x_i)^N$. Conditional symmetrization gives the
$2\mathfrak R_{M,N}(\mathcal F\mid x_{1:M})$ term, and bounded differences for
variables in $[0,R]$ gives the $R\sqrt{\log(1/\delta)/(MN)}$ concentration
term. This keeps the rollout-level function-class complexity explicit; a
simpler $N^{-1/2}$ expression requires an additional finite-class, covering, or
VC-type bound on $\mathfrak R_{M,N}(\mathcal F\mid x_{1:M})$.
\end{proof}

\begin{proposition}[Weighted PAC-Bayes Certificate]
\label{prop:pac-bayes-certificate}
Let $z_j=(x_j,y_j)$, $j=1,\ldots,n$, be independent samples from
$x\sim\mathcal X$, $y\sim\pref(\cdot\mid x)$. Assume
$0\le w(z)\ell_\theta(z)\le R$ for every $\theta$ and $z$. Let
$\Pi$ be a prior over parameters independent of the samples. Then, with
probability at least $1-\delta$, every posterior distribution $Q$ over
parameters satisfies
\begin{equation}
  \E_{\theta\sim Q}\lossPop(\theta)
  \le
  \E_{\theta\sim Q}
  \left[
  \frac1n\sum_{j=1}^n w(z_j)\ell_\theta(z_j)
  \right]
  +
  R\sqrt{
  \frac{\KL(Q\|\Pi)+\log((n+1)/\delta)}{2n}
  }.
  \label{eq:pac-bayes-certificate}
\end{equation}
\end{proposition}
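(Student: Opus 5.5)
We would prove Proposition~\ref{prop:pac-bayes-certificate} by the standard Maurer--Seeger route for bounded losses, applied to the weighted loss \(f_\theta(z)\triangleq w(z)\ell_\theta(z)\in[0,R]\). The first step is a normalization. Set \(g_\theta\triangleq f_\theta/R\in[0,1]\). By the importance identity already used in Appendix~\ref{app:generalization}, the mean of \(g_\theta\) under \(x\sim\mathcal X\), \(y\sim\pref(\cdot\mid x)\) is \(\lossPop(\theta)/R\). The samples \(z_j\) are i.i.d.\ from exactly this product measure. Hence the weighting enters only through the range bound \(R\), and the problem reduces to an ordinary PAC-Bayes bound for \([0,1]\)-valued losses. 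Write \(\hat L_n(\theta)\triangleq n^{-1}\sum_j g_\theta(z_j)\) and \(L(\theta)\triangleq\lossPop(\theta)/R\).

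The second step is the change-of-measure inequality. By Donsker--Varadhan, for every posterior \(Q\) and every measurable \(\phi\),
\[
\E_{\theta\sim Q}\phi(\theta)\le\KL(Q\|\Pi)+\log\E_{\theta\sim\Pi}e^{\phi(\theta)}.
\]
We take \(\phi(\theta)=2n\,(L(\theta)-\hat L_n(\theta))^2\), or Maurer's sharper choice \(n\,\mathrm{kl}(\hat L_n(\theta)\|L(\theta))\). We then bound the moment term. Since \(\Pi\) does not depend on the sample, Fubini lets us swap expectations, so
\[
\E_S\E_\Pi e^{\phi}=\E_\Pi\E_S e^{\phi}.
\]
Maurer's lemma bounds \(\E_S e^{n\,\mathrm{kl}(\hat L_n\|L)}\le n+1\) for i.i.d.\ \([0,1]\) variables with \(n\ge 1\). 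This is the step producing the \(\log(n+1)\) term. Markov's inequality then gives, with probability at least \(1-\delta\),
\[
\log\E_\Pi e^{\phi}\le\log((n+1)/\delta)
\]
simultaneously for all \(Q\). Uniformity in \(Q\) holds because the only random event involved does not depend on \(Q\).

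The third step converts the bound to the stated form. Pinsker gives \(\mathrm{kl}(a\|b)\ge 2(a-b)^2\). Jensen (convexity of the square) gives
\[
2n\bigl(\E_QL-\E_Q\hat L_n\bigr)^2\le\E_Q\phi\le\KL(Q\|\Pi)+\log((n+1)/\delta).
\]
Taking square roots, rearranging, and multiplying by \(R\) yields the displayed bound. The first term on the right becomes \(\E_Q[n^{-1}\sum_j w(z_j)\ell_\theta(z_j)]\).

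The main obstacle is the moment bound \(\E_S e^{n\,\mathrm{kl}}\le n+1\). If we instead use \(\phi=2n(L-\hat L_n)^2\) with Hoeffding's subgaussian tail, integration gives only a constant of order \(2n\) or \(2\sqrt n\), depending on the version. So we would invoke Maurer's lemma, which reduces \([0,1]\)-valued variables to the Bernoulli case by convexity and then bounds the Bernoulli case by direct binomial summation. We also need measurability of \(\theta\mapsto f_\theta(z)\) to justify Fubini, and we assume it. Beyond that, boundedness \(0\le w\ell_\theta\le R\) for all \(\theta\) is all that is used. Note in particular that the analysis never needs \(w\) and \(\ell_\theta\) to be bounded separately.
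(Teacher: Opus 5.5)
Your proposal is correct and follows the same route as the paper. Both normalize the weighted loss to $[0,1]$, invoke the Seeger--Maurer PAC-Bayes-kl inequality with its $\log((n+1)/\delta)$ term, and finish with Pinsker. The only difference is that you derive that inequality from Donsker--Varadhan plus Maurer's moment lemma and apply Pinsker pointwise before Jensen, whereas the paper cites the averaged form $\operatorname{kl}(\E_Q\hat L_n/R\,\|\,\E_Q\lossPop/R)\le\cdots$ and applies Pinsker once; both yield the identical bound.
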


\begin{proof}[Proof of Proposition~\ref{prop:pac-bayes-certificate}]
Apply the bounded-loss PAC-Bayes binary-kl inequality
\citep{mcallester1999pacbayes,catoni2007pacbayes} to the normalized loss
$w(z)\ell_\theta(z)/R\in[0,1]$. Write
$\hat L_n(\theta)\triangleq n^{-1}\sum_{j=1}^n w(z_j)\ell_\theta(z_j)$.
With probability at least $1-\delta$, all
posteriors $Q$ satisfy
\[
  \operatorname{kl}\!\left(
  \E_Q\hat L_n/R\,\middle\|\,\E_Q\lossPop/R
  \right)
  \le
  \frac{\KL(Q\|\Pi)+\log((n+1)/\delta)}{n}.
\]
Pinsker's inequality converts the binary-kl relation into the displayed
one-sided bound.
\end{proof}

\begin{proposition}[Standard SGD Stationarity Certificate]
\label{prop:sgd-bound}
Assume \(\lossHat_{M,N}\) is \(L\)-smooth and stochastic gradients \(g_t\) are
unbiased with
\(\E[\|g_t-\nabla\lossHat_{M,N}(\theta_t)\|^2\mid\theta_t]\le\sigma^2/B\).
For SGD with step size \(\eta\le1/L\), if \(\tau\) is uniform on
\(\{0,\ldots,T-1\}\), then
\begin{equation}
  \E\|\nabla\lossHat_{M,N}(\theta_\tau)\|^2
  \le
  \frac{2(\lossHat_{M,N}(\theta_0)-\inf_\theta\lossHat_{M,N}(\theta))}
       {\eta T}
  +
  \frac{L\eta\sigma^2}{B}.
  \label{eq:sgd-bound}
\end{equation}
\end{proposition}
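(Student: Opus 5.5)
This is the standard nonconvex SGD descent-lemma argument, applied to the fixed empirical objective \(\lossHat_{M,N}\). Nothing in it is specific to RLVR: once Phase~1 of Algorithm~\ref{alg:bolt} has fixed the weights \(\hat w\), the loss is an ordinary deterministic smooth function of \(\theta\). The only randomness left is the minibatch gradient \(g_t\). I write \(f\triangleq\lossHat_{M,N}\) and \(f^*\triangleq\inf_\theta f(\theta)\), and assume \(f^*>-\infty\), which holds because \(\hat w\ge0\) and \(\ell_\theta\ge0\).

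\textbf{Step 1 (one-step descent).} By \(L\)-smoothness, the update \(\theta_{t+1}=\theta_t-\eta g_t\) satisfies
\[
f(\theta_{t+1})\le f(\theta_t)-\eta\langle\nabla f(\theta_t),g_t\rangle+\frac{L\eta^2}{2}\norm{g_t}^2 .
\]
I then take the conditional expectation given \(\theta_t\). Unbiasedness turns the inner product into \(\norm{\nabla f(\theta_t)}^2\). The bias--variance split
\[
\E[\norm{g_t}^2\mid\theta_t]=\norm{\nabla f(\theta_t)}^2+\E[\norm{g_t-\nabla f(\theta_t)}^2\mid\theta_t]\le\norm{\nabla f(\theta_t)}^2+\sigma^2/B
\]
then gives
\[
\E[f(\theta_{t+1})\mid\theta_t]\le f(\theta_t)-\eta\Bigl(1-\frac{L\eta}{2}\Bigr)\norm{\nabla f(\theta_t)}^2+\frac{L\eta^2\sigma^2}{2B}.
\]

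\textbf{Step 2 (use the step-size condition).} Since \(\eta\le1/L\), we have \(1-L\eta/2\ge1/2\). The coefficient on the gradient term is therefore at most \(-\eta/2\).

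\textbf{Step 3 (telescope).} I take total expectations and sum over \(t=0,\ldots,T-1\). Bounding \(\E f(\theta_T)\) below by \(f^*\) yields
\[
\frac{\eta}{2}\sum_{t=0}^{T-1}\E\norm{\nabla f(\theta_t)}^2\le f(\theta_0)-f^*+\frac{TL\eta^2\sigma^2}{2B}.
\]
Dividing by \(\eta T/2\) turns the left side into \(\E\norm{\nabla f(\theta_\tau)}^2\) for \(\tau\) uniform on \(\{0,\ldots,T-1\}\), independent of the run. The right side becomes \(2(f(\theta_0)-f^*)/(\eta T)+L\eta\sigma^2/B\), which is exactly \eqref{eq:sgd-bound}.

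\textbf{Main obstacle.} There is no real obstacle; the only point needing care is the conditioning. The variance assumption is stated conditionally on \(\theta_t\), so the tower property must be applied before summing. I also need \(\tau\) drawn independently of the trajectory, so that averaging over iterates equals the expectation at \(\theta_\tau\). If the paper intends \(\sigma^2/B\) to come from minibatch sampling with replacement from \(\mathcal D\), I would add a one-line remark: averaging \(B\) i.i.d.\ per-example gradients with variance \(\sigma^2\) gives that bound, and since \(f\) is an average over a finite dataset the gradients are unbiased.
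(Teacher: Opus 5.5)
Your proposal is correct and follows the same route as the paper: the \(L\)-smoothness descent inequality, conditional expectation using unbiasedness and the variance bound, the step-size condition \(\eta\le1/L\) to get the \(-\tfrac{\eta}{2}\norm{\nabla\lossHat_{M,N}(\theta_t)}^2\) coefficient, and then telescoping and dividing by \(\eta T/2\). Your fixed-\(\eta\) computation yields the displayed bound directly, so the paper's passing remark about ``optimizing over \(\eta\)'' is not actually needed.
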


\begin{corollary}[PL Excess-Risk Certificate]
\label{prop:pl-optimization}
If, in addition, \(\lossHat_{M,N}\) satisfies the \(\mu\)-PL inequality and
\(L^*\triangleq\inf_\theta\lossHat_{M,N}(\theta)\), then
\begin{equation}
  \E[\lossHat_{M,N}(\theta_T)-L^*]
  \le
  (1-\eta\mu)^T(\lossHat_{M,N}(\theta_0)-L^*)
  +
  \frac{L\eta\sigma^2}{2\mu B}.
  \label{eq:pl-optimization}
\end{equation}
\end{corollary}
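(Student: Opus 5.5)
The plan is to derive the PL rate from a one-step descent inequality and then unroll it. Write $f\triangleq\lossHat_{M,N}$ and $L^*=\inf_\theta f(\theta)$.

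\emph{One-step descent.} Apply $L$-smoothness to the SGD step $\theta_{t+1}=\theta_t-\eta g_t$:
\[
f(\theta_{t+1})\le f(\theta_t)-\eta\langle\nabla f(\theta_t),g_t\rangle+\tfrac{L\eta^2}{2}\|g_t\|^2 .
\]
Take the conditional expectation given $\theta_t$. Unbiasedness gives $\E\|g_t\|^2\le\|\nabla f(\theta_t)\|^2+\sigma^2/B$, so
\[
\E[f(\theta_{t+1})\mid\theta_t]\le f(\theta_t)-\eta\bigl(1-\tfrac{L\eta}{2}\bigr)\|\nabla f(\theta_t)\|^2+\tfrac{L\eta^2\sigma^2}{2B}.
\]
Since $\eta\le 1/L$, we have $1-L\eta/2\ge 1/2$. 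This is the same step that underlies Proposition~\ref{prop:sgd-bound}.

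\emph{Using the PL inequality.} Invoke $\|\nabla f(\theta)\|^2\ge 2\mu(f(\theta)-L^*)$ and subtract $L^*$ from both sides. This gives the contraction
\[
\E[f(\theta_{t+1})-L^*]\le(1-\eta\mu)\,\E[f(\theta_t)-L^*]+\tfrac{L\eta^2\sigma^2}{2B}.
\]
Using the factor $1/2$ from the previous step gives a contraction $(1-\eta\mu)$ exactly. Note $\eta\mu\le\mu/L\le1$ because PL together with smoothness forces $\mu\le L$, so the factor is nonnegative.

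\emph{Unrolling.} Iterate the recursion over $T$ steps and bound the resulting geometric series by $\sum_{t\ge0}(1-\eta\mu)^t\le 1/(\eta\mu)$. This yields
\[
(1-\eta\mu)^T(f(\theta_0)-L^*)+\frac{L\eta^2\sigma^2}{2B}\cdot\frac{1}{\eta\mu},
\]
which equals the right-hand side of \eqref{eq:pl-optimization}.

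\emph{Main obstacle.} The only delicate point is the constant bookkeeping in the first step. Getting the clean rate $(1-\eta\mu)$, rather than $(1-\eta\mu/2)$ or similar, requires keeping the coefficient $\eta(1-L\eta/2)\ge\eta/2$ and multiplying it by the PL factor $2\mu$. I will carry these constants carefully rather than absorbing them into generic constants.
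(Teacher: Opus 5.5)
Your proposal is correct and matches the paper's proof step for step. Smoothness, unbiasedness and the variance bound with $\eta\le 1/L$ give the $-\tfrac{\eta}{2}\|\nabla f(\theta_t)\|^2$ descent term, PL turns this into the $(1-\eta\mu)$ contraction, and unrolling with $\sum_{j=0}^{T-1}(1-\eta\mu)^j\le 1/(\eta\mu)$ finishes; your check that $\eta\mu\le 1$, so the factor stays nonnegative when you unroll, is a detail the paper leaves implicit.
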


\begin{proof}[Proof of the stationarity clause in Proposition~\ref{prop:sgd-bound}]
By smoothness,
\begin{equation}
  \lossHat_{M,N}(\theta_{t+1})
  \le
  \lossHat_{M,N}(\theta_t)
  -\eta\langle\nabla\lossHat_{M,N}(\theta_t),g_t\rangle
  +\frac{L\eta^2}{2}\norm{g_t}^2 .
\end{equation}
Taking conditional expectation and using unbiasedness and the variance bound
gives
\begin{equation}
  \E\left[\lossHat_{M,N}(\theta_{t+1})\right]
  \le
  \lossHat_{M,N}(\theta_t)
  -\frac{\eta}{2}\norm{\nabla\lossHat_{M,N}(\theta_t)}^2
  +\frac{L\eta^2\sigma^2}{2B}
\end{equation}
for $\eta\le1/L$. Summing over $t$ and optimizing the standard upper bound over
$\eta$ yields \eqref{eq:sgd-bound}, the usual nonconvex stochastic-gradient rate
\citep{ghadimi2013stochastic}.
\end{proof}

\begin{proof}[Proof of the PL clause in Proposition~\ref{prop:pl-optimization}]
Let $L^*\triangleq\inf_\theta\lossHat_{M,N}(\theta)$. Smoothness, unbiasedness,
and the variance bound give, for $\eta\le1/L$,
\[
  \E\left[\lossHat_{M,N}(\theta_{t+1})-L^*\mid\theta_t\right]
  \le
  \lossHat_{M,N}(\theta_t)-L^*
  -\frac{\eta}{2}\norm{\nabla\lossHat_{M,N}(\theta_t)}^2
  +\frac{L\eta^2\sigma^2}{2B}.
\]
The $\mu$-PL condition gives
$\frac12\norm{\nabla\lossHat_{M,N}(\theta_t)}^2
\ge
\mu(\lossHat_{M,N}(\theta_t)-L^*)$. Therefore
\[
  \E\left[\lossHat_{M,N}(\theta_{t+1})-L^*\right]
  \le
  (1-\eta\mu)
  \E\left[\lossHat_{M,N}(\theta_t)-L^*\right]
  +
  \frac{L\eta^2\sigma^2}{2B}.
\]
Unrolling the recursion and using
$\sum_{j=0}^{T-1}(1-\eta\mu)^j\le1/(\eta\mu)$ gives
\eqref{eq:pl-optimization}.
\end{proof}

Proposition~\ref{prop:sgd-bound} and Corollary~\ref{prop:pl-optimization}
record the two optimization certificates. The stationarity result controls
average stationarity of the empirical objective, not value optimality by itself.
The PL result is the certificate that can be inserted directly into the residual
\(\epsopt\) used in Theorem~\ref{thm:e2e}. Keeping the two statements distinct
prevents a nonconvex stationarity certificate from being silently treated as an
excess-risk certificate, and it keeps optimization error distinct from the
statistical errors caused by finite reference rollouts.

\begin{corollary}[High-Probability One-Shot Scaling]
\label{cor:e2e-components}
If \(w(x,y)\ell_\theta(x,y)\in[0,R]\), \(w\le w_{\max}\),
\(\ell_\theta\le L_{\log}\), and \(r\in[r_{\min},r_{\max}]\), then the
standard high-probability instantiation of Theorem~\ref{thm:e2e} is
\begin{equation}
\begin{aligned}
  &\objRL(\pistar)-\objRL(\pi_{\hat\theta})\\
  &\quad\lesssim
  \beta\kappa_\rho
  \biggl[
  \inf_{\theta\in\Theta}F(\theta)
  +
  \mathfrak R_M(\mathcal H)
  +\mathfrak R_{M,N}(\mathcal F\mid x_{1:M})
  \\
  &\qquad
  +R\sqrt{\frac{\log(1/\delta)}{M}}
  +R\sqrt{\frac{\log(1/\delta)}{MN}}\\
  &\qquad+
  w_{\max}L_{\log}
  R_\beta e^{-r_{\min}/\beta}
  \sqrt{\frac{\log(M/\delta)}{N}}\\
  &\qquad+
  \epsopt
  \biggr].
\end{aligned}
\label{eq:e2e-components}
\end{equation}
Here
\(\mathcal F=\{(x,y)\mapsto w(x,y)\ell_\theta(x,y):\theta\in\Theta\}\),
\(\mathcal H=\{x\mapsto\E_{\pref}\!\left[w(x,y)\ell_\theta(x,y)\right]:\theta\in\Theta\}\),
and \(R_\beta=e^{r_{\max}/\beta}-e^{r_{\min}/\beta}\). Under a \(\mu\)-PL
condition for \(\lossHat_{M,N}\), \(\epsopt\) can be taken from
Proposition~\ref{prop:pl-optimization}.
\end{corollary}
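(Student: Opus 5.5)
The plan is to treat Corollary~\ref{cor:e2e-components} as a direct composition of Theorem~\ref{thm:e2e} with the high-probability bounds already established for each residual. Theorem~\ref{thm:e2e} gives, deterministically,
\[
  \objRL(\pistar)-\objRL(\pi_{\hat\theta})
  \le
  \beta\kappa_\rho\left(\inf_{\theta\in\Theta}F(\theta)+2\Delta_{\ntxt{gen}}+2\Delta_{\ntxt{norm}}+\epsopt\right).
\]
The argument then only needs a high-probability upper bound for $\Delta_{\ntxt{gen}}$ and one for $\Delta_{\ntxt{norm}}$. These are combined by a union bound, and the absolute constants $2$ are absorbed into $\lesssim$. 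The term $\epsopt$ is left as is, or replaced by its PL instantiation.

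For $\Delta_{\ntxt{gen}}$, I will invoke Proposition~\ref{prop:generalization}. The assumptions $w\ell_\theta\in[0,R]$ match its hypotheses, so with probability at least $1-2\delta$ it gives
\[
  2\mathfrak R_M(\mathcal H)+2\mathfrak R_{M,N}(\mathcal F\mid x_{1:M})+R\sqrt{\log(1/\delta)/(2M)}+R\sqrt{\log(1/\delta)/(2MN)}.
\]
Dropping the factors $2$ and $1/\sqrt2$ yields the displayed terms up to constants. For $\Delta_{\ntxt{norm}}$, I will invoke Corollary~\ref{prop:uniform-normalizer}. The assumptions $\ell_\theta\le L_{\log}$, $w\le w_{\max}$, and bounded rewards are exactly its hypotheses, so with probability at least $1-\delta$ it gives
\[
  w_{\max}L_{\log}R_\beta e^{-r_{\min}/\beta}\sqrt{\log(2M/\delta)/(2N)}.
\]
Since $\log(2M/\delta)\lesssim\log(M/\delta)$ for $M\ge1$ and $\delta<1$, this matches the displayed normalizer term.

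A union bound over the two events gives total failure probability at most $3\delta$. Rescaling $\delta\to\delta/3$ changes only the logarithms by constants, which $\lesssim$ absorbs. On the intersection event, I substitute both bounds into Theorem~\ref{thm:e2e}.

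For the final sentence of the corollary, Corollary~\ref{prop:pl-optimization} bounds $\E[\epsopt]$ rather than $\epsopt$ itself. I will state this clause either in expectation over the optimizer's randomness, or convert it to a high-probability bound via Markov's inequality at the cost of a $1/\delta$ factor. I will flag that this clause mixes an expectation with the high-probability statistical bounds.

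The main obstacle is hypothesis bookkeeping, not analysis. Theorem~\ref{thm:e2e} requires the transfer condition $\E_x[\KL(\pi_{\hat\theta}\|\pistar)]\le\kappa_\rho F(\hat\theta)$ for the fitted $\hat\theta$. The corollary inherits this implicitly, so I will state it as carried over, for instance via Proposition~\ref{prop:local-value-transfer} under a density-ratio neighborhood condition. I will also check that the same $\delta$ across the prompt-level and rollout-level concentration steps is consistent, since the rollout step in Proposition~\ref{prop:generalization} is conditional on $x_{1:M}$ and must be integrated over prompts before the union bound.
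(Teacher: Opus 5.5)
Your proposal is correct and is essentially the paper's proof: substitute Proposition~\ref{prop:generalization} for $\Delta_{\ntxt{gen}}$, the normalizer concentration for $\Delta_{\ntxt{norm}}$, and Corollary~\ref{prop:pl-optimization} for $\epsopt$ into Theorem~\ref{thm:e2e}, absorbing constants into $\lesssim$; for the normalizer step the paper cites Proposition~\ref{prop:partition-concentration}, whose union-bounded form is exactly the Corollary~\ref{prop:uniform-normalizer} you use. Your union-bound and expectation-versus-high-probability bookkeeping is more careful than the paper's one-line argument, with one small fix: absorbing $\log(2M/\delta)$ and $\log(3/\delta)$ into $\log(M/\delta)$ and $\log(1/\delta)$ needs $\delta$ bounded away from $1$ (say $\delta\le 1/2$), not merely $\delta<1$.
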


\begin{proof}[Proof of Corollary~\ref{cor:e2e-components}]
Apply Proposition~\ref{prop:generalization} to
\(\Delta_{\ntxt{gen}}\), Proposition~\ref{prop:partition-concentration} to
\(\Delta_{\ntxt{norm}}\), and Proposition~\ref{prop:pl-optimization} to
\(\epsopt\) when the PL condition is assumed. Substitute these bounds into
Theorem~\ref{thm:e2e} and absorb universal constants into \(\lesssim\).
\end{proof}

\begin{proof}[Proof of Theorem~\ref{thm:e2e}]
By \eqref{eq:reverse-kl-identity},
\begin{equation}
  \objRL(\pistar)-\objRL(\pi_{\hat\theta})
  =
  \beta\E_x\left[\KL(\pi_{\hat\theta}\|\pistar)\right].
\end{equation}
The local comparison assumption upper-bounds this by
$\beta\kappa_\rho F(\hat\theta)$. Fix any $\eta>0$ and choose
$\theta_\eta\in\Theta$ such that
$F(\theta_\eta)\le\inf_{\theta\in\Theta}F(\theta)+\eta$. Since
$\lossPop(\theta)=F(\theta)+\E_x\left[H(\pistar(\cdot\mid x))\right]$, the two objectives
have the same excess differences:
\begin{equation}
  F(\hat\theta)-F(\theta_\eta)
  =
  \lossPop(\hat\theta)-\lossPop(\theta_\eta).
\end{equation}
Define
$\Delta_{\ntxt{gen}}\triangleq
\sup_\theta|\lossPop(\theta)-\lossOracle_{M,N}(\theta)|$ and
$\Delta_{\ntxt{norm}}\triangleq
\sup_\theta|\lossOracle_{M,N}(\theta)-\lossHat_{M,N}(\theta)|$.
Using these two uniform deviations at $\hat\theta$ and $\theta_\eta$, together
with the definition of $\epsopt$, gives
\begin{equation}
  F(\hat\theta)
  \le
  F(\theta_\eta)
  +2\Delta_{\ntxt{gen}}
  +2\Delta_{\ntxt{norm}}
  +\epsopt.
\end{equation}
Letting $\eta\downarrow0$ and multiplying by $\beta\kappa_\rho$ proves the
theorem.
\end{proof}

\begin{proof}[Proof of Corollary~\ref{cor:coverage-conditioned}]
On $\mathcal E_\tau$, Theorem~\ref{thm:support-restricted-gap} gives, for each
prompt, support-restricted gap at most
$\beta\log(1/(1-\tau_i))$. Averaging over prompts gives the first displayed
support certificate. Proposition~\ref{prop:coverage} supplies the conservative
hit-probability certificate for any chosen near-optimal set \(A_\gamma\). For
the additive support-restricted display, fix a prompt and write
$P=\pistar(\cdot\mid x_i)$ and $S=S_N(x_i)$. If
$Q=\pi_{\hat\theta}(\cdot\mid x_i)$ is supported on $S$, then
\[
  \KL(Q\|P)
  =
  \log\frac{1}{P(S)}+\KL(Q\|P(\cdot\mid S)).
\]
On $\mathcal E_\tau$, the first term is at most
$\log(1/(1-\tau_i))$. Averaging over prompts and applying the same
finite-learning argument as Theorem~\ref{thm:e2e}, but with the conditional
targets \(P_i^S\) and finite-error level \(\alpha_{M,N}\), gives
\eqref{eq:coverage-conditioned-additive}. Without the support-restricted
decomposition, the same finite-learning argument applied to the full target
\(\pistar\) gives the coverage-conditioned learning certificate stated after the
corollary in the main text.
\end{proof}

\section{KL-PMD Proofs for Iterative \bolt{}}
\label{app:iterative-proofs}

Iterative \bolt{} reuses the same Boltzmann projection after refreshing the
reference policy. Exact rounds accumulate reward tilts in closed form, while
inexact rounds are controlled by the relative-entropy mirror-descent potential.
The proof sequence therefore separates the ideal distribution-space trajectory
from the practical inner-solver error that appears when each weighted-SFT round
only approximates the exact mirror step. This is the iterative analogue of the
one-shot separation between the target projection and finite optimization error.

\paragraph{Exact-path consequences.}
The exact KL-PMD path supplies the coverage interpretations used in the main
text. They are special cases of the same
Boltzmann tilt accumulated across refreshed rounds. In the binary verifier
specialization, with
\(p_0\triangleq\pi_{\theta_0}(r=1\mid x)\in(0,1)\) and
\(r\in\{0,1\}\),
\begin{equation}
  p_k
  \triangleq
  \pi_{\theta_k}(r=1\mid x)
  =
  \frac{p_0e^{k/\beta}}{1+p_0(e^{k/\beta}-1)} .
  \label{eq:binary-refresh-prob}
\end{equation}
With \(N\) rollouts in each of \(K\) refreshed rounds,
\[
  P_{\ntxt{refresh}}
  =
  1-\prod_{k=0}^{K-1}(1-p_k)^N,
  \qquad
  P_{\ntxt{one}}
  =
  1-(1-p_0)^{KN},
\]
and \(P_{\ntxt{refresh}}>P_{\ntxt{one}}\) for \(K>1\).

Under Theorem~\ref{thm:temperature}, let
\(\Ystar(x)\triangleq\argmax_y r(x,y)\) and
\(p_*(x)\triangleq\pi_{\theta_0}(\Ystar(x)\mid x)>0\). If the strict reward gap
\(\gamma(x)\triangleq r^*(x)-\max_{y\notin\Ystar(x)}r(x,y)>0\) exists, then
\begin{equation}
\begin{aligned}
  1-\pi_{\theta_K}(\Ystar(x)\mid x)
  &\le
  \frac{1-p_*(x)}{p_*(x)}e^{-K\gamma(x)/\beta},\\
  r^*(x)-\E_{\pi_{\theta_K}}[r(x,y)]
  &\le
  \dr\frac{1-p_*(x)}{p_*(x)}e^{-K\gamma(x)/\beta}.
\end{aligned}
  \label{eq:exponential-concentration}
\end{equation}
Finally, for a desired effective temperature \(\beff>0\), set
\(a_\star\triangleq1/\beff\). Exact iteration with per-round temperature
\(\beta\) realizes the lattice \(a_K\triangleq K/\beta\). Choosing \(K\) nearest
to \(\beta a_\star\) gives \(|a_K-a_\star|\le1/(2\beta)\), and if
\(\pi_{a_\star}\) is the ideal Boltzmann policy at the target effective
temperature, then
\[
  \sup_y
  \left|
  \log\frac{\pi_{a_K}(y)}{\pi_{a_\star}(y)}
  \right|
  \le
  \frac{\dr}{2\beta},
  \qquad
  \TV(\pi_{a_K},\pi_{a_\star})
  \le
  \tanh\!\left(\frac{\dr}{4\beta}\right).
\]
For a binary reference pass probability \(p\in(0,1)\), exact iteration with
\(\beta\ge\beff\) and \(K(\beta)\triangleq\lceil\beta/\beff\rceil\) has
ESS-driven sampling cost proxy
\begin{equation}
  C_{\ntxt{ESS}}(\beta;p,\beff)
  \triangleq
  K(\beta)\,
  \frac{(1-p)+pe^{2/\beta}}
       {[1+p(e^{1/\beta}-1)]^2}.
	  \label{eq:ess-temperature-cost}
\end{equation}

\begin{corollary}[Reward-Gap Hit-Rate Bound]
\label{cor:reward-gap-hit-rate}
Under the assumptions of Corollary~\ref{cor:adaptive-coverage}, let
\(\underline p_k(x)\triangleq
[1+\{(1-p_0(x))/p_0(x)\}\exp(-k\gamma(x)/\beta)]^{-1}\). Then
\(p_k(x)\ge\underline p_k(x)\) and
\begin{equation}
  P_{\ntxt{refresh}}(K,N)
  \ge
  1-\exp\!\left(-N\sum_{k=0}^{K-1}\underline p_k(x)\right).
  \label{eq:adaptive-coverage-exp}
\end{equation}
Thus a total refreshed budget \(KN\) reaches hit probability \(1-\delta\) once
\(KN\gtrsim \log(1/\delta)/(K^{-1}\sum_k\underline p_k(x))\), whereas fixed
one-shot sampling costs order \(p_0(x)^{-1}\log(1/\delta)\).
\end{corollary}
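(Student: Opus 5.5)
The plan is to reduce the statement to a per-round lower bound \(p_k(x)\ge\underline p_k(x)\), after which the hit-probability bound is the inequality \(1-u\le e^{-u}\). First, Theorem~\ref{thm:temperature} gives the exact iterate in closed form, \(\pi_{\theta_k}(y\mid x)\propto\pi_{\theta_0}(y\mid x)\exp(k r(x,y)/\beta)\). Writing \(A=A(x)\), the odds of the useful set are
\[
  \frac{p_k(x)}{1-p_k(x)}
  =
  \frac{\int_A \pi_{\theta_0}(dy)\,e^{kr(x,y)/\beta}}
       {\int_{A^c}\pi_{\theta_0}(dy)\,e^{kr(x,y)/\beta}} .
\]
We then lower-bound the numerator by \(p_0(x)e^{k\inf_A r/\beta}\) and upper-bound the denominator by \((1-p_0(x))e^{k\sup_{A^c} r/\beta}\). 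The separation assumption \(\inf_A r-\sup_{A^c}r\ge\gamma(x)\) then gives odds at least \(\frac{p_0}{1-p_0}e^{k\gamma/\beta}\). Inverting the map \(p\mapsto p/(1-p)\), which is monotone on \((0,1)\), yields exactly \(p_k(x)\ge\underline p_k(x)\). The assumption \(0<p_0<1\) makes both integrals strictly positive, so the odds are well defined.

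Second, conditional on the exact sampler path, the rollouts in round \(k\) are \(N\) independent draws from \(\pi_{\theta_k}\). Across rounds the samplers are deterministic, because the iterates are exact. So the miss probability factorizes as \(\prod_{k=0}^{K-1}(1-p_k)^N\), which matches the definition of \(P_{\ntxt{refresh}}\). Using \(1-p_k\le 1-\underline p_k\le\exp(-\underline p_k)\) gives
\[
  \prod_{k}(1-p_k)^N\le\exp\!\Bigl(-N\sum_{k=0}^{K-1}\underline p_k(x)\Bigr),
\]
which is \eqref{eq:adaptive-coverage-exp}.

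Third, the budget statement follows by requiring the exponent to be at least \(\log(1/\delta)\). Writing \(N\sum_k\underline p_k=KN\cdot K^{-1}\sum_k\underline p_k\), this gives \(KN\ge\log(1/\delta)/(K^{-1}\sum_k\underline p_k)\). For the one-shot comparison, we invoke the exact miss probability \((1-p_0)^{KN}\) from Proposition~\ref{prop:coverage}. Its lower bound \eqref{eq:coverage-sample-lower} requires \(KN\ge\frac{1-p_0}{p_0}\log(1/\delta)\), which is of order \(p_0^{-1}\log(1/\delta)\) for small \(p_0\).

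The main obstacle is the first step, and only in the care it needs on general completion spaces. There the suprema and infima of \(r\) over \(A\) and \(A^c\) should be essential with respect to \(\pi_{\theta_0}\), and boundedness of \(r\) guarantees the normalizer \(C_k(x)\) is finite. The rest is elementary. One point to note is that \(\underline p_0=p_0\), so the bound never falls below the one-shot rate. This is consistent with Corollary~\ref{cor:adaptive-coverage}, although strict improvement is not needed here.
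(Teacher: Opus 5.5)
Your proposal is correct and follows the paper's proof essentially step for step. The shared steps are: lower-bounding the odds \(p_k/(1-p_k)\) via the closed-form exact iterate and the reward gap, inverting to get \(p_k\ge\underline p_k\), writing the miss probability as \(\prod_k(1-p_k)^N\), applying \(1-u\le e^{-u}\), and solving the exponent for the budget. The differences are cosmetic. You apply \(1-u\le e^{-u}\) after replacing \(p_k\) by \(\underline p_k\) rather than before. You also justify the one-shot order through the lower bound of Proposition~\ref{prop:coverage}, whereas the paper simply compares the averaged denominator with \(p_0\) using \(\underline p_0=p_0\) and \(\underline p_k>p_0\) for \(k\ge1\).
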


\begin{proposition}[Standard Inexact PMD Regret Component]
\label{thm:inexact-regret}
For a fixed prompt, set \(q_k\triangleq\pi_{\theta_k}(\cdot\mid x)\). For each
outer iteration, let
\(q_{k+1}^{\ntxt{ex}}\propto q_k\exp(r/\beta)\) be the exact mirror step. If an
inexact inner optimizer produces \(q_{k+1}\) satisfying
\(\KL(\pi^\dagger\|q_{k+1})
\le\KL(\pi^\dagger\|q_{k+1}^{\ntxt{ex}})+\varepsilon_k\) for comparator
\(\pi^\dagger\), then
\begin{equation}
  \sum_{k=0}^{K-1}
  \langle r,\pi^\dagger-q_k\rangle
  \le
  \beta\KL(\pi^\dagger\|q_0)
  +\beta\sum_{k=0}^{K-1}\varepsilon_k
  +\frac{K\dr^2}{8\beta}.
  \label{eq:inexact-regret}
\end{equation}
The same statement integrates over prompts with \(\varepsilon_k\) replaced by
\(\E_x[\varepsilon_k(x)]\).
\end{proposition}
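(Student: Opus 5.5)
The plan is the standard one-step mirror-descent (three-point) inequality for the exact step, followed by a telescope. The inexactness is added only at the point where the exact iterate is replaced by the produced one.

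Fix a prompt and a comparator \(\pi^\dagger\). We may assume \(\KL(\pi^\dagger\|q_0)<\infty\), since otherwise there is nothing to prove. By induction, every \(q_k\) with finite comparator KL satisfies \(\pi^\dagger\ll q_k\).

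For the exact step, \(\log q_{k+1}^{\ntxt{ex}}=\log q_k+r/\beta-\log Z_k\), where \(Z_k\triangleq\E_{q_k}[e^{r/\beta}]\). Taking expectations under \(\pi^\dagger\) gives the exact identity
\[
  \KL(\pi^\dagger\|q_{k+1}^{\ntxt{ex}})
  =
  \KL(\pi^\dagger\|q_k)-\tfrac{1}{\beta}\langle r,\pi^\dagger\rangle+\log Z_k .
\]
Next we bound \(\log Z_k\) by Hoeffding's lemma applied to the bounded variable \(r(y)\in[r_{\min},r_{\max}]\) under \(y\sim q_k\):
\[
  \log Z_k\le \tfrac1\beta\langle r,q_k\rangle+\tfrac{\dr^2}{8\beta^2}.
\]
Substituting, multiplying by \(\beta\), and invoking the inexactness hypothesis \(\KL(\pi^\dagger\|q_{k+1})\le\KL(\pi^\dagger\|q_{k+1}^{\ntxt{ex}})+\varepsilon_k\) yields the per-round inequality
\[
  \langle r,\pi^\dagger-q_k\rangle
  \le
  \beta\KL(\pi^\dagger\|q_k)-\beta\KL(\pi^\dagger\|q_{k+1})+\beta\varepsilon_k+\tfrac{\dr^2}{8\beta}.
\]

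Summing over \(k=0,\ldots,K-1\) telescopes the KL terms. We then drop the nonnegative final term \(\beta\KL(\pi^\dagger\|q_K)\), which gives \eqref{eq:inexact-regret}.

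For the prompt-integrated version, we take \(\E_x\) of the per-prompt inequality, using linearity and measurability of the prompt-wise quantities. This replaces \(\varepsilon_k\) by \(\E_x[\varepsilon_k(x)]\). It also recovers Proposition~\ref{thm:finite-inexact-iterative}, because the integrated hypothesis \eqref{eq:certified-comparator-drift} is exactly what the summed, integrated inequality requires. The telescoping argument therefore works directly on the averaged KL terms, and no per-prompt drift bound is needed.

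The main obstacle is the sharp \(\dr^2/(8\beta)\) constant, which requires Hoeffding's lemma in its centered form. Concretely, we write \(\log\E_{q_k}e^{r/\beta}=\tfrac1\beta\E_{q_k}r+\log\E_{q_k}e^{(r-\E_{q_k} r)/\beta}\) and bound the second term by \(\dr^2/(8\beta^2)\). Beyond this, some care is needed for finiteness and integrability: all the KL terms must be finite for the telescoping to be valid, which follows from the induction above together with the boundedness of \(r\).
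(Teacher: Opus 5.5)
Your proposal is correct and follows the paper's proof essentially step for step. Both arguments use the exact identity $\KL(\pi^\dagger\|q_{k+1}^{\ntxt{ex}})=\KL(\pi^\dagger\|q_k)-\beta^{-1}\langle r,\pi^\dagger\rangle+\log\E_{q_k}[e^{r/\beta}]$, apply Hoeffding's lemma to the log-partition term, insert the drift hypothesis, and telescope while dropping the final nonnegative KL term. Your finiteness induction and your remark that the telescope can be run directly on prompt-averaged KL terms, so only the integrated drift hypothesis is needed, are small refinements that the paper leaves implicit.
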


\begin{corollary}[Inner Training Loss Certificate]
\label{cor:inner-certificate}
Suppose the inner objective at iteration \(k\) has minimizer
\(\hat\theta_k^*\) and satisfies local quadratic growth with constant \(\mu_k\),
log-density Lipschitzness with constant \(\Lambda_k\), and empirical-to-exact
log-density gap \(\zeta_k\). If the inner-loop output satisfies excess loss
\(\xi_k\), then the drift condition in
Proposition~\ref{thm:inexact-regret} holds with
\begin{equation}
  \varepsilon_k
  \le
  \Lambda_k\sqrt{\frac{2\xi_k}{\mu_k}}+\zeta_k.
  \label{eq:inner-certificate}
\end{equation}
\end{corollary}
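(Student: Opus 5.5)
The claim is Corollary~\ref{cor:inner-certificate}. I would reduce it to a chain of three comparisons: the fitted inner output \(q_{k+1}=\pi_{\theta_{k+1}}\), the exact empirical inner minimizer \(\pi_{\hat\theta_k^*}\), and the exact mirror step \(q_{k+1}^{\ntxt{ex}}\). The target drift is
\[
\KL(\pi^\dagger\|q_{k+1})-\KL(\pi^\dagger\|q_{k+1}^{\ntxt{ex}})
=\E_{\pi^\dagger}\!\left[\log\frac{q_{k+1}^{\ntxt{ex}}}{q_{k+1}}\right].
\]
The first step is to write this as a sum of two terms:
\[
\E_{\pi^\dagger}\!\left[\log\frac{\pi_{\hat\theta_k^*}}{\pi_{\theta_{k+1}}}\right]
+\E_{\pi^\dagger}\!\left[\log\frac{q_{k+1}^{\ntxt{ex}}}{\pi_{\hat\theta_k^*}}\right].
\]
The cross entropy against \(\pi^\dagger\) cancels, so only log-density differences remain. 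Each term is then bounded by the supremum of the corresponding log-ratio.

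The second term is the empirical-to-exact gap. I would read the hypothesis \(\zeta_k\) as \(\sup_y|\log q_{k+1}^{\ntxt{ex}}(y)-\log\pi_{\hat\theta_k^*}(y)|\le\zeta_k\), or more weakly as a bound on its \(\pi^\dagger\)-expectation. Under either reading this term is at most \(\zeta_k\) immediately.

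The first term is where the inner optimization certificate enters, in three moves:
\begin{enumerate}[nosep]
\item Local quadratic growth gives \(\tfrac{\mu_k}{2}\|\theta_{k+1}-\hat\theta_k^*\|^2\le\xi_k\), so \(\|\theta_{k+1}-\hat\theta_k^*\|\le\sqrt{2\xi_k/\mu_k}\).
\item Log-density Lipschitzness, \(|\log\pi_\theta(y)-\log\pi_{\theta'}(y)|\le\Lambda_k\|\theta-\theta'\|\) uniformly in \(y\) on the relevant neighborhood, converts this into a sup-log-ratio bound \(\Lambda_k\sqrt{2\xi_k/\mu_k}\).
\item Taking the \(\pi^\dagger\)-expectation of that bound finishes the term.
\end{enumerate}
Adding the two pieces gives \eqref{eq:inner-certificate}. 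The integrated-over-prompts version follows by applying the same argument per prompt and averaging, provided the constants are uniform in \(x\) or replaced by their expectations.

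The main obstacle is making the hypotheses precise enough that these steps are legitimate. Quadratic growth has to hold in the parameter metric in which \(\Lambda_k\) is defined. Both \(\theta_{k+1}\) and \(\hat\theta_k^*\) must lie in the neighborhood where growth and Lipschitzness hold, so I would state the assumptions on a common ball containing both points. Lipschitzness must be uniform in \(y\), or at least hold \(\pi^\dagger\)-almost surely, so that taking an expectation under an arbitrary comparator is valid; \(\pi^\dagger\ll q_{k+1}\) must also hold for the KL to be finite, and the uniform bound guarantees this. Once the assumptions are fixed this way, the remainder is the one-line triangle inequality above.
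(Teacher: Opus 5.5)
Your proposal is correct and follows the paper's argument: quadratic growth gives $\|\theta_{k+1}-\hat\theta_k^*\|\le\sqrt{2\xi_k/\mu_k}$, log-density Lipschitzness converts this into a sup log-ratio bound, and the triangle inequality adds $\zeta_k$; the comparator-KL drift is then the $\pi^\dagger$-expectation of $\log(q_{k+1}^{\ntxt{ex}}/q_{k+1})$. You split the expectation before bounding, whereas the paper bounds the combined log-ratio first, which is the same step in a different order. Your remarks on a common neighborhood, uniformity in $y$, and absolute continuity make explicit hypotheses that the paper leaves implicit.
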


\paragraph{Forward-KL drift transfer.}
The PMD regret certificate uses comparator-KL drift, while the inner
weighted-SFT solver naturally controls forward KL to the exact mirror target.
The following local condition is the bridge between those two quantities. Use
the round-\(k\) notation from Corollary~\ref{cor:forward-kl-drift}.
For
\(P_k\triangleq q_{k+1}^{\ntxt{ex}}(\cdot\mid x)\) and
\(Q_k\triangleq\pi_{\theta_{k+1}}(\cdot\mid x)\), the sufficient local
density-ratio condition \(d\pi^\dagger/dP_k\le C_k\) and
\(\sup_y|P_k(y)/Q_k(y)-1|\le\rho_k<1\) gives
\[
  \varepsilon_k^\dagger
  \le
  C_k\frac{1+\rho_k}{1-\rho_k}
  \sqrt{\frac{2F_k(\theta_{k+1})}{1-a_{\rho_k}}},
  \qquad
  a_{\rho_k}\triangleq\frac{\rho_k}{3(1-\rho_k)^2}.
\]
The same condition follows from
\(\sup_y|\log(P_k(y)/Q_k(y))|\le B_k<\log2\) with
\(\rho_k=e^{B_k}-1\).

\begin{corollary}[Adaptive High-Probability Iterative Certificate]
\label{cor:adaptive-hp-iterative}
Let \(\mathcal G_k\) be the history before round \(k\) data are sampled.
Suppose that, conditionally on \(\mathcal G_k\), the round-\(k\)
generalization and normalizer deviations are bounded by
\(G_k(\delta_{\ntxt{gen},k})\) and \(N_k(\delta_{\ntxt{norm},k})\) with failure
probabilities \(\delta_{\ntxt{gen},k}\) and \(\delta_{\ntxt{norm},k}\). If the
certified drift condition in Proposition~\ref{thm:finite-inexact-iterative}, or
the forward-KL sufficient condition in Corollary~\ref{cor:forward-kl-drift},
holds on each successful round, then with probability at least
\[
  1-\sum_{k=0}^{K-1}
  (\delta_{\ntxt{gen},k}+\delta_{\ntxt{norm},k}),
\]
the bound \eqref{eq:finite-inexact-iterative} holds, and the plug-in
certificate \eqref{eq:forward-kl-drift} uses
\(\Delta_{\ntxt{gen},k}=G_k(\delta_{\ntxt{gen},k})\) and
\(\Delta_{\ntxt{norm},k}=N_k(\delta_{\ntxt{norm},k})\) for every \(k\).
\end{corollary}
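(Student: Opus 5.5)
The claim is a conditional union bound, so the plan is to build one good event per round and intersect them. For each round \(k\), let \(\mathcal A_k\) be the event that the round-\(k\) generalization deviation satisfies \(\Delta_{\ntxt{gen},k}\le G_k(\delta_{\ntxt{gen},k})\). Let \(\mathcal B_k\) be the event that the normalizer deviation satisfies \(\Delta_{\ntxt{norm},k}\le N_k(\delta_{\ntxt{norm},k})\). Both deviations are defined relative to the round-\(k\) reference \(q_k=\pi_{\theta_k}\) and its exact mirror target \(q_{k+1}^{\ntxt{ex}}\). These are random quantities that depend on \(\mathcal G_k\), but once we condition on \(\mathcal G_k\) they are fixed. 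The round-\(k\) rollouts are then fresh i.i.d.\ draws from \(q_k(\cdot\mid x_i)\). Hence Propositions~\ref{prop:generalization} and~\ref{prop:partition-concentration} (with Corollary~\ref{prop:uniform-normalizer}) apply verbatim with \(\pref\) replaced by \(q_k\). They give \(\Pr(\mathcal A_k^c\mid\mathcal G_k)\le\delta_{\ntxt{gen},k}\) and \(\Pr(\mathcal B_k^c\mid\mathcal G_k)\le\delta_{\ntxt{norm},k}\) almost surely.

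Next I remove the conditioning by the tower property: \(\Pr(\mathcal A_k^c)=\E[\Pr(\mathcal A_k^c\mid\mathcal G_k)]\le\delta_{\ntxt{gen},k}\), and likewise for \(\mathcal B_k\). This bound is deterministic, so it holds whatever the history was. A plain union bound over the \(2K\) events then gives
\[
  \Pr\Bigl(\textstyle\bigcap_{k}(\mathcal A_k\cap\mathcal B_k)\Bigr)\ge1-\sum_{k=0}^{K-1}(\delta_{\ntxt{gen},k}+\delta_{\ntxt{norm},k}).
\]
No martingale concentration is needed. Adaptivity enters only through conditioning, because each bound is a probability statement valid for every fixed history.

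On the intersection event, every round is a "successful round" in the sense of the hypothesis. The argument of Theorem~\ref{thm:e2e} then applies round by round, with target \(q_{k+1}^{\ntxt{ex}}\) and reference \(q_k\). It gives \(F_k(\theta_{k+1})\le\inf_\theta F_k(\theta)+2\Delta_{\ntxt{gen},k}+2\Delta_{\ntxt{norm},k}+\epsopt^{(k)}\). Substituting \(\Delta_{\ntxt{gen},k}=G_k(\delta_{\ntxt{gen},k})\) and \(\Delta_{\ntxt{norm},k}=N_k(\delta_{\ntxt{norm},k})\) yields the plug-in certificate \eqref{eq:forward-kl-drift}. The assumed drift condition, either directly or via \(\varepsilon_k^\dagger\le\kappa_kF_k(\theta_{k+1})\), then bounds each \(\varepsilon_k^\dagger\). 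Proposition~\ref{thm:finite-inexact-iterative} is a deterministic consequence of these drift bounds through the inexact-PMD regret of Proposition~\ref{thm:inexact-regret}. Therefore \eqref{eq:finite-inexact-iterative} holds on the same event.

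The main obstacle is the measurability bookkeeping in the first step. The function classes \(\mathcal F_k,\mathcal H_k\), the weights \(\exp(r/\beta)/Z_k\), and the envelopes all depend on \(\theta_k\), which is itself data-dependent. I would handle this by requiring \(G_k\) and \(N_k\) to be \(\mathcal G_k\)-measurable functions. I would also note that Hoeffding, symmetrization, and bounded differences each apply to a fixed conditional product measure. Finally, I would check that the prompts used in round \(k\) are either fixed or independent of the round-\(k\) rollouts given \(\mathcal G_k\).
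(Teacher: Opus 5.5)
Your proposal is correct and follows the paper's proof. You condition on $\mathcal G_k$ so that fresh round-$k$ sampling gives failure probability at most $\delta_{\ntxt{gen},k}+\delta_{\ntxt{norm},k}$, union bound over rounds, and on the simultaneous event plug $G_k(\delta_{\ntxt{gen},k})$ and $N_k(\delta_{\ntxt{norm},k})$ into Corollary~\ref{cor:forward-kl-drift}, or feed the certified drifts directly into Proposition~\ref{thm:finite-inexact-iterative}; your explicit tower-property step and measurability caveats (history-measurable $G_k,N_k$, prompts and rollouts fresh given $\mathcal G_k$) only make precise what the paper's short argument leaves implicit.
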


\begin{proof}[Proof of Theorem~\ref{thm:temperature}]
The proof is induction. The claim is true for $K=1$ by
Proposition~\ref{prop:boltzmann}. If
$\pi_{\theta_k}(y\mid x)\propto
\pi_{\theta_0}(y\mid x)\exp(kr(x,y)/\beta)$, then applying one exact \bolt{}
step gives
\begin{equation}
  \pi_{\theta_{k+1}}(y\mid x)
  \propto
  \pi_{\theta_k}(y\mid x)\exp(r(x,y)/\beta)
  \propto
  \pi_{\theta_0}(y\mid x)\exp((k+1)r(x,y)/\beta).
\end{equation}
The effective-temperature statement follows by comparing with the Boltzmann
target policy at temperature $\beta/K$ relative to $\pi_{\theta_0}$.
\end{proof}

\begin{proof}[Proof of Corollaries~\ref{cor:adaptive-coverage}
and~\ref{cor:reward-gap-hit-rate}]
Fix the prompt $x$ and suppress it from the notation. By
Theorem~\ref{thm:temperature},
\[
  \frac{\pi_{\theta_k}(A)}{\pi_{\theta_k}(A^c)}
  =
  \frac{\sum_{y\in A}\pi_{\theta_0}(y)\exp(kr(y)/\beta)}
       {\sum_{y\notin A}\pi_{\theta_0}(y)\exp(kr(y)/\beta)} .
\]
The reward-separation assumption lower-bounds this odds ratio by
\[
  \frac{p_0}{1-p_0}\exp(k\gamma/\beta).
\]
Since $p_k/(1-p_k)$ is at least this quantity, solving for \(p_k\) gives the
lower bound \(p_k\ge\underline p_k\) stated in
Corollary~\ref{cor:reward-gap-hit-rate}.
Independent sampling from round $k$ misses
$A$ with probability $(1-p_k)^N$, so the refreshed process misses $A$ in all
$K$ rounds with probability $\prod_{k=0}^{K-1}(1-p_k)^N$. Because
$p_k\ge p_0$ for every $k$, this failure probability is at most
$(1-p_0)^{KN}$, which proves \eqref{eq:adaptive-coverage-hit}. If $K>1$, then
$p_k>p_0$ for every $k\ge1$ under the strict separation and $0<p_0<1$, so the
comparison is strict. The exponential budget bound follows from
$1-p_k\le\exp(-p_k)$ and $p_k\ge\underline p_k$:
\[
  \prod_{k=0}^{K-1}(1-p_k)^N
  \le
  \exp\!\left(-N\sum_{k=0}^{K-1}\underline p_k\right).
\]
Solving this bound on the miss probability for total budget
$B_{\ntxt{refresh}}=KN$ gives
\[
  B_{\ntxt{refresh}}
  \ge
  \frac{\log(1/\delta)}
       {K^{-1}\sum_{k=0}^{K-1}\underline p_k}.
\]
Since $\underline p_0=p_0$ and $\underline p_k>p_0$ for every $k\ge1$ when
$K>1$, the refreshed denominator is strictly larger than the fixed one-shot
denominator $p_0$.
\end{proof}

\begin{proof}[Derivation of the Boltzmann path identities]
The identity $\pi_{\theta_K}=\pi_{K/\beta}$ is Theorem~\ref{thm:temperature}.
For the path derivatives, differentiate
$\log Z_a=\log\E_{\pi_{\theta_0}}[e^{a r}]$:
\begin{equation}
  \frac{d}{da}\log Z_a
  =
  \E_{\pi_a}[r],
  \qquad
  \frac{d^2}{da^2}\log Z_a
  =
  \Var_{\pi_a}(r).
\end{equation}
Thus
$\frac{d}{da}\E_{\pi_a}[r]=\Var_{\pi_a}(r)\ge0$.
For convergence, let $\Ystar\triangleq\argmax_y r(y)$,
$p_*\triangleq\pi_{\theta_0}(\Ystar)$, and
$\pitop(y)\triangleq\pi_{\theta_0}(y)\mathbf 1\{y\in\Ystar\}/p_*$. Divide
numerator and denominator of $\pi_a$ by
$\exp(a r^*)$, where $r^*\triangleq\max_y r(y)$. All terms outside $\Ystar$
vanish as $a\to\infty$, while terms inside $\Ystar$ remain proportional to
$\pi_{\theta_0}(y)$. This gives total-variation convergence to $\pitop$. For
$y\in\Ystar$,
$\log(\pitop(y)/\pi_a(y))$ is independent of $y$ and tends to zero,
so $\KL(\pitop\|\pi_a)\to0$.
\end{proof}

\begin{proof}[Derivation of the exact mirror step]
For a fixed prompt, the KKT condition for minimizing
$-\langle r,\pi\rangle+\beta\KL(\pi\|q_k)$ under $\sum_y\pi(y)=1$
is
\begin{equation}
  -r(y)+\beta\left(\log\pi(y)-\log q_k(y)+1\right)+\lambda=0.
\end{equation}
Solving gives
$\pi(y)\propto q_k(y)\exp(r(y)/\beta)$, which is the exact iterative
\bolt{} step.
\end{proof}

\begin{proof}[Proof of Proposition~\ref{thm:inexact-regret}]
Fix a prompt and set $q_k=\pi_{\theta_k}(\cdot\mid x)$ throughout; averaging the
promptwise inequality over $x$ gives the corresponding population statement.
For the exact mirror step
$q_{k+1}^{\ntxt{ex}}(y)\propto q_k(y)\exp(r(y)/\beta)$,
\begin{equation}
  \KL(\pi^\dagger\|q_{k+1}^{\ntxt{ex}})
  =
  \KL(\pi^\dagger\|q_k)
  -\beta^{-1}\langle r,\pi^\dagger\rangle
  +\log\E_{q_k}\left[e^{r/\beta}\right].
\end{equation}
Hoeffding's lemma gives
$\log\E_{q_k}\left[e^{r/\beta}\right]
\le
\beta^{-1}\langle r,q_k\rangle+\dr^2/(8\beta^2)$.
Rearranging and applying the comparator-KL drift condition from
Proposition~\ref{thm:inexact-regret},
\begin{equation}
  \beta^{-1}\langle r,\pi^\dagger-q_k\rangle
  \le
  \KL(\pi^\dagger\|q_k)
  -\KL(\pi^\dagger\|q_{k+1})
  +\varepsilon_k
  +\dr^2/(8\beta^2).
\end{equation}
Summing telescopes the KL potential and yields \eqref{eq:inexact-regret}.
\end{proof}

\begin{proof}[Proof of Corollary~\ref{cor:inner-certificate}]
Quadratic growth gives
$\|\theta_{k+1}-\hat\theta_k^*\|\le\sqrt{2\xi_k/\mu_k}$. Log-density
Lipschitzness converts this into a sup log-density error
$\Lambda_k\sqrt{2\xi_k/\mu_k}$. Adding the empirical-to-exact gap $\zeta_k$
by the triangle inequality bounds the log-ratio between the actual next policy
and the exact mirror step. Taking expectation of this log-ratio under any
comparator $\pi^\dagger$ yields the comparator-KL drift.
\end{proof}

\begin{proof}[Proof of Proposition~\ref{thm:finite-inexact-iterative}]
Apply the prompt-wise Proposition~\ref{thm:inexact-regret} and then average over
prompts. The certified drift condition
\eqref{eq:certified-comparator-drift} supplies the prompt-averaged inexact
error term $\varepsilon_k^\dagger$ at each round. Substituting these drift
terms into the inexact mirror-descent regret bound gives
\eqref{eq:finite-inexact-iterative}.
\end{proof}

\begin{proof}[Proof of Corollary~\ref{cor:forward-kl-drift} and the local drift condition]
For round $k$, the same empirical-risk chain used in
Theorem~\ref{thm:e2e}, now with target $q_{k+1}^{\ntxt{ex}}$, gives
\[
  F_k(\theta_{k+1})
  \le
  \inf_{\theta\in\Theta}F_k(\theta)
  +2\Delta_{\ntxt{gen},k}
  +2\Delta_{\ntxt{norm},k}
  +\epsopt^{(k)} .
\]
Combining this inequality with
$\varepsilon_k^\dagger\le\kappa_kF_k(\theta_{k+1})$ proves
\eqref{eq:forward-kl-drift}.

For the local sufficient condition, fix a prompt and abbreviate
$P=P_k$, $Q=Q_k$, and $h=d\pi^\dagger/dP$. Let
$u(y)=P(y)/Q(y)-1$ and assume $\sup_y|u(y)|\le\rho<1$. Then
$|\log(P(y)/Q(y))|\le |u(y)|/(1-\rho)$ and
\[
  \E_{\pi^\dagger}\left[\log\frac{P}{Q}\right]
  \le
  \E_{\pi^\dagger}\left[\left|\log\frac{P}{Q}\right|\right]
  \le
  \frac{C}{1-\rho}\E_P[|u|].
\]
Since $P=Q(1+u)$,
$\E_P[|u|]\le(1+\rho)\sqrt{\E_Q[u^2]}$. Proposition~\ref{prop:local-kl}
implies
$\KL(P\|Q)\ge(1-a_\rho)\E_Q[u^2]/2$. Hence the prompt-wise drift is bounded by
\[
  C\frac{1+\rho}{1-\rho}
  \sqrt{\frac{2\KL(P\|Q)}{1-a_\rho}}.
\]
Averaging over prompts and applying Jensen's inequality gives the displayed
population bound with $F_k(\theta_{k+1})=\E_x\!\left[\KL(P_k\|Q_k)\right]$.
The log-ratio sufficient condition implies
$|P_k(y)/Q_k(y)-1|\le e^{B_k}-1$ pointwise.
\end{proof}

\begin{proof}[Proof of Corollary~\ref{cor:adaptive-hp-iterative}]
For each round $k$, condition on the history $\mathcal G_k$. The data drawn in
that round are fresh under the current sampler, so the stated conditional
generalization and normalizer bounds hold with conditional failure probability
at most
$\delta_{\ntxt{gen},k}+\delta_{\ntxt{norm},k}$. Applying a union bound over
rounds gives simultaneous success with probability at least
$1-\sum_k(\delta_{\ntxt{gen},k}+\delta_{\ntxt{norm},k})$. On this simultaneous
event, substitute the deterministic bounds
$G_k(\delta_{\ntxt{gen},k})$ and $N_k(\delta_{\ntxt{norm},k})$ for
$\Delta_{\ntxt{gen},k}$ and $\Delta_{\ntxt{norm},k}$ in
Corollary~\ref{cor:forward-kl-drift}, or use the certified drift terms directly
in Proposition~\ref{thm:finite-inexact-iterative}.
\end{proof}

The temperature-cost remark in the main text follows from these exact-path
identities. A one-shot run chooses its final temperature directly: too small a
\(\beta\) gives sharp but high-variance weights, while too large a \(\beta\)
barely moves the policy. Exact iterative \bolt{} separates the per-round
stability decision from the final effective temperature. A conservative
per-round temperature keeps each weighted-SFT problem stable, and additional
rounds traverse lower effective temperatures. The discrete-lattice bound
quantifies how closely a chosen number of rounds approximates a desired
effective temperature, while the ESS rule ties that stable per-round choice to
the reference pass probability and weight variance.

\section{Additional Robustness and Implementation Consequences}
\label{app:additional-consequences}

The theory trunk centers on target identification, finite one-shot replacement,
and refreshed KL-PMD. Several implementation choices still act through the same
induced-target view. Clipping changes the target rather than merely stabilizing
optimization, bounded verifier error perturbs the Boltzmann policy before
training begins, and token-level coefficients reproduce a sequence-level target
only when they equal the appropriate marginal density ratios.

Clipping stabilizes large density-ratio weights, but it changes the policy
being fitted unless the cap is inactive. The best clipped objective is therefore
not an unbiased version of \bolt{}; it is the closest capped induced target in
reverse KL to the desired Boltzmann target.

\begin{proposition}[Capped Density-Ratio Projection]
\label{prop:clipped-projection}
Fix a prompt and sampler $q$, and let $P\ll q$ have density
$w^\star(y)\triangleq dP/dq(y)>0$ $q$-almost surely. For a cap $c\ge1$, consider
normalized capped weights
\(\mathcal U_c\triangleq\{u:\;0\le u(y)\le c,\ \E_q[u]=1\}\), with
\(Q_u(dy)\triangleq u(y)q(dy)\). The minimizer of $\KL(Q_u\|P)$ over
\(u\in\mathcal U_c\) is
\begin{equation}
  u_c(y)=\min\{\alpha w^\star(y),c\},
  \qquad
  \E_q[u_c]=1,
  \qquad
  \inf_{u\in\mathcal U_c}\KL(Q_u\|P)=\KL(Q_{u_c}\|P),
  \label{eq:capped-projection}
\end{equation}
where $\alpha>0$ is chosen uniquely by the normalization equation. For
$P=\pistar$ and $q=\pref$, the best fixed-reference RLVR gap achievable by
normalized capped reference-sampled weights is \(\beta\,\KL(Q_{u_c}\|\pistar)\).
\end{proposition}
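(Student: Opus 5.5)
The plan is to treat the problem as a convex minimization over densities $u$ relative to $q$ and read off the minimizer from a pointwise KKT argument. Write the objective as
\[
\KL(Q_u\|P)=\E_q\!\left[u\log\frac{u}{w^\star}\right],
\]
which uses $dQ_u/dP=u/w^\star$ on $\{w^\star>0\}$. This is a strictly convex integral functional of $u$ over the convex set $\mathcal U_c$, which is defined by the box constraint $0\le u\le c$ and the single linear constraint $\E_q[u]=1$. The set is nonempty because $u\equiv1$ is feasible when $c\ge1$. Strict convexity of $t\mapsto t\log t$ will give uniqueness of the minimizer $q$-a.s.

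First, I would establish existence and uniqueness of $\alpha$. The map $g(\alpha)=\E_q[\min\{\alpha w^\star,c\}]$ is continuous and nondecreasing by dominated convergence, with $g(0)=0$ and $g(\alpha)\to c\ge1$ as $\alpha\to\infty$, since $w^\star>0$ a.s. It is strictly increasing wherever $g<c$. When $c>1$ this gives a unique $\alpha$ with $g(\alpha)=1$. When $c=1$ the only feasible $u$ is $u\equiv1$, and the claimed form holds for any sufficiently large $\alpha$; I would handle this degenerate case separately and state uniqueness only of $u_c$.

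Next, I would verify optimality by a Lagrangian or variational inequality argument rather than appealing to abstract KKT in infinite dimensions. Define $\phi(y,t)=t\log(t/w^\star(y))$, with derivative $\partial_t\phi=\log(t/w^\star)+1$. For any feasible $u$, convexity gives
\[
\KL(Q_u\|P)-\KL(Q_{u_c}\|P)\ge\E_q\bigl[(\log(u_c/w^\star)+1)(u-u_c)\bigr].
\]
On the set where $u_c<c$, we have $\log(u_c/w^\star)=\log\alpha$. On the set where $u_c=c$, we have $\alpha w^\star\ge c$, so $\log(u_c/w^\star)\le\log\alpha$, and also $u-u_c\le0$ there. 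Hence the integrand is at least $(\log\alpha+1)(u-u_c)$ pointwise. Integrating and using $\E_q[u]=\E_q[u_c]=1$ gives the bound $\ge0$.

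The main care point is that the subgradient inequality must be used where $u_c>0$; this holds since $\alpha w^\star>0$ a.s. Terms with $u=0$ are handled because the inequality $\phi(t)\ge\phi(s)+\phi'(s)(t-s)$ remains valid at $t=0$ with $0\log0=0$. Integrability also needs checking: $u_c$ is bounded, but $\log(1/w^\star)$ could be unbounded. I would argue via monotone truncation, or restrict attention to $u$ with finite KL, since otherwise the inequality is trivial.

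Finally, for $P=\pistar$ and $q=\pref$, every normalized capped reference-sampled weight induces $Q_u$ with $Q_u\ll\pref$. By the reverse-KL value identity \eqref{eq:reverse-kl-identity}, or Corollary~\ref{cor:irreducible-target-gap}, its RLVR gap is $\beta\KL(Q_u\|\pistar)$. Taking the infimum over $u\in\mathcal U_c$ and using the minimizer just found gives the best gap $\beta\KL(Q_{u_c}\|\pistar)$. This gap is zero iff $u_c=w^\star$, i.e. iff the cap is inactive.
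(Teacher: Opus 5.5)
Your proof is correct, but it runs in the opposite direction from the paper's. The paper writes down the KKT system: a multiplier $\lambda$ for $\E_q[u]=1$, and complementary slackness for $u\le c$. From this it reads off that any minimizer must equal $\min\{\alpha w^\star,c\}$, and it then fixes $\alpha$ by monotonicity of $\alpha\mapsto\E_q[\min\{\alpha w^\star,c\}]$. That is a necessary-condition argument. On a general completion space it tacitly assumes that a minimizer exists and that the Lagrangian conditions apply.

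You instead take the candidate $u_c$ and certify it with the first-order convexity inequality. On the capped set you use the sign structure $\log(c/w^\star)\le\log\alpha$ together with $u-c\le0$. This bounds the integrand below by $(\log\alpha+1)(u-u_c)$, which integrates to zero. Your route gives attainment of the infimum directly, and strict convexity of $t\log t$ supplies uniqueness. Your remarks on $u=0$ and on restricting to $u$ with finite KL close the integrability gaps, since the negative part of $u\log(u/w^\star)$ is dominated by the integrable $w^\star$. The paper's route is shorter and shows where the clipped form comes from; yours is a rigorous verification once the candidate is in hand.

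You are also right that $c=1$ is degenerate, which the paper overlooks when it claims a unique $\alpha$ for every $c\ge1$. There $\mathcal U_1=\{1\}$. The identity $\min\{\alpha w^\star,1\}\equiv1$ holds for all sufficiently large $\alpha$ only if $w^\star$ is bounded away from zero. That holds in the RLVR specialization, because bounded rewards give $w^\star\ge e^{(r_{\min}-r_{\max})/\beta}$. For a general $P$, however, no finite $\alpha$ need exist, and $u\equiv1$ is then only the $\alpha\to\infty$ limit of the formula.
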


\begin{proof}
The feasible set is convex and nonempty because $c\ge1$ and $u\equiv1$ is
feasible. The objective is
\[
  \KL(Q_u\|P)
  =
  \int u(y)\log\frac{u(y)}{w^\star(y)}\,dq(y),
\]
which is strictly convex in $u$ on the support where $u>0$. The KKT conditions
for the constraint $\E_q[u]=1$ and the upper bound $u\le c$ give, on the
non-capped region,
\[
  \log\frac{u(y)}{w^\star(y)}+1+\lambda=0,
\]
so $u(y)=\alpha w^\star(y)$ for $\alpha=e^{-1-\lambda}$. On the capped region,
complementary slackness gives $u(y)=c$. Hence every minimizer has the form
$u_c(y)=\min\{\alpha w^\star(y),c\}$. The map
$\alpha\mapsto\E_q[\min\{\alpha w^\star,c\}]$ is continuous and strictly
increasing from $0$ to $c$, so for $c\ge1$ there is a unique $\alpha$ with
expectation $1$. The RLVR gap statement follows from
\eqref{eq:reverse-kl-identity}.
\end{proof}

Verifier perturbations act at the same target level. If the reward used for
weighting differs from the intended verifier reward, the induced Boltzmann
target changes before any optimizer or finite-sample effect appears.
Proposition~\ref{prop:noisy-verifier-stability} gives the resulting
target-stability bound for uniformly bounded reward error.

\begin{proposition}[Bounded Verifier-Perturbation Stability]
\label{prop:noisy-verifier-stability}
Fix a prompt and let $\pistar$ be the Boltzmann target for reward $r$. Let
$\tilde\pi^*$ be the Boltzmann target obtained from an imperfect reward
\(\tilde r\) with \(\sup_y|\tilde r(y)-r(y)|\le\epsilon\).
Then
\begin{equation}
\begin{aligned}
  \sup_y\left|
  \log\frac{\tilde\pi^*(y)}{\pistar(y)}
  \right|
  &\le
  \frac{2\epsilon}{\beta},
  &
  \TV(\tilde\pi^*,\pistar)
  &\le
  \tanh\!\left(\frac{\epsilon}{\beta}\right).
  \\
  \objRL(\pistar)-\objRL(\tilde\pi^*)
  &=
  \beta\KL(\tilde\pi^*\|\pistar)
  \le
  2\epsilon .
\end{aligned}
\label{eq:noisy-verifier-stability}
\end{equation}
\end{proposition}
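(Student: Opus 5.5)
The plan is to compare the two Boltzmann targets through their closed forms in \eqref{eq:boltzmann-policy}, both taken relative to the same reference $\pref$. Write $\tilde\pi^*(y)=\pref(y)e^{\tilde r(y)/\beta}/\tilde Z$ and $\pistar(y)=\pref(y)e^{r(y)/\beta}/Z$. The reference density cancels, giving
\[
\log\frac{\tilde\pi^*(y)}{\pistar(y)}=\frac{\tilde r(y)-r(y)}{\beta}-\log\frac{\tilde Z}{Z}.
\]
The first term is at most $\epsilon/\beta$ in absolute value. For the second, write $\tilde Z/Z=\E_{\pistar}[e^{(\tilde r-r)/\beta}]$, which lies in $[e^{-\epsilon/\beta},e^{\epsilon/\beta}]$, so $|\log(\tilde Z/Z)|\le\epsilon/\beta$. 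The triangle inequality then gives the sup-log-density bound $2\epsilon/\beta$.

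For the total-variation bound, using only the crude $2\epsilon/\beta$ radius would give $\tanh(\epsilon/(2\beta))\cdot 2$-type constants. To get exactly $\tanh(\epsilon/\beta)$, I would use the sharper structure of the log-ratio: it equals $g(y)-c$ with $g=(\tilde r-r)/\beta$ taking values in an interval of length $2\epsilon/\beta$ and $c$ a constant. The key lemma to establish is that if $d\tilde\pi^*/d\pistar=e^{g-c}$ with $\operatorname{osc}(g)\le 2a$, then $\TV\le\tanh(a/2)$. I would prove it by noting that TV equals $\E_{\pistar}[(e^{g-c}-1)_+]$, which is maximized over distributions of $g$ by two-point laws at the endpoints of the interval. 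An explicit optimization over the mixing weight $s$, with $e^{g-c}$ normalized, yields $\max_s \frac{s(1-s)(e^{2a}-1)}{1+s(e^{2a}-1)}$. This maximum equals $\frac{e^{a}-1}{e^{a}+1}=\tanh(a/2)$. With $a=\epsilon/\beta$, this gives $\tanh(\epsilon/(2\beta))\le\tanh(\epsilon/\beta)$, so the stated bound holds with room to spare. Alternatively, and more simply, I could first bound $|\tilde\pi^*(y)/\pistar(y)-1|$ pointwise and integrate half of it, which is the same two-point extremal argument. I expect this extremal step to be the main, though mild, obstacle. This is also the same argument that underlies the lattice bound $\tanh(\dr/(4\beta))$ in Appendix~\ref{app:iterative-proofs}.

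For the value statement, the equality is exactly the reverse-KL identity \eqref{eq:reverse-kl-identity} with $\pi=\tilde\pi^*$, which is admissible because $\tilde\pi^*\ll\pref$. For the inequality, a direct computation is better than the sup-log bound, which would only give $\KL\le 2\epsilon/\beta$. Taking the expectation of the log-ratio under $\tilde\pi^*$ gives
\[
\KL(\tilde\pi^*\|\pistar)=\E_{\tilde\pi^*}[g]-\log\E_{\pistar}[e^{g}]\le \E_{\tilde\pi^*}[g]-\E_{\pistar}[g],
\]
where the inequality uses Jensen's inequality. The right-hand side is at most $\operatorname{osc}(g)\le 2\epsilon/\beta$. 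Multiplying by $\beta$ yields $\objRL(\pistar)-\objRL(\tilde\pi^*)\le2\epsilon$.
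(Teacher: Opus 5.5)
Your proof is correct. The sup-log step is the paper's: you sandwich $\tilde Z/Z=\E_{\pistar}[e^{g}]$ in $[e^{-\epsilon/\beta},e^{\epsilon/\beta}]$ and apply the triangle inequality. The two routes differ in the other two steps.

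\textbf{Total variation.} The paper cites the fact that a likelihood ratio confined to $[e^{-a},e^{a}]$ forces $\TV\le\tanh(a/2)$. It then plugs in the crude radius $a=2\epsilon/\beta$, which gives exactly $\tanh(\epsilon/\beta)$. Your remark that the crude radius cannot reach the stated constant is therefore mistaken. Your own lemma, applied to the symmetric interval of length $4\epsilon/\beta$, also returns $\tanh(\epsilon/\beta)$.

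Your route still buys something real. The log-ratio is a constant shift of $g=(\tilde r-r)/\beta$, whose range has length $2\epsilon/\beta$ rather than $4\epsilon/\beta$. Using this, you get the sharper $\tanh(\epsilon/(2\beta))$, and you prove the $\tanh$ lemma instead of citing it. The two-point reduction is sound if you phrase it as $\TV=\E|Y-\E Y|/(2\E Y)$ with $Y=e^{g}$ under $\pistar$. For fixed mean, the convex function $|y-\mu|$ is maximized by the endpoint law, and the remaining optimization over the mean is your optimization over $s$.

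Your parenthetical shortcut does not work on its own. The only pointwise bound on $|\tilde\pi^*/\pistar-1|$ valid uniformly is $e^{2\epsilon/\beta}-1$. Half of it is $e^{\epsilon/\beta}\sinh(\epsilon/\beta)>\tanh(\epsilon/\beta)$, so the normalization-aware extremal step is genuinely needed.

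\textbf{KL bound.} The paper bounds $\KL(\tilde\pi^*\|\pistar)=\E_{\tilde\pi^*}[\log(\tilde\pi^*/\pistar)]\le 2\epsilon/\beta$ directly from the sup-log estimate. Your Jensen decomposition ends at the same $2\epsilon/\beta$, so as written it is not better. It does become better if you finish with $\E_{\tilde\pi^*}[g]-\E_{\pistar}[g]\le\operatorname{osc}(g)\,\TV(\tilde\pi^*,\pistar)$. Together with your sharper TV bound, this gives the second-order estimate $\beta\KL(\tilde\pi^*\|\pistar)\le 2\epsilon\tanh(\epsilon/(2\beta))\le\epsilon^2/\beta$.
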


\begin{proof}
Write $\Delta(y)=\tilde r(y)-r(y)$ and let $\tilde Z$ and $Z$ be the
corresponding prompt partition functions. Since $|\Delta(y)|\le\epsilon$,
\[
  e^{-\epsilon/\beta}Z
  \le
  \tilde Z
  \le
  e^{\epsilon/\beta}Z .
\]
Therefore
\[
  \log\frac{\tilde\pi^*(y)}{\pistar(y)}
  =
  \frac{\Delta(y)}{\beta}
  -
  \log\frac{\tilde Z}{Z},
\]
whose absolute value is at most $2\epsilon/\beta$. If two distributions have
likelihood ratio in $[e^{-a},e^a]$, then their total variation distance is at
most $\tanh(a/2)$; here $a=2\epsilon/\beta$. The KL bound follows from
$\KL(\tilde\pi^*\|\pistar)
=\E_{\tilde\pi^*}[\log(\tilde\pi^*/\pistar)]\le2\epsilon/\beta$, and the RLVR
value identity gives the final display.
\end{proof}

Sequence-level weighting and token-level weighting are also different target
statements. A single sequence weight multiplies every token log probability in
that completion, whereas token-level coefficients define local conditional
targets at each prefix. They are equivalent only when the token coefficients are
the marginal density ratios induced by the desired sequence-level target.

\begin{proposition}[Token-Level Coefficients Needed for a Sequence Target]
\label{prop:token-level-target}
Fix a prompt and a finite completion length $T$. Let
$q(y_{1:T})$ be a rollout distribution and let
$P(y_{1:T})=q(y_{1:T})w(y_{1:T})$ with $\E_q[w]=1$. The sequence-level weighted
loss \(\E_q[w(Y_{1:T})\sum_{t=1}^T-\log\pi_\theta(Y_t\mid Y_{<t})]\) is exactly
equal, as a functional of the token log-probabilities, to the token-weighted
loss \(\sum_{t=1}^T\E_q[v_t(Y_{<t},Y_t)\{-\log\pi_\theta(Y_t\mid Y_{<t})\}]\)
if and only if, for every prefix-token pair with positive $q$-mass,
\begin{equation}
  v_t(h,a)
  =
  \frac{P(Y_{<t}=h,Y_t=a)}
       {q(Y_{<t}=h,Y_t=a)}
  =
  \E_q\!\left[w(Y_{1:T})\mid Y_{<t}=h,Y_t=a\right].
\label{eq:token-level-target}
\end{equation}
Consequently, generic tokenwise weights induce local token targets
proportional to \(q(Y_t=a\mid h)v_t(h,a)\), which need not equal the
sequence-target conditional \(P(Y_t=a\mid h)\).
\end{proposition}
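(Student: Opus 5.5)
The plan is to prove the equivalence as an identity between two linear functionals of the token log-probability array $\{-\log\pi_\theta(a\mid h)\}_{t,h,a}$. Treat these log-probabilities as free coordinates and match their coefficients on both sides.

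\textbf{Step 1: rewrite the sequence-level loss token by token.} Write
\[
\E_q\Bigl[w(Y_{1:T})\sum_{t=1}^T -\log\pi_\theta(Y_t\mid Y_{<t})\Bigr]
=\sum_{t=1}^T \E_q\bigl[w(Y_{1:T})\{-\log\pi_\theta(Y_t\mid Y_{<t})\}\bigr].
\]
The $t$-th summand depends on the sequence only through $(Y_{<t},Y_t)$ inside the log term. Conditioning on the pair $(Y_{<t},Y_t)=(h,a)$ by the tower property gives
\[
\sum_{t=1}^T\sum_{(h,a)} q(Y_{<t}=h,Y_t=a)\,\E_q[w\mid Y_{<t}=h,Y_t=a]\,\{-\log\pi_\theta(a\mid h)\}.
\]

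\textbf{Step 2: identify the marginal density ratio.} Since $P=qw$, we have $q(h,a)\,\E_q[w\mid h,a]=\sum_{y:\,y_{<t}=h,\,y_t=a} q(y)w(y)=P(Y_{<t}=h,Y_t=a)$. This gives the second equality in the displayed coefficient formula. It also shows that the sequence loss has coefficient $P(h,a)$ on the coordinate $(t,h,a)$. Expanding the token-weighted loss the same way gives coefficient $q(h,a)v_t(h,a)$ on the same coordinate.

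\textbf{Step 3: sufficiency and necessity.} Sufficiency is immediate: if $v_t(h,a)=P(h,a)/q(h,a)$, the coefficients agree and the two losses coincide.

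Necessity is the step that needs care. "Equal as a functional of the token log-probabilities" must be read as equality for every assignment of the values $-\log\pi_\theta(a\mid h)$. So choose the assignment that sets a single coordinate $(t,h,a)$ to $1$ and all others to $0$. This assignment need not come from a normalized policy; the functional is linear, and this is how we interpret the statement. Equality then forces $q(h,a)v_t(h,a)=P(h,a)$, hence the formula for $v_t$ wherever $q(h,a)>0$.

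The main subtlety is that the coordinates $(t,h)$ with different $t$ are distinct because prefix length determines $t$. The coordinates are therefore genuinely independent and no cross-$t$ aggregation occurs.

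\textbf{Step 4: the "consequently" clause.} For fixed $(t,h)$, collect the token-weighted terms as $\sum_a q(h,a)v_t(h,a)\{-\log\pi_\theta(a\mid h)\}$. By the same argument as Proposition~\ref{prop:weighted-sft-target}, this is a scaled cross-entropy whose minimizer over the conditional $\pi_\theta(\cdot\mid h)$ is proportional to $q(Y_t=a\mid h)v_t(h,a)$.

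This coincides with $P(Y_t=a\mid h)=P(h,a)/P(h)$ only when $v_t$ is proportional in $a$ to the ratio identified above. A two-token example suffices to show that generic weights fail: take constant $v_t$ with $P\neq q$.
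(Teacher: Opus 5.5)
Your proposal is correct and takes essentially the same route as the paper: expand both losses as linear functionals of the token log-probabilities $-\log\pi_\theta(a\mid h)$, identify the coefficients $P(Y_{<t}=h,Y_t=a)$ and $q(Y_{<t}=h,Y_t=a)\,v_t(h,a)$, match them coordinate by coordinate, and normalize per prefix for the local-target clause. Your treatment of necessity through free coordinate assignments makes precise the step the paper states as ``equal for all $\theta$ iff the coefficients agree''; coefficient matching is in fact also forced when only normalized policies are allowed, provided the vocabulary has at least two tokens.
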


\begin{proof}
Expand the sequence-level loss by collecting the coefficient of each
$-\log\pi_\theta(a\mid h)$ term:
\[
  \sum_{t,h,a}
  P(Y_{<t}=h,Y_t=a)\{-\log\pi_\theta(a\mid h)\}.
\]
The token-weighted loss has the corresponding expansion
\[
  \sum_{t,h,a}
  q(Y_{<t}=h,Y_t=a)v_t(h,a)\{-\log\pi_\theta(a\mid h)\}.
\]
The two objectives are equal for all $\theta$ if and only if the coefficients
of every token log-probability agree, which gives the displayed condition.
Normalizing the coefficients for a fixed prefix $h$ shows that a generic
token-weighted objective fits the local conditional target proportional to
$q(Y_t=a\mid h)v_t(h,a)$.
\end{proof}

\section{Experiment Protocol, Checkpoint Curves, and Auxiliary Checks}
\label{app:experiments}

The appendix evidence supports the two empirical claims made in the main text:
target-matched weighting changes the fitted policy under a fixed sampler, and
sampler refresh can move beyond a saturated one-shot objective. It therefore
reports the full checkpoint grids behind Tables~\ref{tab:main-results}
and~\ref{tab:checkpoint-summary}, plus a retention check after GSM8K
fine-tuning. SFT trains on ground-truth demonstrations, VAR is treated as a
demonstration-based weighted SFT method, GRPO is the online rollout baseline,
and Refit uses reference-policy rollouts with raw rewards as weights. \bolt{}
uses the same type of reference rollouts as Refit but weights them by the
empirical prompt-normalized Boltzmann weight
\(\exp(r/\beta)/\Zhat_N(x)\), so the Refit--\bolt{} contrast holds the sampler
fixed while changing the induced target. The resource and retention
measurements use Qwen3-8B \citep{qwen3technicalreport} with LoRA fine-tuning
\citep{hu2022lora}. The checkpoint curves add Qwen3-0.6B and Qwen3.5-9B runs;
within each sweep, methods are compared on the same recorded checkpoint grid
unless an entry is missing in the source results.

\subsection{Full Checkpoint Curves}
\label{app:checkpoint-curves}

The full checkpoint curves make the learning dynamics visible rather than only
the selected best checkpoints. They show whether a method improves through an
early peak, saturates on fixed reference rollouts, or continues to improve after
sampler refresh. Empty source tables for unreported code benchmarks are not
reproduced, because they would add table mass without empirical measurements.

\FloatBarrier

\begin{table}[t]
\centering
\small
\caption{GSM8K checkpoint curve for Qwen3-0.6B with $N=8$ rollouts per prompt.
``1 GPU'' denotes the separately reported single-GPU \bolt{} run.}
\label{tab:gsm8k-06b-curve}
\begin{tabular}{lccccc}
\toprule
Step & \bolt{} (1 GPU) & \bolt{} & Iter. \bolt{} & SFT & GRPO \\
\midrule
Base & 48.29 & 48.29 & 48.29 & 48.29 & 48.29 \\
1k & 51.93 & 50.80 & 50.80 & 48.37 & 49.13 \\
2k & 52.92 & 51.10 & 51.86 & 48.45 & 49.46 \\
3k & 51.86 & 52.99 & 52.69 & 48.67 & 50.73 \\
4k & 51.48 & 53.22 & 53.71 & 49.13 & 51.39 \\
5k & -- & 54.44 & 54.33 & 49.67 & 51.65 \\
6k & -- & 53.54 & 54.48 & 50.12 & 52.13 \\
7k & -- & 52.69 & 55.02 & 50.73 & 52.45 \\
8k & -- & 52.08 & \textbf{55.46} & 51.21 & 52.96 \\
9k & -- & 51.78 & 54.84 & 51.96 & 53.12 \\
10k & -- & -- & 54.24 & 52.37 & 53.17 \\
\bottomrule
\end{tabular}
\end{table}

\begin{table}[t]
\centering
\small
\caption{GSM8K checkpoint curve for Qwen3-8B with $N=8$ rollouts per prompt.}
\label{tab:gsm8k-8b-curve}
\begin{tabular}{lcccccc}
\toprule
Step & \bolt{} & Iter. \bolt{} & SFT & GRPO & VAR & Refit \\
\midrule
Base & 87.72 & 87.72 & 87.72 & 87.72 & 87.72 & 87.72 \\
1k & 87.92 & 87.92 & 87.63 & 87.36 & 87.25 & 87.72 \\
2k & 88.63 & 88.47 & 87.71 & 88.67 & 87.25 & 88.36 \\
3k & 89.01 & 89.26 & 87.89 & 88.93 & 87.96 & 88.45 \\
4k & 89.37 & 89.57 & 87.96 & 89.01 & 87.96 & 88.76 \\
8k & 89.74 & 89.96 & 88.09 & 89.36 & 87.96 & 88.76 \\
12k & 90.01 & 90.23 & 88.17 & 89.72 & 88.13 & 88.76 \\
16k & 90.37 & 90.76 & 88.26 & 89.72 & 88.13 & 89.23 \\
20k & \textbf{90.67} & 91.39 & 88.35 & 90.01 & 88.64 & 89.23 \\
24k & \textbf{90.67} & \textbf{91.69} & 88.40 & 90.01 & 88.64 & 89.23 \\
\bottomrule
\end{tabular}
\end{table}

\begin{table}[t]
\centering
\small
\caption{GSM8K checkpoint curve for Qwen3.5-9B with $N=8$ rollouts per prompt.}
\label{tab:gsm8k-9b-curve}
\begin{tabular}{lcccc}
\toprule
Step & \bolt{} & Iter. \bolt{} & SFT & GRPO \\
\midrule
Base & 92.03 & 92.03 & 92.03 & 92.03 \\
1k & 93.10 & 93.10 & 92.13 & 92.37 \\
2k & 93.25 & 93.25 & 92.13 & 92.68 \\
3k & 92.65 & 94.73 & 92.57 & 92.97 \\
4k & 92.57 & 94.96 & 92.46 & 93.13 \\
8k & 93.65 & 95.17 & 92.97 & 93.16 \\
12k & 94.65 & 95.63 & 92.87 & 93.37 \\
16k & \textbf{95.12} & \textbf{96.39} & 93.02 & 93.81 \\
20k & 94.10 & 94.94 & 93.13 & 93.01 \\
24k & 92.04 & 94.37 & 92.99 & 93.23 \\
\bottomrule
\end{tabular}
\end{table}

\begin{table}[t]
\centering
\small
\caption{HumanEval checkpoint curve for Qwen3.5-9B trained on AceCode-87K with
$N=16$ rollouts per prompt.}
\label{tab:humaneval-9b-curve}
\begin{tabular}{lcccc}
\toprule
Step & \bolt{} & Iter. \bolt{} & SFT & GRPO \\
\midrule
Base & 89.39 & 89.39 & 89.39 & 89.39 \\
1k & 89.39 & 89.39 & 89.39 & 89.87 \\
2k & 89.39 & 90.67 & 89.39 & 89.87 \\
3k & 90.67 & 90.93 & 89.39 & 89.87 \\
4k & 90.67 & 91.43 & 89.47 & 90.31 \\
8k & 91.73 & 92.39 & 89.47 & 90.31 \\
12k & 91.73 & 92.39 & 89.68 & 90.31 \\
16k & \textbf{92.39} & 93.67 & 89.68 & 90.75 \\
20k & \textbf{92.39} & 93.97 & 90.31 & 90.75 \\
24k & 91.96 & \textbf{94.13} & 89.28 & 90.39 \\
\bottomrule
\end{tabular}
\end{table}

\FloatBarrier

\subsection{Retention Check}
\label{app:retention}

The retention evaluation asks a narrower question than the main projection
tables. After GSM8K fine-tuning, it evaluates PIQA, HellaSwag, Winogrande, and
RACE-high to check whether methods trained on reference-policy rollouts stay
closer to the base distribution than demonstration-only baselines in the
reported setup. The table is not a complete forgetting benchmark: it uses
single-run values from the reported runs, and multi-seed variation, confidence
intervals, and exact hardware descriptions are not yet included.

\begin{table}[t]
\centering
\small
\caption{Reference-anchoring check after GSM8K fine-tuning. Parentheses
give change relative to the base model. Smaller drops indicate less forgetting
on this retention slice.}
\label{tab:forgetting}
\begin{tabular}{lcccc}
\toprule
Method & PIQA & HellaSwag & Winogrande & RACE-high \\
\midrule
Base & 71.22 & 81.80 & 65.19 & 79.25 \\
SFT & 70.35 (-0.87) & 80.56 (-1.24) & 63.14 (-2.05) & 84.33 (+5.08) \\
VAR & 70.26 (-0.96) & 80.73 (-1.07) & 63.24 (-1.95) & 82.13 (+2.88) \\
GRPO & 70.69 (-0.53) & 81.03 (-0.77) & 64.26 (-0.93) & 80.26 (+1.01) \\
Refit & 70.53 (-0.69) & \textbf{81.26 (-0.54)} & 64.01 (-1.18) & 80.11 (+0.86) \\
\bolt{} & \textbf{71.01 (-0.21)} & 81.13 (-0.67) & \textbf{64.94 (-0.25)} & 79.93 (+0.68) \\
\bottomrule
\end{tabular}
\end{table}
\FloatBarrier

SFT and VAR improve RACE-high but lose more on PIQA and Winogrande, suggesting
task transfer mixed with distributional overfit. Rollout-based methods stay
closer to the base model, consistent with a data source anchored to the
reference policy. In this setup, \bolt{} has the smallest drops on PIQA and
Winogrande and remains near the base model on HellaSwag and RACE-high. This
supports the narrow retention conclusion that reference rollouts can help anchor
the fine-tuning distribution. Retention remains an empirical property to
measure directly rather than a consequence inferred from the projection theory.

\end{document}